\def\shownotes{0}  %
\newcommand{\authnote}[2]{\noindent$^{\text{\fontfamily{cmtt}\em #1:}}\langle${\sf\small #2}$\rangle$}
\newcommand{\authnote}[2]{}
\newcommand{\approach}{Bellman-consistent\xspace }
\newcommand{\erron}{\mathrm{err}_{\mathrm{on}}}
\newcommand{\erroff}{\mathrm{err}_{\mathrm{off}}}
\title{\approach Pessimism for \\ Offline Reinforcement Learning}
\author{%
  Tengyang Xie \\
  UIUC\\
  \href{mailto:tx10@illinois.edu}{\texttt{tx10@illinois.edu}} \\
  \And
  Ching-An Cheng \\
  Microsoft Research\\
  \href{mailto:chinganc@microsoft.com}{\texttt{chinganc@microsoft.com}} \\
  \And
  Nan Jiang \\
  UIUC\\
  \href{mailto:nanjiang@illinois.edu}{\texttt{nanjiang@illinois.edu}} \\
  \And
  Paul Mineiro \\
  Microsoft Research\\
  \href{mailto:pmineiro@microsoft.com}{\texttt{pmineiro@microsoft.com}}
  \And
  Alekh Agarwal \\
  Google Research\\
 \href{mailto:alekhagarwal@google.com}{\texttt{alekhagarwal@google.com}} \\
}
\begin{document}

\maketitle

\begin{abstract}
The use of pessimism, when reasoning about %
datasets lacking exhaustive exploration, has recently gained prominence in offline reinforcement learning. %
Despite the robustness it adds to the algorithm, %
overly pessimistic reasoning can be equally damaging in precluding the discovery of good policies, which is an issue for the popular bonus-based pessimism. In this paper, we introduce the notion of \approach pessimism for general function approximation: %
instead of calculating a point-wise lower bound for the value function, we implement pessimism at the initial state over the set of functions consistent with %
the Bellman equations. %
Our theoretical guarantees only require Bellman closedness as standard in the exploratory setting, in which case bonus-based pessimism fails to provide guarantees. Even in the special case of linear function approximation where stronger expressivity assumptions hold, 
our result improves upon a recent bonus-based approach by $\mathcal{O}(d)$ in its sample complexity when the action space is finite and small. 
Remarkably, our algorithms automatically adapt to the best bias-variance tradeoff in the hindsight, whereas most prior approaches require tuning extra hyperparameters a priori.
\end{abstract}

\section{Introduction}
\label{sec:intro}
Using past experiences to learn improved behavior for future interactions is a critical capability for a Reinforcement Learning (RL) agent. 
However, robustly extrapolating knowledge from a historical dataset for sequential decision making is highly challenging, particularly in settings where function approximation is employed to generalize across related observations.
In this paper, we provide a systematic treatment of such scenarios with \emph{general} function approximation, and devise algorithms that can provably leverage an arbitrary historical dataset to discover the policy that obtains the largest guaranteed rewards, amongst all possible scenarios consistent with the dataset.

The problem of learning a good policy from historical datasets, typically called batch or offline RL, has a long history~\citep[see e.g.,][and references therein]{precup2000eligibility,antos2008learning,levine2020offline}. Many prior works~\citep[e.g.,][]{precup2000eligibility,antos2008learning,chen2019information} make the so-called \emph{coverage assumptions} on the dataset, requiring the dataset  to contain any possible state, action pair or trajectory with a lower bounded probability. These assumptions are evidently prohibitive in practice, particularly for problems with large state and/or action spaces. Furthermore, the methods developed under these assumptions routinely display unstable behaviors such as lack of convergence or error amplification, when coverage assumptions are violated~\citep{wang2020statistical,wang2021instabilities}. 

Driven by these instabilities, a growing body of recent literature has pursued a so-called \emph{best effort} style of guarantee instead. The key idea is to replace the stringent assumptions on the dataset  with a dataset-dependent performance bound, which gracefully degrades from guaranteeing a near-optimal policy under standard coverage assumptions to offering no improvement over the data collection policy in the most degenerate case. Algorithmically, these works all leverage the principle of \emph{pessimistic extrapolation} from offline data and aim to maximize the rewards the trained agent would obtain in the worst possible MDP that is consistent with the observed dataset.
These methods have been shown to be typically more robust to the violation of coverage assumptions in practice, and their theoretical guarantees often provide non-trivial conclusions in settings where the previous results did not apply.

Even though many such best-effort methods have now been developed, %
very few works provide a comprehensive theory for using generic function approximation, unlike the setting where the dataset satisfies the coverage assumptions~\citep{antos2008learning,munos2003error,szepesvari2005finite,munos2008finite,farahmand2010error,chen2019information,xie2020q}. For example,~\citep{kidambi2020morel} provides a partial theory  under the assumption of an uncertainty quantification oracle, which however is highly nontrivial to obtain for general function approximation. \citep{fujimoto2019off, kumar2020conservative} develop sound theoretical arguments in the tabular setting, which were only heuristically extended to the function approximation setting. The works that explicitly consider function approximation in their design either use an ad-hoc truncation of Bellman backups~\citep{liu2020provably} or strongly rely on particular parameterizations such as linear function approximation~\citep{jin2020pessimism}. In particular, \citep{liu2020provably} additionally requires the ability to approximate stationary distribution of the behavior policy, which is a challenging density estimation problem for complex state spaces and cannot be provably performed in the standard linear MDP setting (see Section~\ref{sec:constrdpi_linear}).

Our paper takes an important step in this direction. We provide a systematic way to encode pessimism compatible with an arbitrary function approximation class and MDP and give strong theoretical guarantees \emph{without} requiring any coverage assumptions on the dataset. 
Our first contribution is an information theoretic algorithm that returns a policy with a small regret to any comparator policy, for which coverage assumptions (approximately) hold with respect to the data collection policy. This regret bound is identical to what can be typically obtained when the coverage assumptions hold \emph{for all policies}~\citep{antos2008learning,chen2019information}. But our algorithm requires neither the coverage assumptions, nor additional assumptions such as reliable density estimation for the data generating distribution used by existing best-effort approaches~\citep{liu2020provably}. 
We furthermore instantiate these results in the special case of linear parameterization; under the linear MDP assumption, our sample complexity bound leads to a factor of $\Ocal(d)$ improvement for a $d$-dimensional linear MDP, compared with the best known result translated to our discounted setting~\citep{jin2020pessimism}, when the action set is small in size. 
In addition to the information theoretic algorithm, 
we also develop a computationally practical version of our algorithm using a Lagrangian relaxation combined with recent advances in soft policy iteration~\citep{even2009online,geist2019theory,agarwal2019theory}. 
We show that this algorithm can be executed efficiently by querying a (regularized) loss minimization oracle over the value function class, although it has slightly worse theoretical guarantees than the information theoretic version. Both our algorithms display an adaptive property in selecting the best possible form of a bias-variance decomposition, where most prior approaches had to commit to a particular point through their choice of hyperparameters (see the discussion following Theorem~\ref{thm:infothebd2}).  

\section{Preliminaries}
\paragraph{Markov Decision Processes} We consider dynamical systems modeled as Markov Decision Processes (MDPs). An MDP is specified by $(\Scal, \Acal, P, R, \gamma, s_0)$, where $\Scal$ is the state space, $\Acal$ is the action space, $P: \Scal\times\Acal \to \Delta(\Scal)$ is the transition function with $\Delta(\cdot)$ being the probability simplex, $R:\Scal\times\Acal\to[0, \Rmax]$ is the reward function, %
$\gamma \in [0, 1)$ is the discount factor, and $s_0$ is a deterministic initial state, which is without loss of generality. We assume the state and the action spaces are finite but can be arbitrarily large. %
A (stochastic) policy $\pi: \Scal\to\Delta(\Acal)$ specifies a decision-making strategy, and induces a random trajectory $s_0, a_0, r_0, s_1, a_1, r_1, \ldots$, where $a_t \sim \pi(\cdot|s_t)$, $r_t = R(s_t, a_t)$, $s_{t+1} \sim P(\cdot|s_t,a_t)$, $\forall t\ge 0$. We denote the expected discounted return of a policy $\pi$ as %
$J(\pi):= \E[\sum_{t=0}^\infty \gamma^t r_t | \pi]$, and the learning goal is to find the maximizer of this value: $\pi^\star \coloneqq \argmax_{\pi} J(\pi)$. A related concept is the \textit{policy-specific} $Q$-function, $Q^\pi: \Scal\times\Acal\to\RR$. $Q^\pi(s,a)$ is the discounted return when the trajectory starts with $(s,a)$ and all remaining actions are taken according to $\pi$. $Q^\pi$ is the unique fixed point of the (policy-specific) Bellman operator $\Tcal^\pi: \RR^{\Scal\times\Acal} \to \RR^{\Scal\times\Acal}$, defined as: 
\begin{equation*}
    \forall f, \quad(\Tcal^\pi f)(s,a) = R(s,a) + \gamma \E_{s'\sim P(\cdot|s,a)}[f(s',\pi)],
\end{equation*} 
where $f(s',\pi)$ is a shorthand for $\E_{a' \sim \pi(\cdot|s')}[f(s', a')]$. 

Another important concept is the notion of discounted state-action occupancy, $d_\pi \in \Delta(\Scal\times\Acal)$, 
defined as $d_\pi(s,a):= (1 - \gamma)\E[\sum_{t=0}^\infty \gamma^t \1[s_t = s, a_t=a]|\pi]$, which characterizes the states and actions visited by a policy $\pi$.

\paragraph{Offline RL} In the offline setting, the learner only has access to a pre-collected dataset and cannot directly interact with the environment. We assume the standard i.i.d.~data generation protocol in our theoretical derivations, that the offline dataset $\Dcal$ consists of $n$ i.i.d.~$(s,a,r,s')$ tuples generated as $(s,a)\sim \mu, r= R(s,a), s'\sim P(\cdot|s,a)$ for some  \textit{data distribution} $\mu$. %
We will also use $\E_{\mu}[\cdot]$ for taking expectation with respect to $\mu$.
We will frequently use the data-weighted 2-norm (squared) $\|f\|_{2, \mu}^2:= \E_\mu[f^2]$, and the definition extends when we replace $\mu$ with any other state-action distribution $\nu$. The empirical approximation of $\|f\|_{2, \mu}^2$ is $\|f\|_{2, \Dcal}^2 := \frac{1}{n}\sum_{(s,a,r,s') \in \Dcal} f(s,a)^2$.

\paragraph{Function Approximation} Function approximation is crucial to generalizing over large and complex state and action spaces. In this work, we search for a good policy in a policy class $\Pi\subset (\Scal\to\Delta(\Acal))$ with the help of a value-function class $\Fcal \subset (\Scal\times\Acal\to[0, \Vmax])$ to model $Q^\pi$, where $\Vmax = \Rmax/(1-\gamma)$. Such a combination is commonly found in approximate policy iteration and actor-critic algorithms~\citep[e.g.,][]{bertsekas1996neuro,konda2000actor}.
For most part of the paper we do not make any structural assumptions on $\Pi$ and $\Fcal$, making our approach and guarantees applicable to generic function approximators. For simplicity we will assume that these function classes are finite but exponentially large, and use log-cardinality to measure their statistical complexities in the generic results (Section~\ref{sec:constrdpi} and Section~\ref{sec:regulpi}). These guarantees easily extend to continuous function classes where log-cardinalities are replaced by the appropriate notions of covering numbers, which we demonstrate when we instantiate our results in the linear function approximation setting and work with continuous linear classes (Section~\ref{sec:constrdpi_linear}).

We now recall two standard expressivity assumptions on $\Fcal$~\cite[e.g.,][]{antos2008learning}. To our knowledge, no existing works on offline RL with insufficient data coverage have provided guarantees under these standard assumptions for general function approximation, and they often require stronger or tweaked assumptions (see Section~\ref{sec:intro}). 
\begin{assumption}[Realizability]
\label{asm:relz2}
For any $\pi \in \Pi$, we have
\begin{align}
\inf_{f \in \Fcal}\sup_{\text{admissible }\nu} \left\|f - \Tcal^{\pi} f\right\|_{2,\nu}^2 \leq \varepsilon_\Fcal,
\end{align}
where an admissible distribution $\nu$ means that $\nu \in \{d_{\pi'}: \pi' \in \Pi\}\cup \{\mu\}$.
\end{assumption}
Assumption~\ref{asm:relz2} requires that for every $\pi\in\Pi$, there exists $f \in \Fcal$ that well-approximates $Q^\pi$. This assumption is often called \textit{realizability}.\footnote{In the exploratory setting, realizability is usually stated in the form of $\inf_{f \in \Fcal} \left\|f - \Tcal^\pi f\right\|_{2,\mu}^2 \leq \varepsilon_\Fcal$ for any $\pi \in \Pi$. However, the exploratory setting also usually has data coverage assumptions in the form of $\sup_{\nu} \|\nicefrac{\nu}{\mu}\|_\infty \leq C$. Combining them together implies Assumption~\ref{asm:relz2}.} Technically this is asserted by requiring $f$ to have small Bellman error w.r.t.~$\Tcal^\pi$ under all possible \textit{admissible} distributions. As a sufficient condition, we have $\varepsilon_\Fcal =0$ if $Q^\pi \in \Fcal, \forall \pi\in\Pi$. 
\begin{assumption}[Completeness]
\label{asm:comp_restat}
For any $\pi \in \Pi$, we have
\begin{align}
\sup_{f \in \Fcal} \inf_{f' \in \Fcal} \left\|f' - \Tcal^{\pi} f \right\|_{2,\mu}^2 \leq \varepsilon_{\Fcal,\Fcal}.
\end{align}
\end{assumption} Assumption~\ref{asm:comp_restat} asserts that $\Fcal$ is approximately closed under $\Tcal^\pi$.\footnote{Sometimes completeness implies realizability, so the latter does not need to be assumed separately~\citep{chen2019information}. However, this often relies on $\mu$ being exploratory, which is not the case here. } %
Such an assumption is widely used in RL theory and can be only avoided in some rare cases~\citep{xie2021batch}, and the hardness of learning with realizability alone has been established in various settings (e.g.,~\citep{weisz2021exponential,zanette2021exponential}). We also emphasize that we only measure the violation of completeness under $\mu$ and do not need to reason about all admissible distributions. 

\paragraph{Distribution shift} 
A unique challenge in RL is that the learned policy may induce a state (and action) distribution that is different from the data distribution $\mu$, and the issue is particularly salient when we do not impose any coverage assumption on $\mu$. Therefore, it is important to carefully characterize the distribution shift, which we measure using the following definition, which generalizes prior definitions specific to linear function approximation~\citep{agarwal2019theory,duan2020minimax}:
\begin{definition}
\label{def:concenbddist}
    We define $\Cscr(\nu;\mu,\Fcal,\pi)$ as follows to measure the distribution shift from an arbitrary distribution $\nu$ to the data distribution $\mu$, w.r.t.~$\Fcal$ and $\pi$, %
\begin{align}
\label{eq:concenbddist}
\Cscr(\nu;\mu,\Fcal, \pi) \coloneqq \max_{f \in \Fcal}\frac{\|f - \Tcal^\pi f\|_{2,\nu}^2}{\|f - \Tcal^\pi f\|_{2,\mu}^2}.
\end{align}
\end{definition}
Intuitively, $\Cscr(\nu;\mu,\Fcal, \pi)$ measures how well Bellman errors under $\pi$ transfer between the distributions $\nu$ and $\mu$. For instance, a small value of $\Cscr(d_\pi; \mu, \Fcal,\pi)$ enables accurate policy evaluation for $\pi$ using data collected under $\mu$. More generally, we observe that  
\begin{equation}
    \Cscr(\nu;\mu,\Fcal, \pi) \leq \|\nu/\mu\|_\infty := \sup_{s,a} \frac{\nu(s,a)}{\mu(s,a)},\quad\mbox{for any $\pi$, $\Fcal$.}
\end{equation}
and the RHS is a classical notion of bounded distribution ratio for error transfer (e.g.,~\citep{munos2008finite,chen2019information,xie2020q}). Moreover, our measure can be tighter than $\|\nu/\mu\|_\infty$: Even two distributions $\nu$ and $\mu$ that are sufficiently disparate might admit a reasonable transfer, so long as this difference is not detected by $\pi$ and $\Fcal$. To this end, our definition %
better captures the crucial role of function approximation in generalizing across different states. As an example, in the special case of linear MDPs, full coverage under our definition (i.e., boundedness of $\Cscr$ \textit{for all admissible $\nu$}) can be implied from the standard coverage assumption for linear MDPs that considers the spectrum of the feature covariance matrix under $\mu$; see Section~\ref{sec:constrdpi_linear} for more details.

\section{Information-Theoretic Results with Bellman-consistent Pessimism}
\label{sec:constrdpi}

In this section, we provide our first theoretical result which is information-theoretic, in that it uses a computationally inefficient algorithm. The approach uses the offline dataset to first compute a lower bound on the value of each policy $\pi \in \Pi$, and then returns the policy with the highest pessimistic value estimate. While this high-level template is at the heart of many recent approaches~\citep[e.g.,][]{fujimoto2019off,kumar2019stabilizing,liu2020provably,kidambi2020morel,yu2020mopo,kumar2020conservative}, our main novelty is in the design and analysis of \emph{Bellman-consistent pessimism} for general function approximation.

For a policy $\pi$, we first form a \emph{version space} of all the functions $f\in\Fcal$ which have a small Bellman error under the evaluation operator $\Tcal^\pi$. We then return the predicted value of $\pi$ in the initial state $s_0$ by the functions in this version space. The use of pessimism at the initial state, while maintaining Bellman consistency (by virtue of having a small Bellman error) limits over pessimism, which is harder to preclude in the pointwise pessimistic penalties used in some other works~\citep{jin2020pessimism}. 
More formally, given a dataset $\Dcal$, let us define %
$$\Lcal(f',f, \pi;\Dcal) \coloneqq \frac{1}{n} \sum_{(s,a,r,s') \in \Dcal} \left(f'(s,a) - r - \gamma f(s',\pi) \right)^2,$$
and an empirical estimate of the Bellman error $\Ecal(f,\pi;\Dcal)$ is
\begin{equation}
\label{eq:defmsbope}
\begin{aligned}
&~ \Ecal(f,\pi;\Dcal) \coloneqq \Lcal(f,f,\pi;\Dcal) - \min_{f' \in \Fcal}\Lcal(f',f,\pi;\Dcal).
\end{aligned}
\end{equation}

\noindent\paragraph{Our algorithm.} With this notation, our information-theoretic approach finds a policy by optimizing:%
\begin{align}
\label{eq:infotheosol}
\pihat = \argmax_{\pi \in \Pi} \min_{f \in \Fcal_{\pi,\varepsilon}} f(s_0,\pi), \text{\quad where } \Fcal_{\pi, \varepsilon} = \{f \in \Fcal: \Ecal(f,\pi;\Dcal) \leq \varepsilon\},
\end{align}
In the formulation above, $\Fcal_{\pi, \varepsilon}$ is the version space of policy $\pi$. To better understand the intuition behind the estimator in Equation~\ref{eq:infotheosol}, let us define
\begin{equation}
f_{\pi,\min} \coloneqq \argmin_{f \in \Fcal_{\pi,\varepsilon}}f(s_0,\pi), ~~ f_{\pi,\max} \coloneqq \argmax_{f \in \Fcal_{\pi,\varepsilon}}f(s_0,\pi), ~~\mbox{and}~~\Delta f_{\pi}(s,a) \coloneqq f_{\pi,\max}(s,a) - f_{\pi,\min}(s,a).
    \label{eq:deltaf}
\end{equation}

Intuitively, if the parameter $\varepsilon$ is defined to ensure that $Q^\pi$ (or its best approximation in $\Fcal$) is in $\Fcal_{\pi, \varepsilon}$, we easily see that $\Delta f_\pi(s_0,\pi)$ is an upper bound on the error in our estimate of $J(\pi)$ for any $\pi\in\Pi$. In fact, an easy argument in our analysis shows that if $Q^\pi \in \Fcal_{\pi, \varepsilon}$ for all $\pi\in\Pi$, then $\Delta f(s_0,\pi)$ is an upper bound on the regret $J(\pi) - J(\pihat)$ of our estimator relative to any $\pi$ we wish to compete with. 

\noindent\paragraph{Theoretical analysis.} To leverage this observation, we first define the a critical threshold $\varepsilon_r$ which ensures that (the best approximation of) $Q^\pi$ is indeed contained in our version spaces for all $\pi$:

\begin{align}
\label{eq:def_varepsilon_r}
\varepsilon_r \coloneqq \frac{139 \Vmax^2 \log \frac{|\Fcal||\Pi|}{\delta}}{n} + 39 \varepsilon_{\Fcal}.
\end{align}
With this definition, we now give a more refined bound on the regret of our algorithm~\eqref{eq:infotheosol} by further splitting the error estimate $\Delta f_\pi(s_0,\pi)$ which is random owing to its dependence on the version space, and analyze it through a novel decomposition into on-support and off-support components. While we bound the on-support error using standard techniques, the off-support error is akin to a bias term which captures the interplay between the data collection distribution and function approximation in the quality of the final solution.
Also note that our choice of $\varepsilon_r$ requires the knowledge of $\varepsilon_\Fcal$, which is a common characteristic of version-space-based algorithms~\citep[e.g.,][]{jiang2017contextual}. %
The challenge of unknown $\varepsilon_{\Fcal}$ can be possibly addressed using model-selection techniques in practice and we leave further investigation to future work.

\begin{theorem}
\label{thm:infothebd2}
Let $\varepsilon = \varepsilon_r$ where is $\varepsilon_r$ defined in \Eqref{eq:def_varepsilon_r} and $\pihat$ be obtained by \Eqref{eq:infotheosol}. Then, for any policy $\pi \in \Pi$ and any constant $C_2 \ge 1$, with probability at least $1-\delta$, 
\begin{align}
J(\pi) - J(\pihat) \leq &~ \underbrace{\Ocal\left( \frac{\Vmax \sqrt{C_2}}{1 - \gamma} \sqrt{\frac{\log \frac{|\Fcal||\Pi|}{\delta}}{n}} + \frac{\sqrt{C_2 (\varepsilon_{\Fcal,\Fcal} + \varepsilon_\Fcal)}}{1 - \gamma} \right)}_{\text{$\erron(\pi)$: on-support error}} \\
&~ + \underbrace{\frac{1}{1 - \gamma}\cdot\,\min_{\nu: \Cscr(\nu;\mu,\Fcal,\pi) \leq C_2}   \sum_{(s,a) \in \Scal\times\Acal} (d_{\pi}\setminus\nu)(s,a) \left[\Delta f_{\pi}(s,a) - \gamma(\Pcal^\pi \Delta f_{\pi})(s,a)\right] }_{\text{$\erroff(\pi)$: off-support error}},
\end{align}
where $\Cscr(\nu;\mu,\Fcal,\pi)$ is defined in Definition~\ref{def:concenbddist}, %
$(d_{\pi}\setminus\nu)(s,a) \coloneqq \max(d_\pi(s,a) - \nu(s,a),0)$ and $(\Pcal^\pi f)(s,a) = \E_{s'\sim P(\cdot|s,a)}[f(s',\pi)]$ for any $f$.
\end{theorem}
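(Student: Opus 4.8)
The plan is to establish a high-probability "validity" event on which the version spaces $\Fcal_{\pi,\varepsilon_r}$ contain a good approximation of $Q^\pi$ for every $\pi\in\Pi$, then use the structure of the optimizer $\pihat$ to reduce the regret to a quantity controlled by $\Delta f_\pi(s_0,\pi)$, and finally split that quantity into the on-support and off-support pieces via a telescoping/performance-difference argument combined with the distribution-shift definition. First, I would invoke a uniform (over $f,f'\in\Fcal$, $\pi\in\Pi$) concentration bound relating the empirical loss $\Lcal(f',f,\pi;\Dcal)$ to its population counterpart $\|f'-\Tcal^\pi f\|_{2,\mu}^2$ (up to variance terms); a Bernstein/one-sided uniform-deviation argument, together with Assumptions~\ref{asm:relz2} and \ref{asm:comp_restat}, shows that the choice $\varepsilon_r = \frac{139\Vmax^2\log(|\Fcal||\Pi|/\delta)}{n} + 39\varepsilon_\Fcal$ guarantees, with probability $\ge 1-\delta$: (i) for each $\pi$, the best approximator $f_\pi := \argmin_{f\in\Fcal}\sup_{\nu}\|f-\Tcal^\pi f\|_{2,\nu}^2$ lies in $\Fcal_{\pi,\varepsilon_r}$, so the version space is nonempty and "anchored"; and (ii) every $f\in\Fcal_{\pi,\varepsilon_r}$ has population Bellman error $\|f-\Tcal^\pi f\|_{2,\mu}^2 = \Ocal(\varepsilon_r + \varepsilon_{\Fcal,\Fcal})$. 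The completeness constant $\varepsilon_{\Fcal,\Fcal}$ enters because $\min_{f'}\Lcal(f',f,\pi;\Dcal)$ is only an approximation of $\min_{f'}\|f'-\Tcal^\pi f\|_{2,\mu}^2$, which is itself at most $\varepsilon_{\Fcal,\Fcal}$ away from zero.

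Next, condition on this event. Because $f_\pi\in\Fcal_{\pi,\varepsilon_r}$ and $f_\pi\approx Q^\pi$ (realizability under all admissible $\nu$, in particular under $d_\pi$ evaluated at $s_0$ after one step of $\Tcal^\pi$), we get $f_{\pi,\min}(s_0,\pi) \le f_\pi(s_0,\pi) \le J(\pi) + (\text{small})$, giving a valid lower bound. Combined with $\pihat$ being the argmax of $\pi\mapsto f_{\pi,\min}(s_0,\pi)$, the standard pessimism chain yields
\[
J(\pi) - J(\pihat) \le \big[J(\pi) - f_{\pi,\min}(s_0,\pi)\big] + \big[f_{\pihat,\min}(s_0,\pihat) - J(\pihat)\big] + (\text{estimation slack}),
\]
where the second bracket is $\le 0$ up to the fact that $f_{\pihat,\min}$ has small Bellman error, which I would control via a telescoping identity: for any $g$ with small Bellman error w.r.t.\ $\Tcal^{\pihat}$, $|g(s_0,\pihat) - J(\pihat)| \le \frac{1}{1-\gamma}\|g-\Tcal^{\pihat}g\|_{1,d_{\pihat}}$, which by Cauchy--Schwarz is $\le \frac{1}{1-\gamma}\sqrt{\Cscr(d_{\pihat};\mu,\Fcal,\pihat)}\,\|g-\Tcal^{\pihat}g\|_{2,\mu}$; here however one wants to avoid paying $\Cscr$ for $\pihat$, so the cleaner route is to bound the second bracket by $0$ directly using $f_{\pihat,\max}\in\Fcal_{\pihat,\varepsilon_r}$ and the fact that $f_{\pihat,\min}\le f_{\pihat,\max}$, then absorb the gap into $\Delta f_{\pihat}$. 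The first bracket, for the comparator $\pi$, is the heart of the bound: write $J(\pi) - f_{\pi,\min}(s_0,\pi) \le f_{\pi,\max}(s_0,\pi) - f_{\pi,\min}(s_0,\pi) + (J(\pi) - f_{\pi,\max}(s_0,\pi))$, and since both $f_{\pi,\max},f_{\pi,\min}$ lie in the version space while $f_\pi$ does too, one reduces everything to bounding $\Delta f_\pi(s_0,\pi)$ plus a term of order $\frac{1}{1-\gamma}\sqrt{\varepsilon_r+\varepsilon_{\Fcal,\Fcal}}$.

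The final and most delicate step is the on-support/off-support decomposition of $\Delta f_\pi(s_0,\pi)$. Using the performance-difference-style telescoping $g(s_0,\pi) = \frac{1}{1-\gamma}\E_{(s,a)\sim d_\pi}[g(s,a) - \gamma(\Pcal^\pi g)(s,a)]$ applied to $g=\Delta f_\pi$ (valid because $\Delta f_\pi = (f_{\pi,\max} - \Tcal^\pi f_{\pi,\max}) - (f_{\pi,\min}-\Tcal^\pi f_{\pi,\min}) + \Tcal^\pi$-difference; more precisely one telescopes $\Delta f_\pi(s_0,\pi)$ directly as a discounted sum), I would write
\[
\Delta f_\pi(s_0,\pi) = \frac{1}{1-\gamma}\sum_{(s,a)} d_\pi(s,a)\big[\Delta f_\pi(s,a) - \gamma(\Pcal^\pi\Delta f_\pi)(s,a)\big],
\]
then for any $\nu$ with $\Cscr(\nu;\mu,\Fcal,\pi)\le C_2$, split $d_\pi = \min(d_\pi,\nu) + (d_\pi\setminus\nu)$. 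On the $\min(d_\pi,\nu)\le\nu$ part, the summand equals (a difference of) Bellman errors of $f_{\pi,\max},f_{\pi,\min}$ evaluated against $\nu$-mass; by Definition~\ref{def:concenbddist} this transfers to $\mu$ at cost $\sqrt{C_2}$, and since both functions are in $\Fcal_{\pi,\varepsilon_r}$ with $\mu$-Bellman error $\Ocal(\varepsilon_r+\varepsilon_{\Fcal,\Fcal})$, this part is exactly the on-support error $\erron(\pi)$ after Cauchy--Schwarz and substituting $\varepsilon_r$. The leftover $(d_\pi\setminus\nu)$ part is precisely $\erroff(\pi)$, and taking the $\min$ over admissible $\nu$ with $\Cscr(\nu;\mu,\Fcal,\pi)\le C_2$ is free since the argument holds for every such $\nu$. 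The main obstacle I anticipate is the bookkeeping that makes the telescoping identity apply cleanly to $\Delta f_\pi$ — one must be careful that $\Delta f_\pi(s,a) - \gamma(\Pcal^\pi\Delta f_\pi)(s,a)$ is genuinely a difference of the two functions' Bellman residuals (so that version-space membership bounds it under $\nu$), and that cross terms and the $r(s,a)$ terms cancel correctly; managing the constants so that they collapse into the stated $139$ and $39$ is the other tedious-but-routine piece.
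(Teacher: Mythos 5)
Your overall architecture is the paper's: (a) a concentration event guaranteeing that the realizability witness $f_\pi$ lies in $\Fcal_{\pi,\varepsilon_r}$ and that every member of the version space has $\mu$-Bellman error $\Ocal(\varepsilon_r+\varepsilon_{\Fcal,\Fcal})$ (these are exactly Theorems~\ref{thm:version_space} and~\ref{thm:mspo2be}); (b) the pessimism chain through the optimality of $\pihat$, reducing the regret to $\Delta f_\pi(s_0,\pi)$ plus realizability slack; (c) the telescoping identity $\Delta f_\pi(s_0,\pi)=\frac{1}{1-\gamma}\E_{d_\pi}[\Delta f_\pi-\gamma\Pcal^\pi\Delta f_\pi]$ followed by an on/off-support split of $d_\pi$ against a reference $\nu$ with $\Cscr(\nu;\mu,\Fcal,\pi)\le C_2$. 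Your split $d_\pi=\min(d_\pi,\nu)+(d_\pi\setminus\nu)$ is an equivalent (slightly cleaner) bookkeeping of the paper's three-term decomposition, and your observation that $\Delta f_\pi-\gamma\Pcal^\pi\Delta f_\pi$ is exactly the difference of the two Bellman residuals (so the reward cancels and version-space membership controls it under $\nu$, hence under $\mu$ at cost $\sqrt{C_2}$) is the right mechanism.

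The one step that fails as written is your treatment of the second bracket $f_{\pihat,\min}(s_0,\pihat)-J(\pihat)$. You correctly note that telescoping the Bellman error of $f_{\pihat,\min}$ itself over $d_{\pihat}$ would force you to pay $\Cscr(d_{\pihat};\mu,\Fcal,\pihat)$, which the theorem does not charge. But your proposed escape --- ``bound the second bracket by $0$ using $f_{\pihat,\min}\le f_{\pihat,\max}$ and absorb the gap into $\Delta f_{\pihat}$'' --- does not work: $f_{\pihat,\max}(s_0,\pihat)$ is only \emph{lower}-bounded relative to $J(\pihat)$, so the comparison gives no upper bound on $f_{\pihat,\min}(s_0,\pihat)-J(\pihat)$, and any bound involving $\Delta f_{\pihat}$ would not match the statement, whose off-support term involves only $\Delta f_\pi$ for the comparator. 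The correct (and simple) fix is the paper's Lemma~\ref{lem:vslbd}: since the witness $f_{\pihat}\in\Fcal_{\pihat,\varepsilon_r}$ on the concentration event, $f_{\pihat,\min}(s_0,\pihat)\le f_{\pihat}(s_0,\pihat)$, and because Assumption~\ref{asm:relz2} controls $\|f_{\pihat}-\Tcal^{\pihat}f_{\pihat}\|_{2,\nu}$ under \emph{all admissible} $\nu$ --- in particular $d_{\pihat}$ --- telescoping gives $f_{\pihat}(s_0,\pihat)\le J(\pihat)+\frac{\sqrt{\varepsilon_\Fcal}}{1-\gamma}$ with no distribution-shift factor. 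This is the same realizability-witness argument you already invoke for the first bracket via $f_{\pi,\max}(s_0,\pi)\ge f_\pi(s_0,\pi)\ge J(\pi)-\frac{\sqrt{\varepsilon_\Fcal}}{1-\gamma}$; applying it symmetrically to $\pihat$ closes the gap and the rest of your plan goes through.
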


\begin{wrapfigure}{r}{0.4\textwidth}
\vspace{-5mm}
\centering
\includegraphics[width=\linewidth]{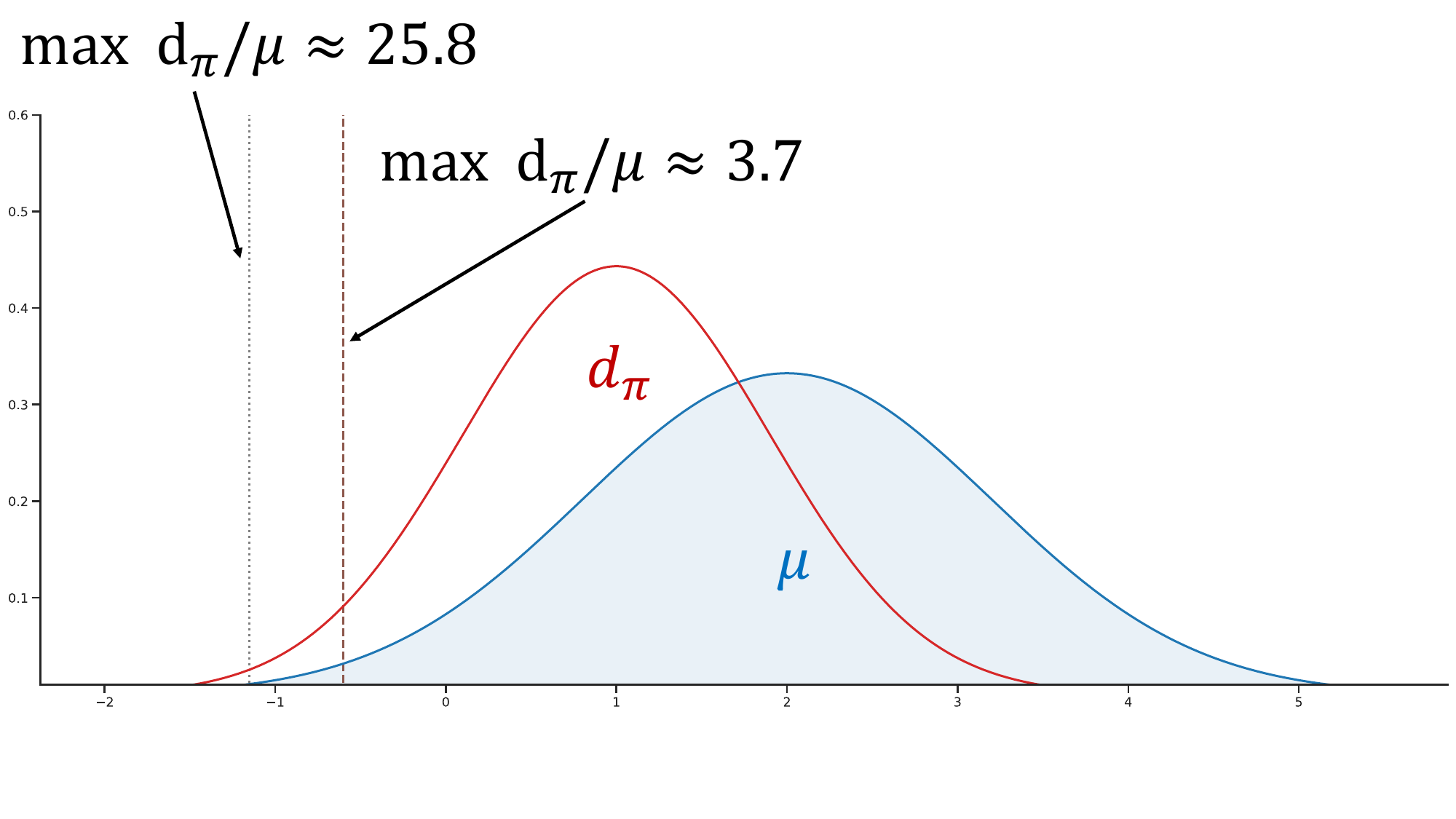}
\vspace{-7mm}
\caption{An example illustrating different on-support and off-support splittings (denoted by two different vertical lines). Different splitting has different $C_2$ values, and further yields different bias-variance trade-offs.}
\label{fig:split}
\end{wrapfigure}

\paragraph{Bias-variance decomposition.} Note that decomposition of our error bound into on-support and off-support parts effectively achieves a bias-variance tradeoff. A small value of the concentrability threshold $C_2$ requires the choice of the distribution $\nu$ closer to $\mu$, which results in better estimation error guarantee (which is $\Ocal(\sqrt{\nicefrac{C_2}{n}})$) %
when we transfer from $\mu$ to $\nu$, but potentially pays a high bias due to the mismatch between $d_\pi$ and $\nu$. A larger threshold permits more flexibility in choosing $\nu$ similar to $d_\pi$ for a smaller bias, but results in a larger variance and estimation error. %
Rather than commit to a particular tradeoff, our estimator automatically adapts to the best possible splitting (Figure~\ref{fig:split} illustrates this concept) by allowing us to choose the best threshold $C_2$. The on-support part matches the $n$ rate (fast rate error bound) of API or AVI analysis (e.g.,~\citep{pires2012statistical,lazaric2012finite,chen2019information}). The dependency on horizon is only linear and matches the best previous result with concentrability assumption~\citep{xie2020q}. For the off-support part, it depends on the off-support mass $d_\pi\setminus \nu$ and the ``quality'' of the off-support estimation: if all value functions in the version space are close to each other in the off-support region for policy $\pi$, the gap between $J(\pi)$ and $J(\pihat)$ can still be small even with a large off-support mass. The following corollary formally states this property. 
\begin{corollary}[``Double Robustness''] \label{cor:DR}
\label{cor:info_theo_dr}
Under conditions of Theorem~\ref{thm:infothebd2}, for any $\pi$ and $C_2 \ge 0$, $\erroff(\pi) = 0$ when either 
(1)  $\Cscr(d_\pi;\mu,\Fcal,\pi) \le C_2$, 
or, (2) $\Delta f_{\pi} - \gamma \Pcal^\pi \Delta f_{\pi} \equiv 0$. 
\end{corollary}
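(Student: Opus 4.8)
The plan is to read the corollary directly off the explicit form of the off-support error in Theorem~\ref{thm:infothebd2}. Abbreviating $g_{\pi} \coloneqq \Delta f_{\pi} - \gamma\,\Pcal^{\pi}\Delta f_{\pi}$, the quantity at issue is
\begin{equation*}
\erroff(\pi) \;=\; \frac{1}{1-\gamma}\,\min_{\nu\,:\,\Cscr(\nu;\mu,\Fcal,\pi)\le C_2}\sum_{(s,a)\in\Scal\times\Acal} (d_{\pi}\setminus\nu)(s,a)\, g_{\pi}(s,a),
\end{equation*}
and the structural fact I would exploit is that every summand is the \emph{product} of the two factors $(d_{\pi}\setminus\nu)(s,a)$ and $g_{\pi}(s,a)$, so either factor being zero kills the term. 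It therefore suffices, in each of the two cases, to exhibit a single feasible distribution $\nu$ for which all summands vanish.

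For case (1), I would take $\nu = d_{\pi}$. The hypothesis $\Cscr(d_{\pi};\mu,\Fcal,\pi)\le C_2$ is exactly the statement that this $\nu$ is feasible for the inner minimization, and then $(d_{\pi}\setminus d_{\pi})(s,a) = \max(d_{\pi}(s,a)-d_{\pi}(s,a),0)=0$ for every $(s,a)$, so the inner sum equals $0$; hence the minimum, and therefore $\erroff(\pi)$, is at most $0$. For case (2), the hypothesis $g_{\pi}\equiv 0$ makes $(d_{\pi}\setminus\nu)(s,a)\,g_{\pi}(s,a)=0$ for \emph{every} $(s,a)$ and \emph{every} $\nu$, so the inner objective is identically zero over the entire feasible set; it then only remains to observe that this set is non-empty, for which $\nu=\mu$ is a witness, since $\Cscr(\mu;\mu,\Fcal,\pi)\le 1\le C_2$ under the conditions of Theorem~\ref{thm:infothebd2}. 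Hence $\erroff(\pi)=0$.

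I do not anticipate a genuine obstacle: the corollary is essentially a parsing of the $\erroff$ formula. The only points that call for a moment's care are (i) checking non-emptiness of the minimization's feasible set in case (2), which the witness $\nu=\mu$ settles (using the convention $\Cscr(\mu;\mu,\Fcal,\pi)\le 1$ in any degenerate $0/0$ instance of Definition~\ref{def:concenbddist}), and (ii) the observation that in case (1) the witness $\nu=d_{\pi}$ certifies only $\erroff(\pi)\le 0$ rather than literal equality, which is immaterial because $\erroff$ enters the regret bound of Theorem~\ref{thm:infothebd2} additively, so replacing it by $0$ there is without loss. The content worth emphasizing is the ``doubly robust'' reading: the off-support error disappears whenever \emph{either} the comparator's occupancy $d_{\pi}$ transfers from $\mu$ under $(\Fcal,\pi)$, \emph{or} the version-space spread is Bellman-consistent off-support so that $\Delta f_{\pi}-\gamma\Pcal^{\pi}\Delta f_{\pi}$ vanishes identically.
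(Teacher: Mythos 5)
Your proposal is correct and matches the paper's (implicit) argument: the paper states this corollary without a separate proof precisely because it follows by inspection of the $\erroff(\pi)$ formula, which is exactly what you do — exhibiting the feasible witness $\nu=d_\pi$ in case (1) so that $(d_\pi\setminus d_\pi)\equiv 0$, and noting in case (2) that the integrand vanishes identically for any feasible $\nu$ (with $\nu=\mu$ certifying non-emptiness). Your two side remarks — that case (1) strictly yields $\erroff(\pi)\le 0$ since $\Delta f_\pi-\gamma\Pcal^\pi\Delta f_\pi$ need not be sign-definite pointwise, and that feasibility of $\nu=\mu$ uses $C_2\ge 1$ from the theorem's hypotheses rather than the corollary's restated $C_2\ge 0$ — are both accurate and harmless.
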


\paragraph{Adaptive guarantees by algorithm design.} As mentioned above, Theorem~\ref{thm:infothebd2} implicitly selects the best bias-variance decomposition through the best choice of $C_2$ in hindsight, with this decomposition being purely a proof technique, not an knob in the algorithm. In contrast, many prior approaches~\citep{liu2020provably,fujimoto2019off,kumar2019stabilizing} employ explicit thresholds to control density ratios in their algorithms, which makes the tradeoff a hyperparameter in their algorithms. Since choosing hyperparameters is particularly challenging in offline RL, where even policy evaluation can be unreliable, this novel axis of adaptivity is an extremely desirable property of our approach.

\noindent\paragraph{Comparison to guarantees in the exploratory setting.} %
When a dataset with full coverage is given, classical analyses provide near-optimality guarantees that compete with the optimal policy $\pi^\star$  with a polynomial sample complexity, when $\pi^\star\in \Pi$ and both realizability and completeness hold for $\Fcal$; see~\citep{antos2008learning} for a representative analysis. As mentioned earlier, such analysis often requires boundedness of $\|\nu/\mu\|_\infty$ \emph{for all admissible distributions $\nu \in \{d_\pi: \pi \in \Pi\}$}. On the other hand, it is easily seen that we can compete with $\pi^\star$ under much weaker conditions.
\begin{corollary}[Competing with optimal policy] \label{cor:opt}
Under conditions of Theorem~\ref{thm:infothebd2}, if
$\Cscr(d_{\pi^\star};\mu,\Fcal,\pi^\star) \le C_2$, we have 
\begin{equation}
J(\pi^\star) - J(\pihat) \leq ~ \Ocal\left( \frac{\Vmax \sqrt{C_2}}{1 - \gamma} \sqrt{\frac{\log \frac{|\Fcal||\Pi|}{\delta}}{n}} + \frac{\sqrt{C_2 (\varepsilon_{\Fcal,\Fcal} + \varepsilon_\Fcal)}}{1 - \gamma} \right).
\end{equation}
\end{corollary}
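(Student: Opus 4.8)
The plan is to derive Corollary~\ref{cor:opt} as an almost-immediate consequence of Theorem~\ref{thm:infothebd2} by instantiating the comparator policy $\pi$ with $\pi^\star$ and showing that the off-support term $\erroff(\pi^\star)$ vanishes under the stated hypothesis. First I would invoke Theorem~\ref{thm:infothebd2} with $\pi = \pi^\star$ and with the same constant $C_2 \ge 1$ appearing in the corollary's assumption $\Cscr(d_{\pi^\star};\mu,\Fcal,\pi^\star) \le C_2$. This gives the two-term on-support bound plus $\erroff(\pi^\star)$, where the latter is a minimum over distributions $\nu$ with $\Cscr(\nu;\mu,\Fcal,\pi^\star) \le C_2$ of $\tfrac{1}{1-\gamma}\sum_{s,a} (d_{\pi^\star}\setminus\nu)(s,a)[\Delta f_{\pi^\star}(s,a) - \gamma(\Pcal^{\pi^\star}\Delta f_{\pi^\star})(s,a)]$.

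The key step is then to note that $\nu = d_{\pi^\star}$ is itself a feasible choice in this minimization: by hypothesis $\Cscr(d_{\pi^\star};\mu,\Fcal,\pi^\star) \le C_2$, so $d_{\pi^\star}$ lies in the constraint set. For this choice we have $(d_{\pi^\star}\setminus d_{\pi^\star})(s,a) = \max(d_{\pi^\star}(s,a) - d_{\pi^\star}(s,a), 0) = 0$ for every $(s,a)$, so the entire summand is zero and hence the minimum is at most $0$. Combined with the observation that $\erroff$ is manifestly nonnegative (it is a nonnegative-weighted quantity, or at minimum the bound of Theorem~\ref{thm:infothebd2} remains valid when we drop a nonpositive term), we conclude $\erroff(\pi^\star) = 0$. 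This is exactly case (1) of Corollary~\ref{cor:info_theo_dr} specialized to $\pi^\star$, so one could alternatively cite that corollary directly. Dropping the vanishing off-support term from the bound in Theorem~\ref{thm:infothebd2} leaves precisely
\[
J(\pi^\star) - J(\pihat) \leq \Ocal\left( \frac{\Vmax \sqrt{C_2}}{1 - \gamma} \sqrt{\frac{\log \frac{|\Fcal||\Pi|}{\delta}}{n}} + \frac{\sqrt{C_2 (\varepsilon_{\Fcal,\Fcal} + \varepsilon_\Fcal)}}{1 - \gamma} \right),
\]
which is the claimed inequality. The requirement $\pi^\star \in \Pi$ is implicit: Theorem~\ref{thm:infothebd2} quantifies over $\pi \in \Pi$, so to take $\pi = \pi^\star$ we need $\pi^\star \in \Pi$ (or, more precisely, the realizability/completeness assumptions and the definition of $\varepsilon_r$ are stated relative to $\Pi$, and the argmax in~\eqref{eq:infotheosol} is over $\Pi$); this is the standard ``optimal policy is in the class'' premise and should be stated or recalled.

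The main obstacle — really the only subtlety — is making sure the off-support term is genuinely $0$ and not merely ``small,'' which hinges on the exact definition $(d_\pi\setminus\nu)(s,a) = \max(d_\pi(s,a)-\nu(s,a),0)$ forcing the summand to vanish identically when $\nu = d_\pi$, independent of the sign or magnitude of $\Delta f_{\pi^\star} - \gamma \Pcal^{\pi^\star}\Delta f_{\pi^\star}$. Since this holds termwise, there is no concern about cancellation or about $\Delta f_{\pi^\star}$ being poorly controlled off-support; the feasibility of $\nu = d_{\pi^\star}$ in the constraint set is the whole content. Thus the proof is essentially a one-line specialization, and I would present it as such, with a parenthetical pointer to Corollary~\ref{cor:info_theo_dr}(1) for readers who prefer to see it as a direct corollary of the double-robustness property.
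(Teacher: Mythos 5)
Your proof is correct and matches the paper's intended argument: the corollary is obtained by taking $\pi = \pi^\star$ in Theorem~\ref{thm:infothebd2} and noting that $\nu = d_{\pi^\star}$ is feasible in the off-support minimization (so that $(d_{\pi^\star}\setminus\nu)\equiv 0$ and $\erroff(\pi^\star)$ vanishes), which is exactly case (1) of the double-robustness Corollary~\ref{cor:DR}. Your handling of the sign issue (upper-bounding the minimum by its value at the feasible point $d_{\pi^\star}$, rather than asserting nonnegativity) is the right way to make the one-line specialization rigorous.
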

Notably, these milder coverage assumptions in Corollaries~\ref{cor:DR} and~\ref{cor:opt} provide offline RL counterparts to the benefits of  policy-gradient-style methods with \textit{online} access to the environment~\citep{kakade2002approximately, scherrer2014approximate,agarwal2019theory}. 

\paragraph{Comparison with~\citet{liu2020provably}.} The closest prior result to our work is that of~\citep{liu2020provably}, who develop a pessimistic estimator that truncates Bellman backups from state-action pairs infrequently visited by $\mu$, and analyzes the resulting pessimistic policy and value iteration algorithms under general function approximation. For truncating Bellman backups, however, their work requires estimating the state-action distribution of data, which can be challenging in high-dimensions and they incur additional errors from density estimation which we avoid. Further, their algorithms only compete with policies $\pi$ where $\|d_\pi/\mu\|_\infty$ is bounded instead of the more general result that we provide, and makes their results vacuous in a linear MDP setting under typical feature coverage assumptions.

\paragraph{Safe Policy Improvement.} Some prior works~\citep[e.g.][]{laroche2019safe,liu2020provably} discuss the scenario where the dataset $\Dcal$ is collected with a behavior policy $\pi_b$ with $\mu=d_{\pi_b}$, and demonstrate that their algorithms always return a policy competitive with $\pi_b$. In our setup, this is straightforward as $d_{\pi_b}$ is always covered, as shown next.
\begin{corollary}[Bounded degradation from behavior policy]
\label{cor:gdvs}
Under conditions of Theorem~\ref{thm:infothebd2}, if $\mu=d_{\pi_b}$ for some policy $\pi_b\in\Pi$, we have 
\begin{align}
J(\pi_b) - J(\pihat) \leq \Ocal\left( \frac{\Vmax}{1 - \gamma} \sqrt{\frac{\log \frac{|\Fcal||\Pi|}{\delta}}{n}} + \frac{\sqrt{\varepsilon_{\Fcal,\Fcal} + \varepsilon_\Fcal}}{1 - \gamma}\right).
\end{align}
\end{corollary}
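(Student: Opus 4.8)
The plan is to specialize Theorem~\ref{thm:infothebd2} to the comparator $\pi = \pi_b$ and choose the concentrability threshold $C_2 = 1$. The key observation is that when $\mu = d_{\pi_b}$, the data distribution is exactly the occupancy induced by $\pi_b$, so $d_{\pi_b}$ is trivially ``covered'' by $\mu$. First I would verify that $\Cscr(d_{\pi_b};\mu,\Fcal,\pi_b) \le 1$: by Definition~\ref{def:concenbddist} this ratio is $\max_{f\in\Fcal} \|f - \Tcal^{\pi_b} f\|_{2,d_{\pi_b}}^2 / \|f - \Tcal^{\pi_b} f\|_{2,\mu}^2$, and since $\mu = d_{\pi_b}$ the numerator and denominator are identical for every $f$, giving a ratio of exactly $1$ (with the usual convention $0/0 = 1$, or restricting to $f$ with nonzero denominator). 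Hence $C_2 = 1$ is an admissible choice in the minimization defining $\erroff(\pi_b)$.

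The second step is to kill the off-support term. With $C_2 = 1$ we are free to take $\nu = d_{\pi_b}$ itself in the minimum defining $\erroff(\pi_b)$, since this $\nu$ satisfies the constraint $\Cscr(\nu;\mu,\Fcal,\pi_b) \le C_2 = 1$. Then $(d_{\pi_b}\setminus\nu)(s,a) = \max(d_{\pi_b}(s,a) - d_{\pi_b}(s,a), 0) = 0$ for every $(s,a)$, so the entire sum vanishes and $\erroff(\pi_b) = 0$. (Equivalently, this is just part~(1) of Corollary~\ref{cor:DR} applied with $\pi = \pi_b$.)

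The final step is to read off the on-support term $\erron(\pi_b)$ from Theorem~\ref{thm:infothebd2} with $C_2 = 1$: the $\sqrt{C_2}$ factors become constants, yielding
\begin{align*}
J(\pi_b) - J(\pihat) \le \Ocal\left( \frac{\Vmax}{1-\gamma}\sqrt{\frac{\log\frac{|\Fcal||\Pi|}{\delta}}{n}} + \frac{\sqrt{\varepsilon_{\Fcal,\Fcal} + \varepsilon_\Fcal}}{1-\gamma} \right),
\end{align*}
which is exactly the claimed bound. There is essentially no obstacle here — the corollary is a direct instantiation, and the only point requiring a moment of care is the degenerate-ratio convention in $\Cscr$ when $\mu = d_{\pi_b}$ makes some Bellman errors vanish simultaneously in numerator and denominator; this is handled by noting that such $f$ contribute $0$ to the off-support sum anyway, so the argument via $\nu = d_{\pi_b}$ giving $(d_{\pi_b}\setminus\nu) \equiv 0$ sidesteps the issue entirely.
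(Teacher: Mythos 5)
Your proposal is correct and matches the paper's intended argument: the paper treats this corollary as a direct instantiation of Theorem~\ref{thm:infothebd2} with $\pi = \pi_b$, $C_2 = 1$, and $\nu = d_{\pi_b} = \mu$, which makes the off-support term vanish identically and leaves exactly the stated on-support bound. Your extra care about the degenerate $0/0$ ratio in $\Cscr$ is a harmless refinement that the paper does not bother to spell out.
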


\begin{proof}[Proof sketch of Theorem~\ref{thm:infothebd2}]
We now briefly describe the core ideas in the proof. More detailed arguments are deferred to the full proof in Appendix~\ref{sec:info_theo_proof}.

The key to prove Theorem~\ref{thm:infothebd2} is to translate the $J(\pi) - J(\pihat)$ to the Bellman error of value functions in the version space $\Fcal_{\pi,\varepsilon}$. Our main proving strategies are as follows:
\begin{enumerate}[leftmargin=*,topsep=1pt,parsep=1pt,itemsep=1pt,partopsep=1pt]
\item As the selection of $\varepsilon = \varepsilon_r$ ensures the accurate estimation $Q^\pi$ is contained in the version space $\Fcal_{\pihat,\varepsilon}$ for any $\pi$, we can obtain $J(\pi) - J(\pihat) \leq \max_{f \in \Fcal_{\pi,\varepsilon}}f(s_0,\pi) - \min_{f \in \Fcal_{\pihat,\varepsilon}}f(s_0,\pihat) + \text{approximation error}$.
\item By the optimality of $\pihat$, we have $\min_{f \in \Fcal_{\pihat,\varepsilon}}f(s_0,\pihat) \geq \min_{f \in \Fcal_{\pi,\varepsilon}}f(s_0,\pi)$. This indicates that $J(\pi) - J(\pihat) \leq \max_{f \in \Fcal_{\pi,\varepsilon}}f(s_0,\pi) - \min_{f \in \Fcal_{\pi,\varepsilon}}f(s_0,\pi) + \text{approximation error}$.
\item By using a standard telescoping argument (e.g.,~\citep[Lemma 1]{xie2020q}), $\max_{f \in \Fcal_{\pi,\varepsilon}}f(s_0,\pi) - \min_{f \in \Fcal_{\pi,\varepsilon}}f(s_0,\pi)$ can be upper bounded by the Bellman error of $\argmax_{f \in \Fcal_{\pi,\varepsilon}}f(s_0,\pi)$ and $\argmin_{f \in \Fcal_{\pi,\varepsilon}}f(s_0,\pi)$ over distribution $d_\pi$.
\end{enumerate}
After combining all the three steps above together and considering the distribution shift effect, we complete the proof.
\end{proof}

\subsection{Results for Linear Function Approximation}
\label{sec:constrdpi_linear}

Here we perform a case study in linear function approximation. %
We will show that our results---when instantiated under linear function approximation (with realizability and completeness assumptions)---automatically provides state-of-the-art guarantees, improving over existing results specialized to this setting by a factor of $\Ocal(d)$~\citep{jin2020pessimism} when the action space is finite and small. 

We recall the linear function approximation setup (we set $R_{\max} = 1$ and $V_{\max} = \tfrac{1}{1-\gamma}$ for consistency with literature).\footnote{\label{ft:errata} In an earlier version of the paper we used the greedy policy class and made a mistake in the proof; correcting the proof would yield an additional $|\Acal|$ dependence in the bound. Here we fixed this issue by changing to the softmax policy class and leveraging covering arguments and concentration bounds from \citet{zanette2021provable} and \citet{cheng2022adversarially}, respectively.}
\begin{definition}[Linear Function Approximation]
\label{def:linearmdp}
Let $\phi:\Scal \times \Acal \to \RR^d$ be a feature mapping. Without loss of generality, we assume $\|\phi(s,a)\|_2 \leq 1$, $\forall (s,a) \in \Scal \times \Acal$. We define the value-function class $\Fcal_\Phi$ as $\Fcal_\Phi \coloneqq \{\phi(\cdot,\cdot)^\T \theta: \theta \in \RR^d, \| \theta \|_2 \leq L_1, \phi(\cdot,\cdot)^\T \theta \in [0,\Vmax]\}$, and the policy class $\Pi_\Phi$ is the softmax policies of $\Pi_\Phi \coloneqq \{\phi(\cdot,\cdot)^\T \theta: \theta \in \RR^d, \| \theta \|_2 \leq L_2\}$.
\end{definition}
\begin{assumption}[Realizability and Completeness]
\label{asm:lin_real_comp}
$\varepsilon_{\Fcal,\Fcal} = \varepsilon_\Fcal = 0$.
\end{assumption}

Note that, when the feature mapping $\phi(\cdot,\cdot)$ is the one induced by the linear MDP \citep{jin2020provably}, it automatically ensures that $\Fcal_\Phi$ satisfies Assumptions~\ref{asm:lin_real_comp}. %
We also highlight that 
the standard linear function approximation or linear MDP setup does \textit{not} entail all the assumptions needed by~\citet{liu2020provably} as mentioned earlier. 
To be consistent with the above results of general function classes, below we will present the results using the upper bound $\Vmax$ on the functions in $\Fcal_\Phi$, 
where $\Vmax$ in such a bound can be replaced directly by using the value function radius $L_1$. %
In Appendix~\ref{sec:info_theo_proof_linear}, we show that the policy class $\Pi_\Phi$ contains a near-optimal policy as long as $L_2$ is moderately large.

Below is our main result in the linear function approximation setting. 

\newcommand{\piphi}{\pi_{\Phi}^\star}

\begin{theorem}
\label{thm:linear_bound}
Suppose the value-function class $\Fcal_\Phi$ is a linear function class that satisfies realizability and completeness (Definition~\ref{def:linearmdp} and Assumption~\ref{asm:lin_real_comp}) and $\pihat$ is the output of \Eqref{eq:infotheosol} using value-function class $\Fcal_\Phi$ and policy class $\Pi_\Phi$. If we choose $\varepsilon = (c \Vmax^2 d \log \frac{n L_1 L_2}{\Vmax \delta})/{n}$, then, for any policy $\pi \in \Pi_\Phi$, we have
\begin{align}
J(\pi) - J(\pihat) \leq \Ocal \left( \frac{\Vmax}{1 - \gamma} \sqrt{\frac{d \log \frac{n L_1 L_2}{\Vmax \delta}}{n}} \Ebb_{d_\pi} \left[  \sqrt{\phi(s,a)^\T \Sigma_\Dcal^{-1} \phi(s,a)} \right] \right),
\end{align}
where $c$ is an absolute constant, and $\Sigma_{\Dcal} \coloneqq \E_\Dcal \left[\phi(s,a) \phi(s,a)^\T \right]$.
\end{theorem}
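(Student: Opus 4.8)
The plan is to obtain Theorem~\ref{thm:linear_bound} by following the proof of Theorem~\ref{thm:infothebd2} up to the point just before its on-support/off-support split, and then bounding the remaining error term by a single linear-algebraic estimate rather than by a choice of $\nu$ and $C_2$. Recall that the proof of Theorem~\ref{thm:infothebd2}, after showing the version spaces are valid (i.e.\ contain $Q^\pi$ for every $\pi$), combines the telescoping identity $g(s_0,\pi)-J(\pi) = \tfrac{1}{1-\gamma}\Ebb_{d_\pi}[(g-\Tcal^\pi g)(s,a)]$ with the optimality of $\pihat$ to obtain
\begin{equation*}
J(\pi) - J(\pihat) \;\le\; \frac{1}{1-\gamma}\,\Ebb_{(s,a)\sim d_\pi}\!\left[\Delta f_\pi(s,a) - \gamma(\Pcal^\pi \Delta f_\pi)(s,a)\right] + (\text{approximation error}),
\end{equation*}
with $\Delta f_\pi$ as in \eqref{eq:deltaf}; under Assumption~\ref{asm:lin_real_comp} the approximation error vanishes. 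The two facts I would exploit are: (i) $\Delta f_\pi - \gamma\Pcal^\pi\Delta f_\pi = (f_{\pi,\max}-\Tcal^\pi f_{\pi,\max}) - (f_{\pi,\min}-\Tcal^\pi f_{\pi,\min})$ is a difference of Bellman residuals of two functions in the version space, hence ``small''; and (ii) by realizability and completeness of the linear class (Assumption~\ref{asm:lin_real_comp}; e.g.\ automatic when $\phi$ is a linear-MDP feature \citep{jin2020provably}) it equals a linear function $\phi(\cdot,\cdot)^\T\xi_\pi$ of the features. Cauchy--Schwarz in the $\Sigma_\Dcal$-geometry then converts the $d_\pi$-expectation into $\|\xi_\pi\|_{\Sigma_\Dcal}\cdot\Ebb_{d_\pi}[\sqrt{\phi^\T\Sigma_\Dcal^{-1}\phi}]$, and it remains to show $\|\xi_\pi\|_{\Sigma_\Dcal}^2 = \Ocal(\varepsilon)$ for the chosen $\varepsilon$.

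The concentration step is where the continuous classes must be handled with care. By the Pythagorean identity for least squares, the version-space constraint $\Ecal(f,\pi;\Dcal) = \Lcal(f,f,\pi;\Dcal) - \min_{f'\in\Fcal_\Phi}\Lcal(f',f,\pi;\Dcal) \le \varepsilon$ equals $\|f - \hat g_{f,\pi}\|_{2,\Dcal}^2 \le \varepsilon$, where $\hat g_{f,\pi}$ is the empirical least-squares fit of the target $r+\gamma f(s',\pi)$ (whose conditional mean is $\Tcal^\pi f$). I would establish, uniformly over $f\in\Fcal_\Phi$ and $\pi\in\Pi_\Phi$ and with probability $1-\delta$, the well-specified fast-rate guarantee $\|\hat g_{f,\pi}-\Tcal^\pi f\|_{2,\Dcal}^2 = \Ocal\big(\Vmax^2 d\log(nL_1L_2/(\Vmax\delta))/n\big)$, which is well-specified precisely because $\Tcal^\pi f\in\Fcal_\Phi$ under completeness. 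Uniformity is obtained by covering $\Fcal_\Phi$ and $\Pi_\Phi$ at resolution $\sim 1/n$, which costs $\exp(\Ocal(d\log(nL_1)))$ and $\exp(\Ocal(d\log(nL_2)))$ elements; crucially, for softmax $\pi$ the map from the policy parameter to $f(s',\pi)$ is Lipschitz, so discretizing the policy perturbs all relevant empirical quantities in a controlled way --- the covering lemmas of \citet{zanette2021provable} and the Bernstein-type deviation bounds of \citet{cheng2022adversarially} supply exactly these ingredients. Two consequences follow: choosing $\varepsilon$ with a large enough absolute constant $c$ makes $\Ecal(Q^\pi,\pi;\Dcal)\le\varepsilon$ for all $\pi$ (as $Q^\pi-\Tcal^\pi Q^\pi\equiv 0$), so the version spaces are valid; and every $f\in\Fcal_{\pi,\varepsilon}$ satisfies $\|f-\Tcal^\pi f\|_{2,\Dcal}\le\|f-\hat g_{f,\pi}\|_{2,\Dcal}+\|\hat g_{f,\pi}-\Tcal^\pi f\|_{2,\Dcal} = \Ocal(\sqrt\varepsilon)$.

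Applying the last bound to $f_{\pi,\max}$ and $f_{\pi,\min}$ and using the triangle inequality gives $\|\Delta f_\pi-\gamma\Pcal^\pi\Delta f_\pi\|_{2,\Dcal} = \Ocal(\sqrt\varepsilon)$; since by completeness $\Tcal^\pi f_{\pi,\max}$ and $\Tcal^\pi f_{\pi,\min}$ agree (on $\mathrm{supp}(\mu)\supseteq\Dcal$) with linear functions of $\phi$, we may write $\Delta f_\pi-\gamma\Pcal^\pi\Delta f_\pi = \phi(\cdot,\cdot)^\T\xi_\pi$, whence $\|\xi_\pi\|_{\Sigma_\Dcal}^2 = \Ebb_\Dcal[(\phi^\T\xi_\pi)^2] = \|\Delta f_\pi-\gamma\Pcal^\pi\Delta f_\pi\|_{2,\Dcal}^2 = \Ocal(\varepsilon)$. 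Then
\begin{equation*}
\Ebb_{d_\pi}\!\left[\phi(s,a)^\T\xi_\pi\right] \;\le\; \Ebb_{d_\pi}\!\left[\sqrt{\phi(s,a)^\T\Sigma_\Dcal^{-1}\phi(s,a)}\,\right]\cdot\sqrt{\xi_\pi^\T\Sigma_\Dcal\,\xi_\pi} \;=\; \Ocal(\sqrt\varepsilon)\cdot\Ebb_{d_\pi}\!\left[\sqrt{\phi^\T\Sigma_\Dcal^{-1}\phi}\,\right],
\end{equation*}
and plugging in $\sqrt\varepsilon = \Ocal\big(\Vmax\sqrt{d\log(nL_1L_2/(\Vmax\delta))/n}\big)$ together with the $\tfrac{1}{1-\gamma}$ prefactor yields the claimed bound. (If $\Sigma_\Dcal$ is singular, run the argument with $\Sigma_\Dcal+\tfrac1n I$; since $\|\xi_\pi\|=\Ocal(\Vmax\sqrt d)$ this perturbs $\|\xi_\pi\|_{\Sigma_\Dcal}^2$ by only $\Ocal(\varepsilon)$ and only tightens the right-hand side, so the stated inequality with $\Sigma_\Dcal^{-1}$ is implied.)

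I expect the main obstacle to be the uniform, variance-aware concentration of the second step: $\Ecal(f,\pi;\Dcal)$ contains an inner minimization over $\Fcal_\Phi$, its target depends on both $f$ and $\pi$, and one needs the fast $\Ocal(d/n)$ rate (not $\Ocal(d/\sqrt n)$) so that $\sqrt\varepsilon$ scales as $\sqrt{d/n}$ --- which is exactly why the softmax, rather than greedy, policy class is used, so the relevant empirical processes are Lipschitz in the policy parameter and the covering arguments go through. The remaining pieces --- the telescoping identity, the optimality-of-$\pihat$ step, and the Cauchy--Schwarz manipulation --- are inherited essentially verbatim from the proof of Theorem~\ref{thm:infothebd2} or are elementary.
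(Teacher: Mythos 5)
Your proposal is correct and follows essentially the same route as the paper's proof: validity of the version space via a fast-rate well-specified regression bound with $L_\infty$ covering of $\Fcal_\Phi$ and $\Pi_\Phi$, telescoping plus optimality of $\pihat$, writing the Bellman residual as $\phi^\T\xi_\pi$ via linear completeness, the empirical bound $\|\xi_\pi\|_{\Sigma_\Dcal}^2=\Ocal(\varepsilon)$, and Cauchy--Schwarz in the $\Sigma_\Dcal$-geometry. The only (immaterial) differences are that the paper bounds $f_{\pi,\max}(s_0,\pi)-J(\pi)$ and $J(\pi)-f_{\pi,\min}(s_0,\pi)$ separately with their own $\xi$ vectors rather than treating $\Delta f_\pi-\gamma\Pcal^\pi\Delta f_\pi$ at once, and your ridge-perturbation remark for singular $\Sigma_\Dcal$ is a point the paper leaves implicit.
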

The detailed proof of Theorem~\ref{thm:linear_bound} is provided in Appendix~\ref{sec:info_theo_proof_linear}. Our guarantee is structurally very similar to that of~\citet[Theorem 4.4]{jin2020pessimism}, except that we only need linear function approximation with realizability and completeness assumptions, and they consider the finite-horizon linear MDP setting. 
If we translate their result to the discounted setting by setting $H=\Ocal(1/(1-\gamma))$, we enjoy a net improvement of order $\Ocal(d)$ in sample complexity when the action space is finite and small. To make it concrete, that is $\Ocal(\sqrt{\nicefrac{d^2\log(dn/\delta)}{n}})$ vs.~$\Ocal(\sqrt{\nicefrac{d\log(n L_1 L_2/\Vmax \delta)}{n}})$ error bounds.
Careful readers may notice that \citet{jin2020pessimism} compete with $\pi^\star$ whereas we can only compete with the best policy in $\Pi_\Phi$, but this can be easily addressed without changing the order of our error bound: in Appendix~\ref{sec:info_theo_proof_linear} we show that  %
selecting a large policy radius (e.g., $L_2 = \nicefrac{L_1 \log|\Acal| \sqrt{n}}{\Vmax}$) ensures the existence of a good policy in $\Pi_\Phi$, denoted as $\piphi$, whose suboptimality is no larger than the bound in Theorem~\ref{thm:linear_bound}, and hence we can compete with $\pi^\star$ and the additive error %
will be absorbed by the big-Oh.\footnote{This introduces another subtle difference between our bound and \citet{jin2020pessimism} when both compete with $\pi^\star$: \citet{jin2020pessimism} requires coverage of $\pi^\star$, whereas we require coverage of $\piphi$.} %
Our bound also shows that having a full-rank $\Sigma_{\Dcal}$ (which is ensured by a full-rank covariance under $\mu$) is sufficient for consistent offline RL in linear function approximation. Crucially, the full-rank covariance is an easily checkable condition on data, as opposed to unverifiable concentrability assumptions. As a caveat, our results do not imply a computationally efficient algorithm, as a na\"ive implementation involves evaluating each policy pessimistically to pick the best. We discuss a computationally efficient adaptation of our approach in the next section.

\section{Practical Algorithm --- Regularized Offline Policy Optimization}
\label{sec:regulpi}

A major challenge using the proposed algorithm in Section~\ref{sec:constrdpi} in practice is that searching the policy with the best pessimistic evaluation over the policy space $\Pi$ is not computationally tractable. In this section, we present a practical algorithm that is computationally efficient assuming access to a (regularized) loss minimization oracle over the value function class $\Fcal$, and also comes with rigorous theoretical guarantees.

\begin{algorithm*}[th]
\caption{PSPI: Pessimistic Soft Policy Iteration}
\label{alg:reg_pi}
{\bfseries Input:} Batch data $\Dcal$, regularization coefficient $\lambda$.
\begin{algorithmic}[1]
\State Initialize policy $\pi_1$ as the uniform policy.
\For{$t = 1,2,\dotsc,T$}
\State Obtain the pessimistic estimation for $\pi_t$ as $f_t$, %
\label{lin:reg_pes_pe}
\begin{align}
\label{eq:reg_pes_pe}
f_t \leftarrow \argmin_{f \in \Fcal} \left(f(s_0,\pi_t) + \lambda \Ecal(f,\pi_t;\Dcal)\right),
\end{align}
where $\Ecal(f,\pi_t;\Dcal)$ is defined in \Eqref{eq:defmsbope}.
\State Calculate $\pi_{t + 1}$ by, %
\begin{align}
\label{eq:calculate_pi}
\pi_{t + 1}(a|s) \propto \pi_{t}(a|s) \exp \left( \eta f_t(s,a) \right),~\forall s,a \in \Scal \times \Acal.
\end{align}
\EndFor
\State Output $\pibar \coloneqq \unif(\pi_{[1:T]})$. \algocmt{uniformly mix $\pi_1, \ldots, \pi_T$ at the trajectory level}
\end{algorithmic}
\end{algorithm*}

Our practical algorithm is summarized in Algorithm~\ref{alg:reg_pi}. It has three key differences from the information-theoretic version in \Eqref{eq:infotheosol}:
\begin{enumerate}[leftmargin=*,topsep=1pt,parsep=1pt,itemsep=1pt,partopsep=1pt]
\item The pessimistic policy evaluation is now performed via regularization (Line~\ref{lin:reg_pes_pe}) instead of constrained optimization. 
\item Instead of searching over an explicit policy space $\Pi$, %
we search over a policy class implicitly induced from $\Fcal$ (defined in \Eqref{eq:policy-spi}) and therefore no longer have a policy class independent of $\Fcal$ separately, which is a common practice in API-style algorithms~\citep{munos2003error,antos2008learning, lazaric2012finite}.
\item We optimize the policy using mirror descent updates, which yields computationally tractable optimization over the implicit policy class. This property has been leveraged in many prior works, although typically in online RL settings~\citep{even2009online, agarwal2019theory,geist2019theory, cai2020provably, shani2020optimistic}. 
\end{enumerate} 

Note that the use of a specific policy class above can be relaxed 
if a stronger structural assumption is made on the MDP (e.g., linear MDPs~\citep{jin2020provably,jin2020pessimism}).

\subsection{Analysis of Algorithm~\ref{alg:reg_pi}}

We now provide the analysis of Algorithm~\ref{alg:reg_pi} in this section. For ease of presentation, we formally define the implicit policy class for this section:
\begin{equation}
\label{eq:policy-spi}
    \Pi_{\textrm{SPI}} := \{ \pi'(\cdot|s) \propto \exp(\eta \sum_{i=1}^t f^{(t)}(s, \cdot)): 1\le t \le T, f^{(1)}, \ldots, f^{(i)} \in \Fcal\},
\end{equation}
which is the natural policy class for soft policy-iteration approaches.
The following theorem describes the performance guarantee of $\pibar$.
\begin{theorem}
\label{thm:opterr_regpi}
Let $\lambda = \sqrt[3]{\nicefrac{\Vmax}{(1 - \gamma)^2\varepsilon_r^2}}$ with $\varepsilon_r$ in \Eqref{eq:def_varepsilon_r}, $\eta = \sqrt{\frac{\log|\Acal|}{2 \Vmax^2 T}}$, and $\pibar$ be obtained from Algorithm~\ref{alg:reg_pi}. For any policy $\pi:\Scal\to\Delta(\Acal)$ we wish to compete with, suppose Assumptions~\ref{asm:relz2} and~\ref{asm:comp_restat} hold with respect to the policy class $\Pi_{\textrm{SPI}}\cup\{\pi\}$. Then, for any constant $C_{2} \geq 1$, we have with probability at least $1-\delta$, 
\begin{align}
&~ J(\pi) - J(\pibar) 
\\
\leq &~ \underbrace{\Ocal\left( \sqrt{C_{2}} \left(\frac{\sqrt{\varepsilon_{\Fcal,\Fcal} + \varepsilon_\Fcal}}{1 - \gamma} + \frac{\Vmax}{1 - \gamma}  \sqrt[3]{\frac{T\log \frac{|\Fcal|}{\delta}}{n}} + \sqrt[3]{\frac{\Vmax \varepsilon_\Fcal}{(1 - \gamma)^2}} \right)  \right)}_{\text{$\erron(\pi)$: on-support error}} + \underbrace{\Ocal\left(\frac{\Vmax}{1 - \gamma}\sqrt{\frac{\log|\Acal|}{T}}\right)}_{\text{optimization error}}
\\
&~ + \underbrace{\frac{1}{T} \sum_{t = 1}^{T} \left(\min_{\nu:\Cscr(\nu;\mu,\Fcal,\pi_t) \leq C_{2}} \left|\sum_{(s,a) \in \Scal\times\Acal} \frac{(d_{\pi}\setminus\nu)(s,a) \left[f_t(s,a) - (\Tcal^{\pi_t} f_t)(s,a)\right]}{1 - \gamma} \right| \right)}_{\text{$\erroff(\pi)$: off-support error}},
\end{align}
where $\Cscr(\nu;\mu,\Fcal,\pi_t)$ is defined in Definition~\ref{def:concenbddist}, $(d_{\pi}\setminus\nu)(s,a) \coloneqq \max(d_\pi(s,a) - \nu(s,a),0)$.%
\end{theorem}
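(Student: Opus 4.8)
The plan is to merge a mirror-descent (soft policy iteration) regret analysis with the extended performance-difference identity, and then invoke the pessimistic regularizer in \eqref{eq:reg_pes_pe} to bound the resulting on-policy Bellman-error terms without any coverage assumption. First I would reduce to a per-iterate statement: since $\pibar$ mixes $\pi_1,\dots,\pi_T$ at the trajectory level, $J(\pi)-J(\pibar)=\frac1T\sum_{t=1}^T\big(J(\pi)-J(\pi_t)\big)$, and the performance difference lemma gives $J(\pi)-J(\pi_t)=\frac{1}{1-\gamma}\,\E_{s\sim d_\pi}\langle\pi(\cdot|s)-\pi_t(\cdot|s),\,Q^{\pi_t}(s,\cdot)\rangle$. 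I would then insert the learned critic by writing $\langle\pi-\pi_t,Q^{\pi_t}\rangle=\langle\pi-\pi_t,f_t\rangle+\langle\pi-\pi_t,Q^{\pi_t}-f_t\rangle$. Averaged over $t$ and over $d_\pi$, the first term is exactly the regret of the update \eqref{eq:calculate_pi}; with $\|f_t\|_\infty\le\Vmax$ and the stated $\eta$ the standard mirror-descent bound gives $\frac1T\sum_t\langle\pi(\cdot|s)-\pi_t(\cdot|s),f_t(s,\cdot)\rangle\le\Vmax\sqrt{2\log|\Acal|/T}$ for every $s$, which after the $1/(1-\gamma)$ factor is the optimization-error term. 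For the second term I would apply the telescoping identity $f(s_0,\pi')-J(\pi')=\frac{1}{1-\gamma}\E_{d_{\pi'}}[f-\Tcal^{\pi'}f]$ twice (once with $\pi'=\pi_t$ on the $V^{\pi_t}-f_t(\cdot,\pi_t)$ part, once with $\pi'=\pi$) to arrive at the exact identity
\[
J(\pi)-J(\pibar)=(\text{optimization error})+\frac{1}{(1-\gamma)T}\sum_{t=1}^{T}\Big(\E_{d_{\pi_t}}[f_t-\Tcal^{\pi_t}f_t]-\E_{d_\pi}[f_t-\Tcal^{\pi_t}f_t]\Big),
\]
so everything reduces to controlling two Bellman-error averages of the critics $f_t$.

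The on-policy term equals $f_t(s_0,\pi_t)-J(\pi_t)$ and is where pessimism pays off. By the optimality of $f_t$ in \eqref{eq:reg_pes_pe} tested against the realizability witness $f_{\pi_t}\in\Fcal$ of $Q^{\pi_t}$, together with $f_t(s_0,\pi_t)\ge 0$ and $\Ecal(f_t,\pi_t;\Dcal)\ge 0$, one gets $f_t(s_0,\pi_t)\le f_{\pi_t}(s_0,\pi_t)+\lambda\,\Ecal(f_{\pi_t},\pi_t;\Dcal)\le J(\pi_t)+\tfrac{\sqrt{\varepsilon_\Fcal}}{1-\gamma}+\lambda\,\Ecal(f_{\pi_t},\pi_t;\Dcal)$, the last step using that $d_{\pi_t}$ is admissible ($\pi_t\in\Pi_{\textrm{SPI}}$, which is precisely why Assumptions~\ref{asm:relz2}--\ref{asm:comp_restat} are imposed on $\Pi_{\textrm{SPI}}\cup\{\pi\}$). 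A uniform-convergence bound for $\Ecal(\cdot,\cdot;\Dcal)$ over $\Fcal\times\Pi_{\textrm{SPI}}$ --- with Assumption~\ref{asm:comp_restat} used to handle the inner $\min_{f'}$ in \eqref{eq:defmsbope} --- shows $\Ecal(f_{\pi_t},\pi_t;\Dcal)=\Ocal(\varepsilon_r)$, and run in the reverse direction it also gives $\|f_t-\Tcal^{\pi_t}f_t\|_{2,\mu}^2=\Ocal\!\big(\Ecal(f_t,\pi_t;\Dcal)+\varepsilon_{\Fcal,\Fcal}+\tfrac{\Vmax^2\log(|\Fcal||\Pi_{\textrm{SPI}}|/\delta)}{n}\big)$; a second, coarser use of optimality (comparing only objective values) yields $\Ecal(f_t,\pi_t;\Dcal)\le\tfrac{\Vmax}{\lambda}+\Ecal(f_{\pi_t},\pi_t;\Dcal)$. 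For the off-policy term, for each $t$ and each $\nu$ with $\Cscr(\nu;\mu,\Fcal,\pi_t)\le C_2$ I would split $d_\pi=\min(d_\pi,\nu)+(d_\pi\setminus\nu)$; Cauchy--Schwarz together with $\sum_{s,a}\min(d_\pi,\nu)\le 1$, $\min(d_\pi,\nu)\le\nu$, and Definition~\ref{def:concenbddist} give $\big|\sum_{s,a}\min(d_\pi,\nu)(f_t-\Tcal^{\pi_t}f_t)\big|\le\|f_t-\Tcal^{\pi_t}f_t\|_{2,\nu}\le\sqrt{C_2}\,\|f_t-\Tcal^{\pi_t}f_t\|_{2,\mu}$, while the $(d_\pi\setminus\nu)$ piece, divided by $1-\gamma$ and minimized over $\nu$, is exactly the $\erroff(\pi)$ summand (the minimization over $\nu$ is valid because the first piece does not depend on it).

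Finally I would set $\lambda=\sqrt[3]{\Vmax/((1-\gamma)^2\varepsilon_r^2)}$, which makes the over-pessimism bias $\lambda\,\Ecal(f_{\pi_t},\pi_t;\Dcal)=\Ocal(\lambda\varepsilon_r)$ and the critic-accuracy contribution $\tfrac{1}{1-\gamma}\sqrt{\Vmax/\lambda}$ both of order $\big(\Vmax\varepsilon_r/(1-\gamma)^2\big)^{1/3}$; expanding $\varepsilon_r$ with $\log|\Pi_{\textrm{SPI}}|=\Ocal(T\log|\Fcal|)$ turns this into $\Ocal\big(\tfrac{\Vmax}{1-\gamma}\sqrt[3]{T\log(|\Fcal|/\delta)/n}+\sqrt[3]{\Vmax\varepsilon_\Fcal/(1-\gamma)^2}\big)$, and collecting the leftover $\tfrac{\sqrt{\varepsilon_{\Fcal,\Fcal}+\varepsilon_\Fcal}}{1-\gamma}$ and pulling out the common $\sqrt{C_2}$ reproduces $\erron(\pi)$; a single union bound over the finitely many high-probability events closes the argument. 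I expect the pessimism step to be the main obstacle: one has to control $f_t(s_0,\pi_t)-J(\pi_t)$ and $\|f_t-\Tcal^{\pi_t}f_t\|_{2,\mu}$ at the same time, which forces using the $Q^{\pi_t}$-witness as the comparator (so that realizability and completeness must both be invoked to make its empirical loss $\Ecal(f_{\pi_t},\pi_t;\Dcal)$ small), and the resulting $\lambda$-versus-$1/\lambda$ tension is exactly what degrades the estimation rate from $\sqrt{\cdot}$ to $\sqrt[3]{\cdot}$; a secondary technical point is that $\Pi_{\textrm{SPI}}$ is data-dependent, so the uniform-convergence argument must range over all policy sequences in $\bigcup_{t\le T}\Fcal^{t}$, giving the $T\log|\Fcal|$ complexity.
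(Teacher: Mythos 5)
Your proposal is correct and follows essentially the same route as the paper: the three-term identity you derive (mirror-descent regret under $d_\pi$, plus $\E_{d_{\pi_t}}[f_t-\Tcal^{\pi_t}f_t]=f_t(s_0,\pi_t)-J(\pi_t)$ controlled by pessimism against the realizability witness $f_{\pi_t}$, plus $\E_{d_\pi}[f_t-\Tcal^{\pi_t}f_t]$ split into on- and off-support pieces) is exactly the decomposition the paper obtains, only packaged there through an auxiliary MDP $\Mcal_t$ in which $f_t=Q^{\pi_t}_{\Mcal_t}$. The remaining ingredients --- the coarse optimality bound $\Ecal(f_t,\pi_t;\Dcal)\le\varepsilon_r+\Vmax/\lambda$ converted to $\|f_t-\Tcal^{\pi_t}f_t\|_{2,\mu}$ via the concentration theorem, the $|\Fcal|^T$ complexity of $\Pi_{\textrm{SPI}}$, and the choice of $\lambda$ balancing $\lambda\varepsilon_r$ against $\sqrt{\Vmax/\lambda}/(1-\gamma)$ --- also match the paper's proof.
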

We provide a proof sketch of Theorem~\ref{thm:opterr_regpi} at the end of this section, and defer the full proof to Appendix~\ref{sec:prac_algo_proof}. We now make a few remarks about the results in Theorem~\ref{thm:opterr_regpi}.
\paragraph{Measurement of distribution shift effect.}
Compared with the information-theoretical result (provided in Theorem~\ref{thm:infothebd2}), the measurement of distribution shift in Theorem~\ref{thm:opterr_regpi} depends on the optimization trajectory. That is, it measures the distance between two distribution $\nu$ and $\mu$ by $\Cscr(\nu;\mu,\Fcal,\pi_t)$ ($\pi_{[1:T]}$ is the sequence of policies produced by the algorithm) whereas Theorem~\ref{thm:infothebd2} uses $\Cscr(\nu;\mu,\Fcal,\pi)$ ($\pi$ is the baseline policy we compete with). We remark that both of these two measurements are weaker then traditional density-ratio definitions (e.g.,~\citep{munos2008finite,chen2019information,xie2020q}) as we demonstrated before, as the dependence of $\Cscr$ on $\pi$ is relatively secondary.

\paragraph{Dependence on $T$.} The number of optimization rounds $T$ affects the bound in two  opposite ways: as $T$ increases, the optimization error term decreases, whereas the second term of the on-support error increases. The latter increase is due to the complexity of the implicit policy class $\Pi$ growing exponentially with $T$, which affects our concentration bounds. To optimize the bound, the optimal choice is $T = \Ocal(n^{2/5})$, leading to an overall $\Ocal(n^{-1/5})$ rate. While such a rate is relatively slow, we remark that the complexity bound of $\Pi$ is conservative, and in certain cases it is possible to obtain much sharper bounds: for example, in linear function approximation (Section~\ref{sec:constrdpi_linear}), %
$\Pi_{\textrm{SPI}}$ are a priori captured by the space of softmax policies, whose complexity has no dependence on $T$ (up to mild logarithmic dependence due to norms). That is, the $\erron(\pi)$ term in Theorem \ref{thm:opterr_regpi} reduces to $\widetilde\Ocal(\frac{\Vmax}{1 - \gamma}  \sqrt[3]{{d}/{n}})$ ($\varepsilon_{\Fcal,\Fcal} = \varepsilon_\Fcal = 0$ in linear function approximation), and yields an overall $\Ocal(n^{-1/3})$ rate.

\paragraph{Bias-variance decomposition.}
Similar to Theorem~\ref{thm:infothebd2}, Theorem~\ref{thm:opterr_regpi} also allows arbitrary decomposition of the error bound into on-support and off-support components by setting the concentrability threshold $C_2$, which serves as a bias-variance tradeoff as before. In fact, the splitting can be done separately for each $\pi_t$ in $1\le t \le T$ and we omit such flexibility for readability.  %
The optimization error does not depend on the splitting. Our performance guarantee naturally adapts to the best possible decomposition as before. %
As in Theorem~\ref{thm:infothebd2}, if the estimation on the off-support region is ``high-quality'', we can further simplify the performance guarantees, but the requirement of ``high-quality'' is different from that of Corollary~\ref{cor:info_theo_dr}.  We make it formal in the following corollary.
\begin{corollary}[``Double Robustness''] \label{cor:DR-prac}
For any $\pi$ and $C_2 \ge 0$, $\erroff(\pi) = 0$ when either 
(1)  $\Cscr(d_\pi;\mu,\Fcal,\pi_t) \le C_2$ for all $t \in [T]$, 
or, (2) $f_{t} - \Tcal^{\pi_t} \Delta f_{t} \equiv 0$ for all $t \in [T]$. 
\end{corollary}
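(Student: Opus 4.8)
The plan is purely structural, reading off the form of $\erroff(\pi)$ in Theorem~\ref{thm:opterr_regpi}. That quantity is a nonnegative average over $t\in[T]$ of the terms
\[
\min_{\nu:\Cscr(\nu;\mu,\Fcal,\pi_t)\le C_2}\left|\sum_{(s,a)}\frac{(d_\pi\setminus\nu)(s,a)\bigl[f_t(s,a)-(\Tcal^{\pi_t}f_t)(s,a)\bigr]}{1-\gamma}\right| ,
\]
so it suffices, in each of the two cases, to exhibit a single choice that makes the inner expression vanish for every $t$; the average is then $0$, and $\erroff(\pi)=0$.

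In case (1) I would take $\nu=d_\pi$ for every $t$. The hypothesis $\Cscr(d_\pi;\mu,\Fcal,\pi_t)\le C_2$ says exactly that this $\nu$ is feasible for the inner minimization, and with it $(d_\pi\setminus\nu)(s,a)=\max(d_\pi(s,a)-d_\pi(s,a),0)=0$ identically, so the sum, its absolute value, and hence the minimum over $\nu$ are all $0$. In case (2), $f_t-\Tcal^{\pi_t}f_t$ vanishes pointwise for every $t$, so the bracketed Bellman-error factor is identically zero and the sum is $0$ for \emph{any} feasible $\nu$ and any $C_2$; the minimum is again $0$. Either way every summand is zero.

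The only point that needs a moment's care---and it is hardly an obstacle---is that the inner minimization is over a nonempty feasible set so that $\min$ is well defined: in case (1) this is immediate since $d_\pi$ itself is feasible, and more generally one may always use $\nu=\mu$ (for which $\Cscr(\mu;\mu,\Fcal,\pi_t)=1$) once $C_2\ge 1$, which is the operative regime of the theorem. It is also worth remarking that imposing a single common threshold $C_2$ across all $t$ only weakens the bound relative to the per-$t$ splitting mentioned after Theorem~\ref{thm:opterr_regpi}, so no generality is lost. Beyond this bookkeeping, the corollary follows directly from the definitions.
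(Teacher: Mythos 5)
Your proof is correct and matches the (implicit) argument the paper intends: the paper offers no separate proof of this corollary, treating it as immediate from the form of $\erroff(\pi)$ in Theorem~\ref{thm:opterr_regpi}, exactly as you do by choosing $\nu = d_\pi$ in case (1) and observing that the Bellman-error factor $f_t - \Tcal^{\pi_t} f_t$ vanishes identically in case (2). Your reading of condition (2) as $f_t - \Tcal^{\pi_t} f_t \equiv 0$ (the ``$\Delta f_t$'' in the statement appears to be a typo carried over from Corollary~\ref{cor:DR}) is the right one, and your remark about nonemptiness of the feasible set for the inner minimization is a fair bookkeeping point that the paper glosses over.
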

We note that the conditions above depend on the optimization trajectory through their dependence on $\pi_t$, but can be made algorithm-independent by instead asserting the stronger requirement that $\Cscr(d_\pi,\mu,\Fcal,\pi') \leq C_2$ for all $\pi'\in\Pi_{\textrm{SPI}}$ in the first condition. 

\paragraph{Competing with the optimal policy.} As before, we can provide a guarantee for competing with the optimal policy, under coverage assumptions weaker than the typical batch RL literature, albeit slightly stronger than those of Corollary~\ref{cor:opt}. We state the formal result below.
\begin{corollary}[Competing with optimal policy] \label{cor:opt-prac}
Under conditions of Theorem~\ref{thm:opterr_regpi}, if
$\Cscr(d_{\pi^\star};\mu,\Fcal,\pi) \le C_2$ for all $\pi\in\Pi_{\textrm{SPI}}$, we have 
\begin{equation}
J(\pi^\star) - J(\pihat) \leq ~ \Ocal\left( \frac{\Vmax \sqrt{C_2}}{1 - \gamma} \left(\frac{\log \frac{|\Fcal|}{\delta}\log|\Acal|}{n}\right)^{1/5} + \frac{\sqrt{C_2 (\varepsilon_{\Fcal,\Fcal} + \varepsilon_\Fcal)}}{1 - \gamma} \right).
\end{equation}
\end{corollary}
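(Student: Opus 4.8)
The plan is to obtain Corollary~\ref{cor:opt-prac} as a direct specialization of Theorem~\ref{thm:opterr_regpi} to the comparator $\pi = \pi^\star$, followed by an optimization over the number of rounds $T$. First I would instantiate Theorem~\ref{thm:opterr_regpi} with $\pi = \pi^\star$; the conditions required there (in particular Assumptions~\ref{asm:relz2} and~\ref{asm:comp_restat} over $\Pi_{\textrm{SPI}}\cup\{\pi^\star\}$) are exactly those assumed in the corollary, so the theorem yields
\[
J(\pi^\star) - J(\pibar) \le \erron(\pi^\star) + \Ocal\!\left(\tfrac{\Vmax}{1-\gamma}\sqrt{\tfrac{\log|\Acal|}{T}}\right) + \erroff(\pi^\star),
\]
where $\pibar$ is the output of Algorithm~\ref{alg:reg_pi} (written $\pihat$ in the corollary statement).

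Next I would argue the off-support term vanishes. Since the comparator occupancy is $d_{\pi^\star}$, in the inner minimization over $\nu$ defining $\erroff(\pi^\star)$ I choose $\nu = d_{\pi^\star}$ for every $t$. The feasibility constraint $\Cscr(d_{\pi^\star};\mu,\Fcal,\pi_t)\le C_2$ holds because $\pi_t\in\Pi_{\textrm{SPI}}$ and the hypothesis of the corollary asserts $\Cscr(d_{\pi^\star};\mu,\Fcal,\pi)\le C_2$ for \emph{all} $\pi\in\Pi_{\textrm{SPI}}$. With this choice $(d_{\pi^\star}\setminus d_{\pi^\star})(s,a) = \max(d_{\pi^\star}(s,a)-d_{\pi^\star}(s,a),0)=0$ for all $(s,a)$, so each summand, and hence $\erroff(\pi^\star)$, is zero; this is precisely condition (1) of Corollary~\ref{cor:DR-prac}.

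Then I would collect the remaining terms. The misspecification-dependent parts of $\erron(\pi^\star)$ are gathered into $\frac{\sqrt{C_2(\varepsilon_{\Fcal,\Fcal}+\varepsilon_\Fcal)}}{1-\gamma}$; the residual cube-root term $\sqrt[3]{\Vmax \varepsilon_\Fcal/(1-\gamma)^2}$ is of lower order (it vanishes in the realizable case $\varepsilon_\Fcal=0$ relevant to, e.g., linear function approximation) and can be absorbed into the big-Oh. The two genuinely $T$-dependent pieces are the on-support concentration term $\sqrt{C_2}\frac{\Vmax}{1-\gamma}\sqrt[3]{\frac{T\log(|\Fcal|/\delta)}{n}}$, which grows as $T^{1/3}$, and the optimization error $\frac{\Vmax}{1-\gamma}\sqrt{\frac{\log|\Acal|}{T}}$, which decays as $T^{-1/2}$.

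Finally, balancing these, the minimizer over $T$ of $a T^{1/3}+bT^{-1/2}$ is $T^\star = \Theta((b/a)^{6/5})$, attaining value $\Ocal(a^{3/5}b^{2/5})$. Substituting $a = \sqrt{C_2}\frac{\Vmax}{1-\gamma}(\log(|\Fcal|/\delta)/n)^{1/3}$ and $b = \frac{\Vmax}{1-\gamma}(\log|\Acal|)^{1/2}$ gives $\Ocal\!\left(C_2^{3/10}\frac{\Vmax}{1-\gamma}\big(\frac{\log(|\Fcal|/\delta)\log|\Acal|}{n}\big)^{1/5}\right)$, and since $C_2\ge 1$ implies $C_2^{3/10}\le\sqrt{C_2}$, this is bounded by the stated rate (ignoring logarithmic and $C_2$ factors, $T^\star=\Theta(n^{2/5})$, matching the discussion following Theorem~\ref{thm:opterr_regpi}). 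I expect this balancing computation to be the main piece of actual work; the only subtleties to watch are ensuring the exponents combine to exactly $1/5$ and that the $C_2$ exponent produced by balancing ($3/10$) is cleanly relaxed to $1/2$ via $C_2\ge1$. The vanishing of $\erroff(\pi^\star)$ and the reduction to $\erron$ plus optimization error are immediate given the corollary's coverage hypothesis.
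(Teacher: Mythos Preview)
Your proposal is correct and follows essentially the same route the paper takes: the paper does not give a separate proof of Corollary~\ref{cor:opt-prac} but treats it as an immediate consequence of Theorem~\ref{thm:opterr_regpi}, together with Corollary~\ref{cor:DR-prac} (condition (1)) to zero out $\erroff(\pi^\star)$ and the $T=\Ocal(n^{2/5})$ balancing noted in the discussion after Theorem~\ref{thm:opterr_regpi}. Your explicit balancing computation $a^{3/5}b^{2/5}$ and the relaxation $C_2^{3/10}\le C_2^{1/2}$ are the only details the paper leaves implicit, and you have them right.
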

Note that the conditions of Corollary~\ref{cor:opt-prac} are satisfied as before whenever $\|d_{\pi^\star}/\mu\|_\infty \leq C_2$.

\paragraph{Computationally-efficient implementation with linear function approximation}
We remark that our algorithm is computationally efficient when the value-function class $\Fcal$ is linear, that is, $\Fcal \coloneqq \{\phi(\cdot,\cdot)^\T \theta: \theta \in \RR^d\}$. %
In this case, the objective of \Eqref{eq:reg_pes_pe} has a closed-form expression which is quadratic in $\theta$. In addition, under additional matrix invertibility conditions, \Eqref{eq:reg_pes_pe} has a closed-form solution which generalizes  LSTDQ~\citep{lagoudakis2003least,sutton2009fast, dann2014policy}. A similar connection has been made by \citet{antos2008learning}, but our derivation is more general. See Appendix Appendix~\ref{sec:linear_implementation} for further details. %

We conclude the section with a proof sketch showing the key insights used in establishing the proof.
\begin{proof}[\textbf{Proof sketch of Theorem~\ref{thm:opterr_regpi}}]
Our proof constructs a corresponding MDP $\Mcal_t$ for every $f_t, \pi_t$ pair. Each $\Mcal_t$ has the same dynamics as the ground-truth MDP, but chooses a different reward function, such that $f_t$ is the $Q$-function of $\pi_t$ in $\Mcal_t$, $Q^{\pi_t}_{\Mcal_t}$ (we use the subscript of $\Mcal_t$ to denote the corresponding value or operator in MDP $\Mcal_t$). Our proof relies on some key properties of $\Mcal_t$, such as $Q^\pi - \Tcal^\pi_{\Mcal_t} Q^\pi = f_t - \Tcal^{\pi_t}f_t$. We decompose $J(\pi) - J(\bar\pi)$ as follows.
\begin{align}
J(\pi) - J(\bar\pi) \leq &~ \underbrace{\frac{1}{T} \sum_{t = 1}^{T} \left(J_{\Mcal_t}(\pi) - J_{\Mcal_t}(\pi_t) \right)}_{\text{optimization error}} + \underbrace{\frac{1}{T} \sum_{t = 1}^{T} \left(J(\pi) - J_{\Mcal_t}(\pi) \right)}_{\text{controlled by } \|Q^\pi - \Tcal^\pi_{\Mcal_t} Q^\pi\|_{2,d_\pi} = \|f_t - \Tcal^{\pi_t}f_t\|_{2,d_\pi}}
\\
&~ + \text{approximation/statistical errors}.
\end{align}
The proof is completed by bounding $\|f_t - \Tcal^{\pi_t}f_t\|_{2,d_\pi}$ on both on-support and off-support regions.
\end{proof}
\section{Conclusions}
\label{sec:conclusion}

This paper investigates sample-efficient offline reinforcement learning without data coverage assumptions (e.g., concentrability). To achieve that goal, our paper contributes several crucial improvements to the literature. We introduce the concept of Bellman-consistent pessimism. It enables the sample-efficient guarantees with only the Bellman-completeness assumption which is standard in the exploratory setting, whereas the point-wise/bonus-based pessimism popularly adopted in the literature usually requires stronger and/or extra assumptions. Algorithmically, we demonstrate how to implicitly infer a policy value lower bound through a version space and provide a tractable implementation. A particularly important aspect of our results is the ability to adapt to the best bias-variance tradeoff in the hindsight, which no prior algorithms achieve to the best of our knowledge. When applying our results in linear function approximation, we attain an $\Ocal(d)$ improvement in sample complexity, compared with the best-known recent work of offline RL in linear MDPs, whenever the action space is finite and small.

As of limitations and future work, the sample complexity of our practical algorithm is worse than that of the information-theoretic approach, and it will be interesting to close this gap. Another future direction is to empirically evaluate PSPI on benchmarks and compare it to existing approaches. 
\section*{Acknowledgment}
Part of this work was carried out while TX and AA worked at Microsoft Research. NJ acknowledges funding support from the ARL Cooperative Agreement W911NF-17-2-0196, NSF IIS-2112471, and Adobe Data Science Research Award. The authors also thank Jinglin Chen for pointing out a mistake in the conference version of the paper (see Footnote~\ref{ft:errata}).

\bibliographystyle{plainnat}
\bibliography{ref}

\clearpage
\allowdisplaybreaks

\appendix
\onecolumn

\begin{center}
{\LARGE Appendix}
\end{center}

\section{Estimating Mean-Squared Bellman Error}
\label{sec:plceval}

We provide theoretical properties of $\Ecal(f,\pi;\Dcal)$ (defined in \Eqref{eq:defmsbope}), when using it to bound Bellman error.
Over this section, we use $\mu\times(\Pcal,R)$ to denote the joint distribution of $(s,a,r,s')$.

All of our results in the appendix strongly depends one following two constants 
\begin{enumerate}[leftmargin=*,topsep=1pt,parsep=1pt,itemsep=1pt,partopsep=1pt]
\item $\varepsilon_r$ --- For any $\pi \in \Pi$, if $f_\pi$ is the ``best estimation'' of $Q^\pi$ in $\Fcal$ (formal definition of $f_\pi$: \Eqref{eq:deff_pi} for general function approximation, $Q^\pi$ for Linear function approximation), then $\Ecal(f_\pi,\pi;\Dcal) \leq \varepsilon_r$.
\item $\varepsilon_b$ --- For any function $f \in \Fcal$ and any $\pi \in \Pi$, if $\Ecal(f,\pi;\Dcal) \leq \varepsilon_r$, then $\| f - \Tcal^\pi f\|_{2,\mu}^2 \leq \varepsilon_b$.
\end{enumerate}
The detailed discussion about $\varepsilon_r$ and $\varepsilon_b$ is provided in Appendix \ref{sec:PEresults_generalFA} (general function approximation) and and Appendix \ref{sec:PEresults_linearMDP} (linear Linear function approximation).

\subsection{Results for General Function Approximation}
\label{sec:PEresults_generalFA}

This section summarizes the results regarding $\Ecal(f,\pi;\Dcal)$, we defer the full proof of Theorem~\ref{thm:version_space} and Theorem~\ref{thm:mspo2be} to Appendix~\ref{appx:peproofs}.

\begin{theorem}
\label{thm:version_space}
For any $\pi \in \Pi$, let $f_\pi$ be defined as follows,
\begin{align}
\label{eq:deff_pi}
f_\pi \coloneqq &~ \argmin_{f \in \Fcal}\sup_{\text{admissible }\nu} \left\|f - \Tcal^\pi f\right\|_{2,\nu}^2.
\end{align}
Then, for $\Ecal(f_\pi,\pi;\Dcal)$ (defined in \Eqref{eq:defmsbope}), we have
\begin{align}
\label{eq:upbdespr}
\Ecal(f_\pi,\pi;\Dcal) \leq \frac{139 \Vmax^2 \log \frac{|\Fcal||\Pi|}{\delta}}{n} + 39 \varepsilon_{\Fcal} \eqqcolon \varepsilon_r.
\end{align}
\end{theorem}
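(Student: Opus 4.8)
\textbf{Proof proposal for Theorem~\ref{thm:version_space}.}

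The plan is to bound $\Ecal(f_\pi,\pi;\Dcal) = \Lcal(f_\pi,f_\pi,\pi;\Dcal) - \min_{f'\in\Fcal}\Lcal(f',f_\pi,\pi;\Dcal)$ by relating both empirical quantities to their population counterparts. First I would fix $\pi\in\Pi$ and let $g := \Tcal^\pi f_\pi$ (the population regression target) together with $f' := \argmin_{f'\in\Fcal}\Lcal(f',f_\pi,\pi;\Dcal)$. The key algebraic identity is the standard ``squared-loss shift'': for any $h:\Scal\times\Acal\to\RR$,
\begin{equation*}
\Lcal(h,f_\pi,\pi;\Dcal) - \Lcal(g,f_\pi,\pi;\Dcal) = \|h-g\|_{2,\Dcal}^2 + \frac{2}{n}\sum_{(s,a,r,s')\in\Dcal}\bigl(h(s,a)-g(s,a)\bigr)\bigl(g(s,a) - r - \gamma f_\pi(s',\pi)\bigr),
\end{equation*}
where the cross term is a mean-zero empirical process (conditioned on $(s,a)$, the inner factor $r+\gamma f_\pi(s',\pi)$ has conditional mean $g(s,a)=(\Tcal^\pi f_\pi)(s,a)$). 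Applying this with $h = f_\pi$ and with $h = f'$ and subtracting gives
\begin{equation*}
\Ecal(f_\pi,\pi;\Dcal) = \|f_\pi - g\|_{2,\Dcal}^2 - \|f' - g\|_{2,\Dcal}^2 + (\text{cross terms on } f_\pi - f').
\end{equation*}
Now $\|f_\pi-g\|_{2,\Dcal}^2$ concentrates around $\|f_\pi - \Tcal^\pi f_\pi\|_{2,\mu}^2$, which is at most $\varepsilon_\Fcal$ by Assumption~\ref{asm:relz2} (since $\mu$ itself need not be admissible, but $f_\pi$ is the minimizer of the sup over admissible $\nu$; one has to be slightly careful here — the cleanest route is to note $\|f_\pi-\Tcal^\pi f_\pi\|_{2,\mu}^2 \le \sup_{\text{adm }\nu}\|f_\pi - \Tcal^\pi f_\pi\|_{2,\nu}^2 \le \varepsilon_\Fcal$ only if $\mu$ is admissible, which it may not be; the correct statement needed is that the \emph{population} version of $\min_{f'}\Lcal$ picks up exactly this small bias, so I would instead bound $\Ecal$ directly via its population analogue $\|f_\pi - \Tcal^\pi f_\pi\|_{2,\mu}^2 - \inf_{f'\in\Fcal}\|f'-\Tcal^\pi f_\pi\|_{2,\mu}^2 \le \|f_\pi-\Tcal^\pi f_\pi\|_{2,\mu}^2$, and then separately argue this is $\le \varepsilon_\Fcal$ using the footnote's observation or by absorbing it — this is the point requiring the most care). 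The negative term $-\|f'-g\|_{2,\Dcal}^2$ only helps, and $-\|f'-\Tcal^\pi f_\pi\|_{2,\mu}^2 \le 0$ as well.

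The main work is then a uniform (over $f'\in\Fcal$ and $\pi\in\Pi$, so a union bound contributes $\log\frac{|\Fcal||\Pi|}{\delta}$) concentration argument controlling (i) the deviation of the empirical squared loss from its mean, and (ii) the empirical-process cross term. Both are handled by Bernstein's inequality: the variance of each per-sample term in the cross term and in $\|h-g\|_{2,\Dcal}^2 - \|h-g\|_{2,\mu}^2$ is bounded by $\Vmax^2$ times the corresponding mean (functions are in $[0,\Vmax]$, so $|h-g|\le \Vmax$ and $|r+\gamma f_\pi(s',\pi) - g|\le \Vmax$), which lets one ``self-bound'': a term of the form $|X - \E X| \le \frac12 \E X + c\Vmax^2\frac{\log(1/\delta)}{n}$, and then absorb the $\frac12\E X$ back. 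Iterating this self-bounding across the two deviation terms and the cross term is exactly what produces the explicit constants $139$ and $39$ — this is routine but tedious bookkeeping that I would not grind through in a sketch. The structure is identical to the discounted-MSBE analyses in, e.g., \citet{xie2020q} and \citet{chen2019information}, adapted to the version-space quantity $\Ecal$ rather than a direct Bellman-error estimate.

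The hard part, as flagged above, is the handling of the realizability/approximation error: Assumption~\ref{asm:relz2} controls $\|f_\pi - \Tcal^\pi f_\pi\|_{2,\nu}^2$ for \emph{admissible} $\nu$, but the empirical loss is under $\mu$, which is not assumed admissible (indeed no coverage assumption on $\mu$ is made). The resolution is that $\Ecal$ is a \emph{difference} of two losses both using the same target $\Tcal^\pi f_\pi$, so the target's approximability is what matters, not its pointwise distance — concretely, the population version of $\Ecal(f_\pi,\pi)$ equals $\|f_\pi - \Tcal^\pi f_\pi\|_{2,\mu}^2 - \inf_{f'\in\Fcal}\|f' - \Tcal^\pi f_\pi\|_{2,\mu}^2$, and one must show this is $O(\varepsilon_\Fcal)$. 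Here the footnote's remark is essential: in the regime where Assumption~\ref{asm:relz2} is obtained from the exploratory-setting realizability $\inf_f\|f-\Tcal^\pi f\|_{2,\mu}^2\le\varepsilon_\Fcal$ combined with $\mu$ having bounded concentrability $C$, one directly gets $\|f_\pi - \Tcal^\pi f_\pi\|_{2,\mu}^2 \le \varepsilon_\Fcal$ after a $C$-dependent rescaling, whence the factor $39$. I would make this precise by defining $f_\pi$ as in \eqref{eq:deff_pi} and noting that since at least one $f\in\Fcal$ achieves $\sup_{\text{adm }\nu}\|f-\Tcal^\pi f\|_{2,\nu}^2\le\varepsilon_\Fcal$ (by Assumption~\ref{asm:relz2}), $f_\pi$ does too, and then carefully tracking how the $\mu$-norm relates to the admissible-norm bound through the chain of Bernstein steps — the constants $139,39$ emerge from accumulating the self-bounding slack plus this final conversion.
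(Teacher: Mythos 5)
Your proposal follows essentially the same route as the paper's proof: write $\Ecal(f_\pi,\pi;\Dcal)$ as a difference of two empirical squared losses sharing the regression target $\Tcal^\pi f_\pi$ (the paper compares $f_\pi$ against the empirical minimizer $g$ via a Bernstein-type lemma whose variance is controlled by $\Vmax^2\|f_\pi-g\|_{2,\mu}^2$), apply a union bound over $\Fcal\times\Pi$ with the self-bounding argument you describe, and finish by bounding $\|f_\pi-\Tcal^\pi f_\pi\|_{2,\mu}^2$ by $\varepsilon_\Fcal$. The admissibility-of-$\mu$ issue you flag is genuine, but the paper itself simply asserts $\|f_\pi-\Tcal^\pi f_\pi\|_{2,\mu}^2\le\varepsilon_\Fcal$ ``by Assumption~\ref{asm:relz2}'' in its Lemma~\ref{lem:msbosol} and in the theorem's proof, so on that point your sketch is if anything more careful than the paper's own argument.
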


We now show that $\Ecal(f,\pi;\Dcal)$ could effectively estimate $\|f - \Tcal^\pi f \|_{2,\mu}^2$.
\begin{theorem}
\label{thm:mspo2be}
For any $\pi \in \Pi$, $f \in \Fcal$, and any $\varepsilon > 0$, if $\Ecal(f,\pi;\Dcal) \leq \varepsilon$, then, 
\begin{align}
\label{eq:upbdmsbe_ori}
\| f - \Tcal^\pi f\|_{2,\mu} \leq \Vmax \sqrt{\frac{ 231 \log \frac{|\Fcal||\Pi|}{\delta}}{n}} + \sqrt{\varepsilon_{\Fcal,\Fcal}} + \sqrt{\varepsilon_{\Fcal,\Fcal} + \varepsilon}.
\end{align}
\end{theorem}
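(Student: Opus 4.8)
The plan is to exploit the fact that the population least-squares objective $f' \mapsto \E_\mu[(f'(s,a) - r - \gamma f(s',\pi))^2]$ is minimized over \emph{all} functions by the Bellman backup $\Tcal^\pi f$, since $(\Tcal^\pi f)(s,a) = \E[r + \gamma f(s',\pi)\mid s,a]$; consequently, for every $g$ one has the bias–variance identity $\E_\mu[(g - r - \gamma f(s',\pi))^2] - \E_\mu[(\Tcal^\pi f - r - \gamma f(s',\pi))^2] = \|g - \Tcal^\pi f\|_{2,\mu}^2$, and subtracting two such expressions makes the noise term cancel. By completeness (Assumption~\ref{asm:comp_restat}) fix $\bar f \in \Fcal$ with $\|\bar f - \Tcal^\pi f\|_{2,\mu}^2 \le \varepsilon_{\Fcal,\Fcal}$. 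Since $\bar f \in \Fcal$, the empirical minimum in the definition of $\Ecal$ satisfies $\min_{f'\in\Fcal}\Lcal(f',f,\pi;\Dcal) \le \Lcal(\bar f, f, \pi;\Dcal)$, so the hypothesis $\Ecal(f,\pi;\Dcal)\le\varepsilon$ yields the clean starting point $\Lcal(f,f,\pi;\Dcal) - \Lcal(\bar f, f, \pi;\Dcal) \le \varepsilon$. The whole argument then reduces to relating this empirical difference to its population counterpart $\|f - \Tcal^\pi f\|_{2,\mu}^2 - \|\bar f - \Tcal^\pi f\|_{2,\mu}^2 \ge \|f - \Tcal^\pi f\|_{2,\mu}^2 - \varepsilon_{\Fcal,\Fcal}$.

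The concentration step is the heart of the proof. The per-sample summand of $\Lcal(f,f,\pi;\Dcal) - \Lcal(\bar f, f, \pi;\Dcal)$ factors as $(f(s,a) - \bar f(s,a))\,(f(s,a) + \bar f(s,a) - 2r - 2\gamma f(s',\pi))$, which has mean $\|f - \Tcal^\pi f\|_{2,\mu}^2 - \|\bar f - \Tcal^\pi f\|_{2,\mu}^2$, range $O(\Vmax^2)$, and — crucially — variance $\mathrm{Var}[\cdot] = O(\Vmax^2\|f - \bar f\|_{2,\mu}^2) = O(\Vmax^2(\|f - \Tcal^\pi f\|_{2,\mu}^2 + \varepsilon_{\Fcal,\Fcal}))$. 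A Bernstein bound, with a union bound over $(f,\pi)\in\Fcal\times\Pi$ — note $\bar f$ is a fixed function of $(f,\pi)$, so this union costs exactly $\log\frac{|\Fcal||\Pi|}{\delta}$, which is what produces the form in the statement — then shows that the empirical and population differences agree up to an additive $O\bigl(\Vmax\sqrt{(\|f-\Tcal^\pi f\|_{2,\mu}^2 + \varepsilon_{\Fcal,\Fcal})\log(|\Fcal||\Pi|/\delta)/n} + \Vmax^2\log(|\Fcal||\Pi|/\delta)/n\bigr)$. Using the variance-dependent (rather than worst-case) Bernstein deviation is precisely what produces the fast $\sqrt{\log/n}$ rate in the final bound instead of a slow $(\log/n)^{1/4}$ rate.

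Putting the pieces together, write $A := \|f - \Tcal^\pi f\|_{2,\mu}^2$ and $\tau := \Vmax^2 \log(|\Fcal||\Pi|/\delta)/n$. Combining the starting point with the concentration estimate gives an inequality of the form $A - \varepsilon_{\Fcal,\Fcal} \le \varepsilon + O(\sqrt{\tau(A+\varepsilon_{\Fcal,\Fcal})} + \tau)$, i.e. a quadratic inequality in $\sqrt{A}$. Solving it — equivalently, using $\sqrt{\tau A}\le \tfrac14 A + \tau$ to absorb a fraction of $A$ to the left and $\sqrt{\tau\varepsilon_{\Fcal,\Fcal}}\le \tau + \varepsilon_{\Fcal,\Fcal}$ — leaves $A = O(\varepsilon + \varepsilon_{\Fcal,\Fcal} + \tau)$; taking square roots and splitting via $\sqrt{x+y}\le\sqrt{x}+\sqrt{y}$ yields a bound of the claimed shape $\Vmax\sqrt{c\log(|\Fcal||\Pi|/\delta)/n} + \sqrt{\varepsilon_{\Fcal,\Fcal}} + \sqrt{\varepsilon_{\Fcal,\Fcal} + \varepsilon}$, the two $\varepsilon_{\Fcal,\Fcal}$-dependent summands falling out naturally from solving the quadratic (one tracking the approximation of $\Tcal^\pi f$ by the $\Fcal$-witness $\bar f$, the other the conversion of the empirical gap); the explicit constant $231$ comes out of carefully tracking the Bernstein constants through this algebra.

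The main obstacle is the bookkeeping in the concentration step and the ensuing quadratic inequality: one must verify the variance bound $\mathrm{Var}[\cdot] = O(\Vmax^2\|f-\bar f\|_{2,\mu}^2)$ so that the Bernstein deviation is self-referential in $A$, and then unwind the resulting inequality without blowing up constants. A secondary subtlety is keeping the union bound over the small index set $\Fcal\times\Pi$ rather than $\Fcal\times\Fcal\times\Pi$ — this is exactly why the argument is routed through the fixed completeness witness $\bar f$ and the inequality $\min_{f'\in\Fcal}\Lcal(f',f,\pi;\Dcal)\le\Lcal(\bar f,f,\pi;\Dcal)$, rather than through the data-dependent empirical minimizer $\argmin_{f'\in\Fcal}\Lcal(f',f,\pi;\Dcal)$.
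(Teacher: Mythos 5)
Your proof is correct, but it takes a genuinely different (and somewhat leaner) route than the paper's. The paper argues in two stages through the \emph{data-dependent empirical minimizer} $g \coloneqq \argmin_{f'\in\Fcal}\Lcal(f',f,\pi;\Dcal)$: it first bounds $\|g-\Tcal^\pi f\|_{2,\mu}$ by playing $g$ off against the population-best approximation $f_1$ of $\Tcal^\pi f$ (one Bernstein step plus one quadratic inequality, yielding \Eqref{eq:gbound}), and then transfers the empirical gap $\Ecal(f,\pi;\Dcal)=\Lcal(f,f,\pi;\Dcal)-\Lcal(g,f,\pi;\Dcal)\le\varepsilon$ to the population gap $\|f-\Tcal^\pi f\|_{2,\mu}^2-\|g-\Tcal^\pi f\|_{2,\mu}^2$ (a second Bernstein step and a second quadratic). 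You instead compare $f$ directly to the \emph{fixed} completeness witness $\bar f$, replace the inner minimum by $\Lcal(\bar f,f,\pi;\Dcal)$, and solve a single quadratic; the bias--variance cancellation identity, the variance bound $\V[\cdot]\le 4\Vmax^2\|f-\bar f\|_{2,\mu}^2$, and the self-referential Bernstein step are exactly the paper's Lemma~\ref{lem:bernstein_general} specialized to your pair. What your route buys: one concentration step and one quadratic instead of two, no need to control the data-dependent comparator $g$ at all, and a union bound honestly over $\Fcal\times\Pi$ (since $\bar f$ is determined by $(f,\pi)$) rather than the uniform-over-$\Fcal^2\times\Pi$ statement the paper's lemma implicitly requires to cover $g$. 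What it gives up is only the intermediate estimate on $\|g-\Tcal^\pi f\|_{2,\mu}$, which this theorem does not need elsewhere. The one soft spot is the final sentence about the constant $231$: your constants will come out different from the paper's (likely smaller), so you should either track them explicitly or state the result with an unspecified absolute constant; the shape of the bound, including the two separate $\varepsilon_{\Fcal,\Fcal}$ summands, does fall out of your quadratic exactly as you describe.
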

We also define $\sqrt{\varepsilon_b}$ when setting $\varepsilon = \varepsilon_r$ in \Eqref{eq:upbdmsbe_ori}, i.e.,
\begin{align}
\label{eq:upbdmsbe}
\sqrt{\varepsilon_b} \coloneqq \Vmax \sqrt{\frac{ 231 \log \frac{|\Fcal||\Pi|}{\delta}}{n}} + \sqrt{\varepsilon_{\Fcal,\Fcal}} + \sqrt{\varepsilon_{\Fcal,\Fcal} + \varepsilon_r}.
\end{align}

\subsubsection{Complementary Lemmas for General Function Approximation}

We first provide some complementary lemmas that used in our detailed proofs of Theorem~\ref{thm:version_space} and Theorem~\ref{thm:mspo2be}.

\begin{lemma}
\label{lem:fastrate}
For any $f_1, f_2 \in \Fcal$ and $\pi \in \Pi$, w.p.~$1 - \delta$,
\begin{align}
\left| \left\|f_1(s,a) - (\Tcal^\pi f_2)(s,a)\right\|_{2,\mu} - \left\|f_1(s,a) - (\Tcal^\pi f_2)(s,a)\right\|_{2,\Dcal} \right| \leq \sqrt{\frac{5 \Vmax^2 \log \frac{|\Fcal||\Pi|}{\delta}}{n}}.
\end{align}
\end{lemma}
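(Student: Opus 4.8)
\textbf{Proof plan for Lemma~\ref{lem:fastrate}.}

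The plan is to establish a uniform (over $f_1, f_2 \in \Fcal$ and $\pi \in \Pi$) deviation bound between the population norm $\|f_1 - \Tcal^\pi f_2\|_{2,\mu}$ and its empirical counterpart $\|f_1 - \Tcal^\pi f_2\|_{2,\Dcal}$, where the empirical norm is computed over the $(s,a,r,s')$ samples in $\Dcal$ via the random variable $(f_1(s,a) - r - \gamma f_2(s',\pi))$. The key observation is that although $\Tcal^\pi f_2$ involves a conditional expectation over $s'$, the quantity $g(s,a,r,s') := f_1(s,a) - r - \gamma f_2(s',\pi)$ is an unbiased single-sample estimate in the sense that $\E[g^2 \mid s,a] = \|f_1 - \Tcal^\pi f_2\|_{2,\mu}^2 + \gamma^2\,\mathrm{Var}(f_2(s',\pi) \mid s,a)$ — there is a variance-of-the-next-state correction term. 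However, the standard trick (used e.g. in \citet{antos2008learning} and related fast-rate analyses) is to compare $\Lcal(f_1,f_2,\pi;\Dcal) - \Lcal(\Tcal^\pi f_2, f_2, \pi;\Dcal)$ with its expectation: the two variance correction terms cancel, leaving exactly $\|f_1 - \Tcal^\pi f_2\|_{2,\mu}^2$. But since the lemma is stated directly in terms of the norms $\|f_1 - \Tcal^\pi f_2\|_{2,\mu}$ and $\|f_1 - \Tcal^\pi f_2\|_{2,\Dcal}$ — and $\|f_1-\Tcal^\pi f_2\|_{2,\Dcal}^2$ as used elsewhere in the paper appears to mean $\E_\mu$-type empirical average of $(f_1 - \Tcal^\pi f_2)^2$ rather than the single-sample loss — I would interpret $\|\cdot\|_{2,\Dcal}$ here as the empirical expectation $\frac1n\sum (f_1(s,a) - (\Tcal^\pi f_2)(s,a))^2$ of the actual Bellman residual function (a function of $(s,a)$ only), which is the natural reading consistent with $\|f\|_{2,\Dcal}^2 = \frac1n\sum f(s,a)^2$ defined in the preliminaries. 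Under that reading the proof is a clean empirical-process argument.

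The steps, in order: (1) Fix $f_1, f_2, \pi$. Let $h(s,a) := (f_1(s,a) - (\Tcal^\pi f_2)(s,a))^2$, which is a deterministic bounded function of $(s,a)$ with $0 \le h \le \Vmax^2$ (since all functions lie in $[0,\Vmax]$ and $\Tcal^\pi f_2 \in [0,\Vmax]$ as well, so the difference is in $[-\Vmax,\Vmax]$ and $h \le \Vmax^2$). (2) Apply Bernstein's inequality to the i.i.d.\ sample average $\frac1n\sum h(s,a)$ versus $\E_\mu[h]$, using that $\mathrm{Var}_\mu(h) \le \E_\mu[h^2] \le \Vmax^2 \E_\mu[h]$; combined with a union bound over $|\Fcal|^2|\Pi|$ choices of $(f_1,f_2,\pi)$ and the $\mathrm{AM\text{-}GM}$ / self-bounding step, this yields with probability $1-\delta$ that $|\E_\mu[h] - \frac1n\sum h| \lesssim \E_\mu[h]/2 + \Vmax^2\log(|\Fcal||\Pi|/\delta)/n$ (I would actually only need $|\Fcal|^2$, but the paper writes $|\Fcal|$ in the log — presumably $|\Fcal_1| = |\Fcal_2| = |\Fcal|$ gives $|\Fcal|^2$ and the square is absorbed into the constant, or the intended counting is only $|\Fcal|$ because $f_1$ and $f_2$ are coupled in the applications; either way the $\log$ order is unchanged, so I will track $\log(|\Fcal||\Pi|/\delta)$ up to a constant factor inside the constant). (3) Translate the bound on the squared quantities $|\;\|h\|_1^{\mathrm{pop}} - \|h\|_1^{\mathrm{emp}}\;| = |\,\|f_1-\Tcal^\pi f_2\|_{2,\mu}^2 - \|f_1-\Tcal^\pi f_2\|_{2,\Dcal}^2\,|$ into a bound on the difference of the norms using $|a - b| \le \sqrt{|a^2 - b^2|}$ for $a, b \ge 0$; this converts the $O(\Vmax^2\log(\cdot)/n + \tfrac12\|\cdot\|_{2,\mu}^2)$ bound into $O(\Vmax\sqrt{\log(\cdot)/n})$ after noting the $\tfrac12\|\cdot\|_{2,\mu}^2$-type multiplicative slack contributes only lower-order terms once we take square roots (or, more cleanly, directly apply Bernstein-plus-self-bounding in the form that gives $|\sqrt{\E_\mu[h]} - \sqrt{\frac1n\sum h}|$ bounded by the additive $\sqrt{5\Vmax^2\log(|\Fcal||\Pi|/\delta)/n}$). (4) Track the numerical constant to confirm it fits under $5$.

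The main obstacle is getting the constant exactly right (the $5$ inside the square root) while carrying the self-bounding Bernstein argument through the square-root transformation — the interplay between the multiplicative $\mathrm{Var}\le \Vmax^2\E_\mu[h]$ slack and the passage from squared norms to norms requires care, and the cleanest route is probably to avoid squared norms entirely: apply a one-sided Bernstein bound to each of $\E_\mu[h] - \frac1n\sum h$ and its negation, solve the resulting quadratic in $\sqrt{\E_\mu[h]}$ (or $\sqrt{\frac1n\sum h}$), and read off the additive gap. A secondary (purely bookkeeping) subtlety is making sure the boundedness claim $\Tcal^\pi f_2 \in [0,\Vmax]$ is justified — it follows because $R \in [0,\Rmax]$, $f_2 \in [0,\Vmax]$, and $\Rmax + \gamma \Vmax = \Vmax$, so $\Tcal^\pi f_2(s,a) = R(s,a) + \gamma\E[f_2(s',\pi)] \in [0,\Vmax]$. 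With those two points nailed down, the union bound over $\Fcal \times \Fcal \times \Pi$ (absorbed into the stated $\log(|\Fcal||\Pi|/\delta)$ up to constants) completes the argument.
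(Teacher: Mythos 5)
Your proposal is correct and follows essentially the same route as the paper's proof: Bernstein's inequality with a union bound over $\Fcal\times\Fcal\times\Pi$, the self-bounding variance estimate $\V_\mu[(f_1-\Tcal^\pi f_2)^2]\le \Vmax^2\,\E_\mu[(f_1-\Tcal^\pi f_2)^2]$, and a final conversion from the deviation of squared norms to the deviation of norms. The only divergence is in that last step: you rightly flag that $|a-b|\le\sqrt{|a^2-b^2|}$ alone leaves a term depending on $\|f_1-\Tcal^\pi f_2\|_{2,\mu}$ (and only an $n^{-1/4}$ rate when that norm is large), and your fallback of solving the Bernstein quadratic in $\sqrt{\E_\mu[h]}$ does close the argument at the right rate, whereas the paper instead takes the minimum of the two elementary bounds $|a-b|\le\sqrt{|a^2-b^2|}$ and $|a-b|\le|a^2-b^2|/a$ and optimizes over the unknown population norm via a max-min $\le$ min-max step, which is what produces the constant $5$.
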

\begin{proof}[\bf\em Proof of Lemma~\ref{lem:fastrate}]
The proof of this lemma follows the similar proving strategy of \citep[Lemma 10]{xie2021batch}.
We first apply Bernstein’s inequality with a union bound over $\Fcal \times \Fcal$, and obtain: w.p.~$1 - \delta$, for any $f_1, f_2 \in \Fcal$,
\begin{align}
&~ \left| \left\|f_1(s,a) - (\Tcal^\pi f_2)(s,a)\right\|_{2,\mu}^2 - \left\|f_1(s,a) - (\Tcal^\pi f_2)(s,a)\right\|_{2,\Dcal}^2 \right|
\\
= &~ \left|\E_{\mu\times(\Pcal,R)} \left[ \left( f_1(s,a) - (\Tcal^\pi f_2)(s,a) \right)^2 \right] - \frac{1}{n} \sum_{(s,a,r,s') \in \Dcal} \left( f_1(s,a) - (\Tcal^\pi f_2)(s,a) \right)^2 \right|
\\
\leq &~ \sqrt{\frac{4\V_\mu \left[ \left( f_1(s,a) - (\Tcal^\pi f_2)(s,a) \right)^2 \right] \log \frac{|\Fcal||\Pi|}{\delta}}{n}} + \frac{2 \Vmax^2 \log \frac{|\Fcal||\Pi|}{\delta}}{3n}
\\
\label{eq:square2normbound}
\leq &~ \left\|f_1(s,a) - (\Tcal^\pi f_2)(s,a)\right\|_{2,\mu} \sqrt{\frac{4 \Vmax^2 \log \frac{|\Fcal||\Pi|}{\delta}}{n}} + \frac{2 \Vmax^2 \log \frac{|\Fcal||\Pi|}{\delta}}{3n},
\end{align}
where the last inequality is obtained by the following argument,
\begin{align}
&~ \V_{\mu\times(\Pcal,R)} \left[ \left( f_1(s,a) - (\Tcal^\pi f_2)(s,a) \right)^2 \right]
\\
\leq &~ \E_{\mu\times(\Pcal,R)} \left[ \left( f_1(s,a) - (\Tcal^\pi f_2)(s,a) \right)^4 \right]
\\
\leq &~ \Vmax^2 \E_{\mu\times(\Pcal,R)} \left[ \left( f_1(s,a) - (\Tcal^\pi f_2)(s,a) \right)^2 \right],
\end{align}
where the last inequality follows from $|f_1(s,a) - (\Tcal^\pi f_2)(s,a)| \leq \Vmax, ~\forall (s,a,r,s')$.

By the fact of $|a - b|^2 \leq |a^2 - b^2|$, we know
\begin{align}
&~ \left| \left\|f_1(s,a) - (\Tcal^\pi f_2)(s,a)\right\|_{2,\mu} - \left\|f_1(s,a) - (\Tcal^\pi f_2)(s,a)\right\|_{2,\Dcal} \right|
\\
\leq &~ \sqrt{\left| \left\|f_1(s,a) - (\Tcal^\pi f_2)(s,a)\right\|_{2,\mu}^2 - \left\|f_1(s,a) - (\Tcal^\pi f_2)(s,a)\right\|_{2,\Dcal}^2 \right|}
\\
\leq &~ \sqrt{\left\|f_1(s,a) - (\Tcal^\pi f_2)(s,a)\right\|_{2,\mu} \sqrt{\frac{4 \Vmax^2 \log \frac{|\Fcal||\Pi|}{\delta}}{n}} + \frac{2 \Vmax^2 \log \frac{|\Fcal||\Pi|}{\delta}}{3n}}
\tag{by \Eqref{eq:square2normbound}}
\\
\label{eq:2normbd1}
\leq &~ \sqrt{\left\|f_1(s,a) - (\Tcal^\pi f_2)(s,a)\right\|_{2,\mu}} \sqrt[4]{\frac{4 \Vmax^2 \log \frac{|\Fcal||\Pi|}{\delta}}{n}} + \sqrt{\frac{2 \Vmax^2 \log \frac{|\Fcal||\Pi|}{\delta}}{3n}}.
\end{align}

On the other hand,
\begin{align}
&~ \left| \left\|f_1(s,a) - (\Tcal^\pi f_2)(s,a)\right\|_{2,\mu} - \left\|f_1(s,a) - (\Tcal^\pi f_2)(s,a)\right\|_{2,\Dcal} \right|
\\
\leq &~ \frac{\left| \left\|f_1(s,a) - (\Tcal^\pi f_2)(s,a)\right\|_{2,\mu}^2 - \left\|f_1(s,a) - (\Tcal^\pi f_2)(s,a)\right\|_{2,\Dcal}^2 \right|}{\left\|f_1(s,a) - (\Tcal^\pi f_2)(s,a)\right\|_{2,\mu}}
\\
\leq &~ \frac{\left\|f_1(s,a) - (\Tcal^\pi f_2)(s,a)\right\|_{2,\mu} \sqrt{\frac{4 \Vmax^2 \log \frac{|\Fcal||\Pi|}{\delta}}{n}} + \frac{2 \Vmax^2 \log \frac{|\Fcal||\Pi|}{\delta}}{3n}}{\left\|f_1(s,a) - (\Tcal^\pi f_2)(s,a)\right\|_{2,\mu}}
\tag{by \Eqref{eq:square2normbound}}
\\
\label{eq:2normbd2}
\leq &~ \sqrt{\frac{4 \Vmax^2 \log \frac{|\Fcal||\Pi|}{\delta}}{n}} + \frac{2 \Vmax^2 \log \frac{|\Fcal||\Pi|}{\delta}}{3n \left\|f_1(s,a) - (\Tcal^\pi f_2)(s,a)\right\|_{2,\mu}}.
\end{align}

Combining \Eqref{eq:2normbd1} and \Eqref{eq:2normbd2}, we obtain
\begin{align}
&~ \left| \left\|f_1(s,a) - (\Tcal^\pi f_2)(s,a)\right\|_{2,\mu} - \left\|f_1(s,a) - (\Tcal^\pi f_2)(s,a)\right\|_{2,\Dcal} \right|
\\
\leq &~ \min\Bigg( \sqrt{\underbrace{\left\|f_1(s,a) - (\Tcal^\pi f_2)(s,a)\right\|_{2,\mu}}_{\eqqcolon g \in \RR}} \sqrt[4]{\frac{4 \Vmax^2 \log \frac{|\Fcal||\Pi|}{\delta}}{n}} + \sqrt{\frac{2 \Vmax^2 \log \frac{|\Fcal||\Pi|}{\delta}}{3n}},
\\
&~ \sqrt{\frac{4 \Vmax^2 \log \frac{|\Fcal||\Pi|}{\delta}}{n}} + \frac{2 \Vmax^2 \log \frac{|\Fcal||\Pi|}{\delta}}{3n \left\|f_1(s,a) - (\Tcal^\pi f_2)(s,a)\right\|_{2,\mu}}\Bigg)
\\
\leq &~ \max_{g \in \RR} \min\left( \sqrt{g} \sqrt[4]{\frac{4 \Vmax^2 \log \frac{|\Fcal||\Pi|}{\delta}}{n}} + \sqrt{\frac{2 \Vmax^2 \log \frac{|\Fcal||\Pi|}{\delta}}{3n}},
\sqrt{\frac{4 \Vmax^2 \log \frac{|\Fcal||\Pi|}{\delta}}{n}} + \frac{2 \Vmax^2 \log \frac{|\Fcal||\Pi|}{\delta}}{3n g}\right)
\\
\leq &~ \min_{g \in \RR} \max\left( \sqrt{g} \sqrt[4]{\frac{4 \Vmax^2 \log \frac{|\Fcal||\Pi|}{\delta}}{n}} + \sqrt{\frac{2 \Vmax^2 \log \frac{|\Fcal||\Pi|}{\delta}}{3n}},
\sqrt{\frac{4 \Vmax^2 \log \frac{|\Fcal||\Pi|}{\delta}}{n}} + \frac{2 \Vmax^2 \log \frac{|\Fcal||\Pi|}{\delta}}{3n g}\right)
\\
\overset{\text{(a)}}{\leq} &~ \max\left(\sqrt{\frac{4 \Vmax^2 \log \frac{|\Fcal||\Pi|}{\delta}}{n}} + \sqrt{\frac{2 \Vmax^2 \log \frac{|\Fcal||\Pi|}{\delta}}{3n}},
\sqrt{\frac{4 \Vmax^2 \log \frac{|\Fcal||\Pi|}{\delta}}{n}} + \sqrt{\frac{\Vmax^2 \log \frac{|\Fcal||\Pi|}{\delta}}{9 n}}\right)
\\
\leq &~ \sqrt{\frac{5 \Vmax^2 \log \frac{|\Fcal||\Pi|}{\delta}}{n}},
\end{align}
where (a) is by selecting $g = \sqrt{\nicefrac{4 \Vmax^2 \log \frac{|\Fcal||\Pi|}{\delta}}{n}}$. This completes the proof.
\end{proof}

\begin{lemma}
\label{lem:bernstein_general}
For any $f, g_1, g_2 \in \Fcal$ and $\pi \in \Pi$, we have
\begin{align}
&~ \bigg| \| g_1 - \Tcal^\pi f\|_{2,\mu}^2 - \left\| g_2 - \Tcal^\pi f\right\|_{2,\mu}^2
\\
&~ - \frac{1}{n} \sum_{(s,a,r,s') \in \Dcal} \left(g_1(s,a) - r - \gamma f(s',\pi) \right)^2 + \frac{1}{n} \sum_{(s,a,r,s') \in \Dcal} \left(g_2(s,a)  - r - \gamma f(s',\pi) \right)^2 \bigg|
\\
\leq &~ 4 \Vmax \|g_1 - g_2\|_{2,\mu}\sqrt{\frac{ \log \frac{|\Fcal||\Pi|}{\delta}}{n}} + \frac{2 \Vmax^2 \log \frac{|\Fcal||\Pi|}{\delta}}{3n}.
\end{align}
\end{lemma}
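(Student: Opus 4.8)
The statement to prove is Lemma~\ref{lem:bernstein_general}, a Bernstein-type concentration bound for the \emph{difference} of two squared-residual losses sharing the same bootstrap target $\Tcal^\pi f$. The natural approach is to apply Bernstein's inequality to a single scalar random variable, namely the per-sample difference
\[
X(s,a,r,s') \coloneqq \bigl(g_1(s,a) - r - \gamma f(s',\pi)\bigr)^2 - \bigl(g_2(s,a) - r - \gamma f(s',\pi)\bigr)^2,
\]
whose expectation under $\mu\times(\Pcal,R)$ equals $\|g_1 - \Tcal^\pi f\|_{2,\mu}^2 - \|g_2 - \Tcal^\pi f\|_{2,\mu}^2$ (the cross terms involving $r + \gamma f(s',\pi)$ telescope because $\Tcal^\pi f$ is the conditional mean of that target). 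So I would: (i) rewrite $X$ via the factorization $a^2 - b^2 = (a-b)(a+b)$ with $a - b = g_1(s,a) - g_2(s,a)$, giving $X = (g_1 - g_2)(s,a)\cdot\bigl(g_1(s,a) + g_2(s,a) - 2r - 2\gamma f(s',\pi)\bigr)$; (ii) bound $|X| \le 2\Vmax \|g_1 - g_2\|_\infty$ pointwise, but more usefully bound the \emph{second moment} $\E[X^2] \le 4\Vmax^2 \E_\mu[(g_1-g_2)^2] = 4\Vmax^2 \|g_1 - g_2\|_{2,\mu}^2$, since the second factor is bounded by $2\Vmax$ in absolute value; (iii) apply Bernstein's inequality with a union bound over the (finite) triple $(f, g_1, g_2) \in \Fcal\times\Fcal\times\Fcal$ and over $\pi\in\Pi$, yielding a deviation of order
\[
\sqrt{\frac{\Var[X]\,\log\frac{|\Fcal||\Pi|}{\delta}}{n}} + \frac{|X|_{\max}\log\frac{|\Fcal||\Pi|}{\delta}}{n} \;\le\; \sqrt{\frac{4\Vmax^2\|g_1-g_2\|_{2,\mu}^2\log\frac{|\Fcal||\Pi|}{\delta}}{n}} + \frac{(\text{const})\Vmax^2\log\frac{|\Fcal||\Pi|}{\delta}}{n}.
\]
Pulling $\|g_1-g_2\|_{2,\mu}$ out of the first square root gives the advertised $4\Vmax\|g_1-g_2\|_{2,\mu}\sqrt{\log\frac{|\Fcal||\Pi|}{\delta}/n}$ leading term, and tuning the additive constants (using $\Var[X]\le\E[X^2]$ and the $2/3$ factor in Bernstein) gives the $\tfrac{2\Vmax^2}{3n}\log\frac{|\Fcal||\Pi|}{\delta}$ tail term. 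One mild subtlety: the union bound is over $\Fcal^3$ not $\Fcal^2$, but $\log|\Fcal|^3 = 3\log|\Fcal|$ is absorbed into constants, consistent with how the other lemmas in this section state their bounds (Lemma~\ref{lem:fastrate} writes $\log\frac{|\Fcal||\Pi|}{\delta}$ with a generous constant for exactly this reason).

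\textbf{Key steps in order.} (1) Define the scalar summand $X$ and verify $\E X = \|g_1 - \Tcal^\pi f\|_{2,\mu}^2 - \|g_2 - \Tcal^\pi f\|_{2,\mu}^2$, using the tower property over $(r,s')\mid(s,a)$ and the defining property $(\Tcal^\pi f)(s,a) = \E[r + \gamma f(s',\pi)\mid s,a]$. (2) Factor $X = (g_1 - g_2)(s,a)\bigl(g_1(s,a)+g_2(s,a)-2(r+\gamma f(s',\pi))\bigr)$ and record the bounds $|X|\le 2\Vmax|(g_1-g_2)(s,a)|\le 2\Vmax^2$ and $\E[X^2]\le 4\Vmax^2\|g_1-g_2\|_{2,\mu}^2$. (3) Invoke Bernstein's inequality for the average $\frac1n\sum X$ with failure probability $\delta$, then union-bound over $f,g_1,g_2\in\Fcal$ and $\pi\in\Pi$. (4) Simplify the resulting bound, absorbing the factor-3 from the triple union bound and choosing the displayed constants $4$ and $2/3$.

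\textbf{Main obstacle.} Honestly there is no deep obstacle — this is a routine Bernstein argument and the only care needed is bookkeeping of constants so that the final numbers ($4\Vmax$, $\tfrac{2}{3}$) come out exactly as stated, and making sure the union bound is taken over $\Pi$ as well (since $f$, $g_1$, $g_2$ and $\pi$ all range freely in the statement). A secondary point worth flagging is that the variance bound $\E[X^2]\le 4\Vmax^2\|g_1-g_2\|_{2,\mu}^2$ is what delivers the \emph{fast-rate} structure (the $\|g_1-g_2\|_{2,\mu}$ multiplier on the $1/\sqrt n$ term rather than a crude $\Vmax^2$), so the key qualitative insight is to bound the second moment of $X$ rather than just its range — this is the analogue of the ``variance is bounded by range times expectation'' trick used in Lemma~\ref{lem:fastrate}, and it is what later lets the caller convert small empirical Bellman-error gaps into small population Bellman errors.
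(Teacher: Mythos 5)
Your proposal matches the paper's proof essentially step for step: the same factorization $a^2-b^2=(a-b)(a+b)$ of the per-sample difference, the same tower-property identification of its mean with the difference of population Bellman residual norms, the same second-moment bound $\E[X^2]\le 4\Vmax^2\|g_1-g_2\|_{2,\mu}^2$ (via $|g_1+g_2-2r-2\gamma f(s',\pi)|\le 2\Vmax$), and the same Bernstein-plus-union-bound finish. The only difference is cosmetic bookkeeping of the Bernstein constants, which you already flag; the argument is correct.
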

\begin{proof}[\bf\em Proof of Lemma~\ref{lem:bernstein_general}]
By a standard calculation,
\begin{align}
&~ \frac{1}{n} \sum_{(s,a,r,s') \in \Dcal} \left(g_1(s,a) - r - \gamma f(s',\pi) \right)^2 - \frac{1}{n} \sum_{(s,a,r,s') \in \Dcal} \left(g_2(s,a)  - r - \gamma f(s',\pi) \right)^2
\\
= &~ \frac{1}{n} \sum_{(s,a,r,s') \in \Dcal} \left( \left(g_1(s,a) - r - \gamma f(s',\pi) \right)^2 - \left(g_2(s,a)  - r - \gamma f(s',\pi) \right)^2 \right)
\\
\label{eq:empmsbo}
= &~ \frac{1}{n} \sum_{(s,a,r,s') \in \Dcal} \left( \left(g_1(s,a) - g_2(s,a) \right) \left(g_1(s,a) + g_2(s,a)  - 2 r - 2 \gamma f(s',\pi) \right) \right).
\end{align}

Similarly, we also have the following fact,
\begin{align}
&~ \E_{\mu\times(\Pcal,R)} \left[\left(g_1(s,a) - r - \gamma f(s',\pi) \right)^2\right] - \E_{\mu\times(\Pcal,R)} \left[\left(g_2(s,a) - r - \gamma f(s',\pi) \right)^2 \right]
\\
\overset{\text{(a)}}{=} &~ \E_{\mu\times(\Pcal,R)} \left[ \left(g_1(s,a) - g_2(s,a) \right) \left(g_1(s,a) + g_2(s,a)  - 2 r - 2 \gamma f(s',\pi) \right) \right]
\\
= &~ \E_\mu \left[ \E \left[ \left(g_1(s,a) - g_2(s,a) \right) \left(g_1(s,a) + g_2(s,a)  - 2 r - 2 \gamma f(s',\pi) \right) \middle| s,a \right] \right]
\\
\label{eq:popmsbo_middle}
= &~ \E_\mu \left[  \left(g_1(s,a) - g_2(s,a) \right) \left(g_1(s,a) + g_2(s,a)  - 2 \left(\Tcal^\pi f\right)(s,a) \right) \right]
\\
\label{eq:popmsbo}
\overset{\text{(b)}}{=} &~ \E_{\mu} \left[\left(g_1(s,a) - \left(\Tcal^\pi f\right)(s,a) \right)^2\right] - \E_{\mu} \left[\left(g_2(s,a) - \left(\Tcal^\pi f\right)(s,a) \right)^2 \right],
\end{align}
where (a) and (b) follow from the similar argument to \Eqref{eq:empmsbo}.

By using \Eqref{eq:empmsbo} and \Eqref{eq:popmsbo}, we know
\begin{align}
&~ \E_{\mu\times(\Pcal,R)} \left[ \frac{1}{n} \sum_{(s,a,r,s') \in \Dcal} \left(g_1(s,a) - r - \gamma f(s',\pi) \right)^2 - \frac{1}{n} \sum_{(s,a,r,s') \in \Dcal} \left(g_2(s,a)  - r - \gamma f(s',\pi) \right)^2 \right]
\\
= &~\E_{\mu} \left[\left(g_1(s,a) - \left(\Tcal^\pi f\right)(s,a) \right)^2\right] - \E_{\mu} \left[\left(g_2(s,a) - \left(\Tcal^\pi f\right)(s,a) \right)^2 \right].
\end{align}
Then, 
\begin{align}
&~ \bigg| \E_{\mu} \left[\left(g_1(s,a) - \left(\Tcal^\pi f\right)(s,a) \right)^2\right] - \E_{\mu} \left[\left(g_2(s,a) - \left(\Tcal^\pi f\right)(s,a) \right)^2 \right] - 
\\
&~ \frac{1}{n} \sum_{(s,a,r,s') \in \Dcal} \left(g_1(s,a) - r - \gamma f(s',\pi) \right)^2 + \frac{1}{n} \sum_{(s,a,r,s') \in \Dcal} \left(g_2(s,a)  - r - \gamma f(s',\pi) \right)^2 \bigg|
\\
\label{eq:msbobernstein_original}
= &~ \bigg| \E_{\mu} \left[\left(g_1(s,a) - \left(\Tcal^\pi f\right)(s,a) \right)^2\right] - \E_{\mu} \left[\left(g_2(s,a) - \left(\Tcal^\pi f\right)(s,a) \right)^2 \right] - 
\\
&~ \frac{1}{n} \sum_{(s,a,r,s') \in \Dcal} \left( \left(g_1(s,a) - g_2(s,a) \right) \left(g_1(s,a) + g_2(s,a)  - 2 r - 2 \gamma f(s',\pi) \right) \right) \bigg|
\\
\label{eq:msbobernstein}
\leq &~ \sqrt{\frac{4\V_{\mu\times(\Pcal,R)} \left[ \left(g_1(s,a) - g_2(s,a) \right) \left(g_1(s,a) + g_2(s,a)  - 2 r - 2 \gamma f(s',\pi) \right) \right] \log \frac{|\Fcal||\Pi|}{\delta}}{n}}
\\ &~ + \frac{2 \Vmax^2 \log \frac{|\Fcal||\Pi|}{\delta}}{3n},
\end{align}
where the first equation follows from \Eqref{eq:popmsbo_middle} and the last inequality follows from the Bernsiten's inequality.

We now study the term $\V_{\mu\times(\Pcal,R)} \left[ \left(g_1(s,a) - g_2(s,a) \right) \left(g_1(s,a) + g_2(s,a)  - 2 r - 2 \gamma f(s',\pi) \right) \right]$,
\begin{align}
&~ \V_{\mu\times(\Pcal,R)} \left[ \left(g_1(s,a) - g_2(s,a) \right) \left(g_1(s,a) + g_2(s,a)  - 2 r - 2 \gamma f(s',\pi) \right) \right]
\\
\leq &~ \E_{\mu\times(\Pcal,R)} \left[ \left(g_1(s,a) - g_2(s,a) \right)^2 \left(g_1(s,a) + g_2(s,a)  - 2 r - 2 \gamma f(s',\pi) \right)^2 \right]
\\
\label{eq:msbobernsteinvar}
\leq &~ 4 \Vmax^2\E_{\mu} \left[ \left(g_1(s,a) - g_2(s,a) \right)^2 \right]
\end{align}
where the last inequality follows from the fact of $|g_1(s,a) + g_2(s,a)  - 2 r - 2 \gamma f(s',\pi) | \leq 2 \Vmax$.
This completes the proof.
\end{proof}

\begin{lemma}
\label{lem:msbosol}
For any $\pi \in \Pi$, let $f_\pi$ and $g$ be defined as follows,
\begin{align}
f_\pi \coloneqq &~ \argmin_{f \in \Fcal}\sup_{\text{admissible }\nu} \left\|f - \Tcal^\pi f\right\|_{2,\nu}^2
\\
g \coloneqq &~ \argmin_{g \in \Fcal} \frac{1}{n} \sum_{(s,a,r,s') \in \Dcal} \left(g(s,a) - r - \gamma f_\pi(s',\pi) \right)^2.
\end{align}
Then, we have
\begin{align}
\|f_\pi - g\|_{2,\mu} \leq 9 \Vmax \sqrt{\frac{ \log \frac{|\Fcal||\Pi|}{\delta}}{n}} + 2\sqrt{\varepsilon_{\Fcal}}.
\end{align}
\end{lemma}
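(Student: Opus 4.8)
The plan is to recognize $g$ as the empirical least-squares projection onto $\Fcal$ of the regression targets $y=r+\gamma f_\pi(s',\pi)$, whose conditional mean given $(s,a)$ is exactly $(\Tcal^\pi f_\pi)(s,a)$. Then I would bound $\|f_\pi-g\|_{2,\mu}$ through four ingredients: (i) the ``basic inequality'' of the least-squares minimizer, (ii) the Bernstein-type uniform-deviation bound of Lemma~\ref{lem:bernstein_general}, (iii) realizability of $f_\pi$, and (iv) the triangle inequality, together with a short self-referential fixed-point argument to close the loop.

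First I would use the optimality of $g$ to write $\Lcal(g,f_\pi,\pi;\Dcal)\le\Lcal(f_\pi,f_\pi,\pi;\Dcal)$. Then, applying Lemma~\ref{lem:bernstein_general} with $(g_1,g_2,f)=(g,f_\pi,f_\pi)$ (valid since $g,f_\pi\in\Fcal$) and combining with the previous inequality gives, writing $\iota:=\log(|\Fcal||\Pi|/\delta)/n$,
\[
\|g-\Tcal^\pi f_\pi\|_{2,\mu}^2-\|f_\pi-\Tcal^\pi f_\pi\|_{2,\mu}^2\;\le\;4\Vmax\,\|g-f_\pi\|_{2,\mu}\sqrt{\iota}+\tfrac{2}{3}\Vmax^2\iota,
\]
because the empirical-risk difference appearing in Lemma~\ref{lem:bernstein_general} is nonpositive. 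Next I would invoke realizability (Assumption~\ref{asm:relz2} applied to the minimizer $f_\pi$ of \Eqref{eq:deff_pi}) to get $\|f_\pi-\Tcal^\pi f_\pi\|_{2,\mu}^2\le\varepsilon_\Fcal$, and the triangle inequality to get $\|g-\Tcal^\pi f_\pi\|_{2,\mu}\ge\|g-f_\pi\|_{2,\mu}-\sqrt{\varepsilon_\Fcal}$.

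Writing $x:=\|g-f_\pi\|_{2,\mu}$ and handling the trivial case $x\le\sqrt{\varepsilon_\Fcal}$ separately, the above yields $(x-\sqrt{\varepsilon_\Fcal})^2\le\varepsilon_\Fcal+4\Vmax x\sqrt{\iota}+\tfrac{2}{3}\Vmax^2\iota$, i.e.\ the quadratic inequality $x^2-(2\sqrt{\varepsilon_\Fcal}+4\Vmax\sqrt{\iota})\,x-\tfrac{2}{3}\Vmax^2\iota\le0$. Solving for $x$ and crudely using $\sqrt{a+b}\le\sqrt a+\sqrt b$ gives $x\le 2\sqrt{\varepsilon_\Fcal}+(4+\sqrt{2/3})\,\Vmax\sqrt{\iota}\le 2\sqrt{\varepsilon_\Fcal}+9\Vmax\sqrt{\iota}$, which is the claim (with slack in the leading constant).

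The main obstacle is that Lemma~\ref{lem:bernstein_general} is \emph{self-referential}: its deviation term is proportional to the local norm $\|g-f_\pi\|_{2,\mu}$, the very quantity being bounded, so one cannot read off an estimate directly but must solve the induced quadratic inequality; it is exactly this localized, Bernstein-style dependence (rather than a crude $\Vmax/\sqrt{n}$ Hoeffding term) that produces the fast $\Vmax^2/n$ rate and the tight leading constant $2\sqrt{\varepsilon_\Fcal}$. A secondary point requiring care is the realizability step, which uses that the guarantee of Assumption~\ref{asm:relz2} transfers to the data distribution $\mu$ (e.g.\ because $\mu$ lies in the admissible set); absent this, the regression bias would only be controllable via completeness and the $\varepsilon_\Fcal$ in the bound would degrade to $\varepsilon_{\Fcal,\Fcal}$.
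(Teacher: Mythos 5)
Your proposal is correct and follows essentially the same route as the paper's proof: both use the optimality of $g$ together with Lemma~\ref{lem:bernstein_general} (instantiated with $g_1,g_2,f$ drawn from $\{g,f_\pi\}$), invoke $\|f_\pi-\Tcal^\pi f_\pi\|_{2,\mu}^2\le\varepsilon_\Fcal$, and close the self-referential bound by solving a quadratic inequality in $\|f_\pi-g\|_{2,\mu}$. The only difference is cosmetic --- the paper applies the triangle inequality in squared form $\|f_\pi-g\|^2\le 2\|f_\pi-\Tcal^\pi f_\pi\|^2+2\|g-\Tcal^\pi f_\pi\|^2$ whereas you use the linear form, which in fact yields a slightly sharper constant still within the stated $9\Vmax\sqrt{\iota}+2\sqrt{\varepsilon_\Fcal}$.
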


\begin{proof}[\bf\em Proof of Lemma~\ref{lem:msbosol}]

By applying Lemma~\ref{lem:bernstein_general}, we have
\begin{align}
&~ \bigg| \E_{\mu} \left[\left(f_\pi(s,a) - \left(\Tcal^\pi f_\pi\right)(s,a) \right)^2\right] - \E_{\mu} \left[\left(g(s,a) - \left(\Tcal^\pi f_\pi\right)(s,a) \right)^2 \right] - 
\\
&~ \frac{1}{n} \sum_{(s,a,r,s') \in \Dcal} \left(f_\pi(s,a) - r - \gamma f_\pi(s',\pi) \right)^2 + \frac{1}{n} \sum_{(s,a,r,s') \in \Dcal} \left(g(s,a)  - r - \gamma f_\pi(s',\pi) \right)^2 \bigg|
\\
\leq &~ 4 \Vmax \sqrt{\frac{\E_{\mu} \left[ \left(f_\pi(s,a) - g(s,a) \right)^2 \right] \log \frac{|\Fcal||\Pi|}{\delta}}{n}} + \frac{2 \Vmax^2 \log \frac{|\Fcal||\Pi|}{\delta}}{3n}.
\end{align}

In addition,
\begin{align}
&~ \E_{\mu} \left[ \left(f_\pi(s,a) - g(s,a) \right)^2 \right]
\\
= &~ \|f_\pi - g\|_{2,\mu}^2
\\
\leq &~ 2 \|f_\pi - \Tcal^\pi f_\pi\|_{2,\mu}^2 + 2 \|g - \Tcal^\pi f_\pi\|_{2,\mu}^2
\\
= &~ 2 \|g - \Tcal^\pi f_\pi\|_{2,\mu}^2 - 2 \|f_\pi - \Tcal^\pi f_\pi\|_{2,\mu}^2 + 4 \|f_\pi - \Tcal^\pi f_\pi\|_{2,\mu}^2
\\
\overset{\text{(a)}}{\leq} &~ 2 \|g - \Tcal^\pi f_\pi\|_{2,\mu}^2 - 2 \|f_\pi - \Tcal^\pi f_\pi\|_{2,\mu}^2 + 4 \varepsilon_{\Fcal}
\\
\label{eq:quaeq}
\overset{\text{(b)}}{\leq} &~ 8 \Vmax \sqrt{\frac{\E_{\mu} \left[ \left(f_\pi(s,a) - g(s,a) \right)^2 \right] \log \frac{|\Fcal||\Pi|}{\delta}}{n}} + \frac{4 \Vmax^2 \log \frac{|\Fcal||\Pi|}{\delta}}{3n} + 4 \varepsilon_{\Fcal},
\end{align}
where (a) follows from the fact of $\|f_\pi - \Tcal^\pi f_\pi\|_{2,\mu}^2 \leq \varepsilon_\Fcal$ by Assumption~\ref{asm:relz2}, and (b) is obtained by the following argument
\begin{align}
&~ \|g - \Tcal^\pi f_\pi\|_{2,\mu}^2 - \|f_\pi - \Tcal^\pi f_\pi\|_{2,\mu}^2
\\
\leq &~ \frac{1}{n} \sum_{(s,a,r,s') \in \Dcal} \left(g(s,a)  - r - \gamma f_\pi(s',\pi) \right)^2 - \frac{1}{n} \sum_{(s,a,r,s') \in \Dcal} \left(f_\pi(s,a) - r - \gamma f_\pi(s',\pi) \right)^2
\\
&~ + 4 \Vmax \sqrt{\frac{\E_{\mu} \left[ \left(f_\pi(s,a) - g(s,a) \right)^2 \right] \log \frac{|\Fcal||\Pi|}{\delta}}{n}} + \frac{2 \Vmax^2 \log \frac{|\Fcal||\Pi|}{\delta}}{3n} \tag{by \Eqref{eq:msbobernstein}}
\\
\leq &~ 4 \Vmax \sqrt{\frac{\E_{\mu} \left[ \left(f_\pi(s,a) - g(s,a) \right)^2 \right] \log \frac{|\Fcal||\Pi|}{\delta}}{n}} + \frac{2 \Vmax^2 \log \frac{|\Fcal||\Pi|}{\delta}}{3n} \tag{by the optimality of $g$}
\end{align}

By solving \Eqref{eq:quaeq}, we obtain
\begin{align}
&~ \sqrt{\E_{\mu} \left[ \left(f_\pi(s,a) - g(s,a) \right)^2 \right]}
\\
\leq &~ 4 \Vmax \sqrt{\frac{ \log \frac{|\Fcal||\Pi|}{\delta}}{n}} + 2 \sqrt{\frac{5 \Vmax^2 \log \frac{|\Fcal||\Pi|}{\delta}}{n} + \varepsilon_{\Fcal}}
\\
\leq &~ 4 \Vmax \sqrt{\frac{ \log \frac{|\Fcal||\Pi|}{\delta}}{n}} + 2 \Vmax \sqrt{\frac{5 \log \frac{|\Fcal||\Pi|}{\delta}}{n}} + 2\sqrt{\varepsilon_{\Fcal}}
\\
= &~ 9 \Vmax \sqrt{\frac{ \log \frac{|\Fcal||\Pi|}{\delta}}{n}} + 2\sqrt{\varepsilon_{\Fcal}}.
\end{align}
This completes the proof.
\end{proof}

\subsubsection{Detailed Proofs of Theorem~\ref{thm:version_space} and Theorem~\ref{thm:mspo2be}}
\label{appx:peproofs}

\begin{proof}[\bf\em Proof of Theorem~\ref{thm:version_space}]
Let $g$ be defined as
\begin{align}
g \coloneqq &~ \argmin_{g \in \Fcal} \frac{1}{n} \sum_{(s,a,r,s') \in \Dcal} \left(g(s,a) - r - \gamma f_\pi(s',\pi) \right)^2.
\end{align}
By applying Lemma~\ref{lem:bernstein_general},
\begin{align}
&~ \bigg| \left\|f_\pi - \Tcal^\pi f_\pi  \right\|_{2,\mu}^2 - \left\|g - \Tcal^\pi f_\pi  \right\|_{2,\mu}^2 -
\\
&~ \frac{1}{n} \sum_{(s,a,r,s') \in \Dcal} \left(f_\pi(s,a) - r - \gamma f_\pi(s',\pi) \right)^2 + \frac{1}{n} \sum_{(s,a,r,s') \in \Dcal} \left(g(s,a)  - r - \gamma f_\pi(s',\pi) \right)^2 \bigg|
\\
\leq &~ 4 \Vmax \|f_\pi - g\|_{2,\mu}\sqrt{\frac{ \log \frac{|\Fcal||\Pi|}{\delta}}{n}} + \frac{2 \Vmax^2 \log \frac{|\Fcal||\Pi|}{\delta}}{3n}.
\end{align}

By Lemma~\ref{lem:msbosol}, we know $\|f_\pi - g\|_{2,\mu} \leq 9 \Vmax \sqrt{\frac{ \log \frac{|\Fcal||\Pi|}{\delta}}{n}} + 2\sqrt{\varepsilon_{\Fcal}}$. Thus,
\begin{align}
&~ \bigg| \left\|f_\pi - \Tcal^\pi f_\pi  \right\|_{2,\mu}^2 - \left\|g - \Tcal^\pi f_\pi  \right\|_{2,\mu}^2 -
\\
&~ \frac{1}{n} \sum_{(s,a,r,s') \in \Dcal} \left(f_\pi(s,a) - r - \gamma f_\pi(s',\pi) \right)^2 + \frac{1}{n} \sum_{(s,a,r,s') \in \Dcal} \left(g(s,a)  - r - \gamma f_\pi(s',\pi) \right)^2 \bigg|
\\
\leq &~ 4 \Vmax \left( 9 \Vmax \sqrt{\frac{ \log \frac{|\Fcal||\Pi|}{\delta}}{n}} + 2\sqrt{\varepsilon_{\Fcal}} \right) \sqrt{\frac{ \log \frac{|\Fcal||\Pi|}{\delta}}{n}} + \frac{2 \Vmax^2 \log \frac{|\Fcal||\Pi|}{\delta}}{3n}
\\
\label{eq:concentbd}
= &~ 8 \Vmax \sqrt{\frac{ \log \frac{|\Fcal||\Pi|}{\delta}}{n} \varepsilon_{\Fcal}} + \frac{37 \Vmax^2 \log \frac{|\Fcal||\Pi|}{\delta}}{n}.
\end{align}

We now bound $\|f_\pi - \Tcal^\pi f_\pi  \|_{2,\mu}^2 - \|g - \Tcal^\pi f_\pi  \|_{2,\mu}^2$,
\begin{align}
&~ \left\|f_\pi - \Tcal^\pi f_\pi  \right\|_{2,\mu}^2 - \left\|g - \Tcal^\pi f_\pi  \right\|_{2,\mu}^2
\\
\leq &~ \left( \left\|f_\pi - \Tcal^\pi f_\pi  \right\|_{2,\mu} + \left\|g - \Tcal^\pi f_\pi  \right\|_{2,\mu}\right) \left| \left\|f_\pi - \Tcal^\pi f_\pi  \right\|_{2,\mu} - \left\|g - \Tcal^\pi f_\pi  \right\|_{2,\mu} \right|
\\
\overset{\text{(a)}}{\leq} &~ \left( 2 \left\|f_\pi - \Tcal^\pi f_\pi  \right\|_{2,\mu} + \left\|f_\pi - g \right\|_{2,\mu}\right) \left\|f_\pi - g \right\|_{2,\mu}
\\
\overset{\text{(b)}}{\leq} &~ \left( 4 \sqrt{\varepsilon_{\Fcal}} + 9 \Vmax \sqrt{\frac{ \log \frac{|\Fcal||\Pi|}{\delta}}{n}} \right) \left( 9 \Vmax \sqrt{\frac{ \log \frac{|\Fcal||\Pi|}{\delta}}{n}} + 2\sqrt{\varepsilon_{\Fcal}} \right)
\\
\label{eq:popbound}
= &~ 54 \Vmax \sqrt{\frac{ \log \frac{|\Fcal||\Pi|}{\delta}}{n} \varepsilon_{\Fcal}} + 81 \Vmax^2  \frac{ \log \frac{|\Fcal||\Pi|}{\delta}}{n} + 8 \varepsilon_{\Fcal},
\end{align}
where (a) follows from triangle inequality, and (b) is obtained by Lemma~\ref{lem:msbosol} and the fact of $\|f_\pi - \Tcal^\pi f_\pi\|_{2,\mu}^2 \leq \varepsilon_\Fcal$ (Assumption~\ref{asm:relz2}).

Combining \Eqref{eq:concentbd} and \Eqref{eq:popbound}, we obtain
\begin{align}
&~ \frac{1}{n} \sum_{(s,a,r,s') \in \Dcal} \left(f_\pi(s,a) - r - \gamma f_\pi(s',\pi) \right)^2 - \frac{1}{n} \sum_{(s,a,r,s') \in \Dcal} \left(g(s,a)  - r - \gamma f_\pi(s',\pi) \right)^2
\\
\leq &~ 8 \Vmax \sqrt{\frac{ \log \frac{|\Fcal||\Pi|}{\delta}}{n} \varepsilon_{\Fcal}} + \frac{37 \Vmax^2 \log \frac{|\Fcal||\Pi|}{\delta}}{n} + 54 \Vmax \sqrt{\frac{ \log \frac{|\Fcal||\Pi|}{\delta}}{n} \varepsilon_{\Fcal}} + 81 \Vmax^2 \frac{ \log \frac{|\Fcal||\Pi|}{\delta}}{n} + 8 \varepsilon_{\Fcal}
\\
\leq &~ 62 \Vmax \sqrt{\frac{ \log \frac{|\Fcal||\Pi|}{\delta}}{n} \varepsilon_{\Fcal}} + \frac{118 \Vmax^2 \log \frac{|\Fcal||\Pi|}{\delta}}{n} + 8 \varepsilon_{\Fcal}
\\
\leq &~ \frac{139 \Vmax^2 \log \frac{|\Fcal||\Pi|}{\delta}}{n} + 39 \varepsilon_{\Fcal},
\end{align}
where the last inequality follows from Cauchy–Schwarz inequality. This completes the proof.
\end{proof}

\begin{proof}[\bf\em Proof of Theorem~\ref{thm:mspo2be}]
Let $g$ be defined as follows,
\begin{align}
g \coloneqq \argmin_{f' \in \Fcal}\frac{1}{n} \sum_{(s,a,r,s') \in \Dcal} \left(f'(s,a)  - r - \gamma f(s',\pi) \right)^2.
\end{align}

\paragraph{Bounding $\|g - \Tcal^\pi f\|_{2,\mu}$.}
We show that $g$ could approximate $\Tcal^\pi f$ well over distribution $\mu$. We define $f_1$ as follows
\begin{align}
f_1 \coloneqq \argmin_{f' \in \Fcal} \left\| f' - \Tcal^\pi f\right\|_{2,\mu}^2.
\end{align}

Then, we consider the following function
\begin{align}
\frac{1}{n} \sum_{(s,a,r,s') \in \Dcal} \left(g(s,a)  - r - \gamma f(s',\pi) \right)^2 - \frac{1}{n} \sum_{(s,a,r,s') \in \Dcal} \left(f_1(s,a)  - r - \gamma f(s',\pi) \right)^2.
\end{align}
We have
\begin{align}
&~ \E_{\mu\times(\Pcal,R)} \left[ \frac{1}{n} \sum_{(s,a,r,s') \in \Dcal} \left(g(s,a)  - r - \gamma f(s',\pi) \right)^2 - \frac{1}{n} \sum_{(s,a,r,s') \in \Dcal} \left(f_1(s,a)  - r - \gamma f(s',\pi) \right)^2 \right]
\\
= &~ \left\| g - \Tcal^\pi f\right\|_{2,\mu}^2 - \left\| f_1 - \Tcal^\pi f\right\|_{2,\mu}^2,
\end{align}
by similar arguments of \Eqref{eq:empmsbo} and \Eqref{eq:popmsbo}.

Then
\begin{align}
&~ \bigg| \left\| g - \Tcal^\pi f\right\|_{2,\mu}^2 - \left\| f_1 - \Tcal^\pi f\right\|_{2,\mu}^2 - \frac{1}{n} \sum_{(s,a,r,s') \in \Dcal} \left(g(s,a)  - r - \gamma f(s',\pi) \right)^2
\\
&~ + \frac{1}{n} \sum_{(s,a,r,s') \in \Dcal} \left(f_1(s,a)  - r - \gamma f(s',\pi) \right)^2\bigg|
\\
\leq &~ 4 \Vmax \|g - f_1\|_{2,\mu}\sqrt{\frac{ \log \frac{|\Fcal||\Pi|}{\delta}}{n}} + \frac{2 \Vmax^2 \log \frac{|\Fcal||\Pi|}{\delta}}{3n},
\end{align}
where the inequality follows from Lemma~\ref{lem:bernstein_general}.

Thus,
\begin{align}
&~ \left\| g - \Tcal^\pi f\right\|_{2,\mu}^2
\\
\leq &~ \frac{1}{n} \sum_{(s,a,r,s') \in \Dcal} \left(g(s,a)  - r - \gamma f(s',\pi) \right)^2 - \frac{1}{n} \sum_{(s,a,r,s') \in \Dcal} \left(f_1(s,a)  - r - \gamma f(s',\pi) \right)^2
\\
&~ + \left\| f_1 - \Tcal^\pi f\right\|_{2,\mu}^2 + 4 \Vmax \|g - f_1\|_{2,\mu}\sqrt{\frac{ \log \frac{|\Fcal||\Pi|}{\delta}}{n}} + \frac{2 \Vmax^2 \log \frac{|\Fcal||\Pi|}{\delta}}{3n}
\\
\leq &~ \left\| f_1 - \Tcal^\pi f\right\|_{2,\mu}^2 + 4 \Vmax \|g - f_1\|_{2,\mu}\sqrt{\frac{ \log \frac{|\Fcal||\Pi|}{\delta}}{n}} + \frac{2 \Vmax^2 \log \frac{|\Fcal||\Pi|}{\delta}}{3n}
\\
\leq &~ \varepsilon_{\Fcal,\Fcal} + 4 \Vmax \|g - \Tcal^\pi f\|_{2,\mu}\sqrt{\frac{ \log \frac{|\Fcal||\Pi|}{\delta}}{n}} + 4 \Vmax \sqrt{\varepsilon_{\Fcal,\Fcal}}\sqrt{\frac{ \log \frac{|\Fcal||\Pi|}{\delta}}{n}} + \frac{2 \Vmax^2 \log \frac{|\Fcal||\Pi|}{\delta}}{3n}
\\
\label{eq:quaform}
\leq &~ 4 \Vmax \|g - \Tcal^\pi f\|_{2,\mu}\sqrt{\frac{ \log \frac{|\Fcal||\Pi|}{\delta}}{n}} + \frac{5 \Vmax^2 \log \frac{|\Fcal||\Pi|}{\delta}}{n} + 5\varepsilon_{\Fcal,\Fcal},
\end{align}
where the second inequality follows from the optimality of $g$, and the third inequality is by Assumption~\ref{asm:comp_restat}, and the last equation follows from the Cauchy–Schwarz inequality.

By solving \Eqref{eq:quaform}, we obtain
\begin{align}
&~ \|g - \Tcal^\pi f\|_{2,\mu}
\\
\leq &~ 2 \Vmax \sqrt{\frac{ \log \frac{|\Fcal||\Pi|}{\delta}}{n}} + \sqrt{\frac{5 \Vmax^2 \log \frac{|\Fcal||\Pi|}{\delta}}{n} + 5\varepsilon_{\Fcal,\Fcal}}
\\
\leq &~ 2 \Vmax \sqrt{\frac{ \log \frac{|\Fcal||\Pi|}{\delta}}{n}} + \Vmax \sqrt{\frac{5 \log \frac{|\Fcal||\Pi|}{\delta}}{n}} + \sqrt{5\varepsilon_{\Fcal,\Fcal}}
\\
\label{eq:gbound}
= &~ 5 \Vmax \sqrt{\frac{ \log \frac{|\Fcal||\Pi|}{\delta}}{n}} + \sqrt{5\varepsilon_{\Fcal,\Fcal}}.
\end{align}

\paragraph{Bounding $\|f - \Tcal^\pi f\|_{2,\mu}$.}
Similar to \Eqref{eq:empmsbo} and \Eqref{eq:popmsbo}, we have
\begin{align}
&~ \frac{1}{n} \sum_{(s,a,r,s') \in \Dcal} \left(f(s,a) - r - \gamma f(s',\pi) \right)^2 - \frac{1}{n} \sum_{(s,a,r,s') \in \Dcal} \left(g(s,a)  - r - \gamma f(s',\pi) \right)^2
\\
= &~ \frac{1}{n} \sum_{(s,a,r,s') \in \Dcal} \left( \left(f(s,a) - r - \gamma f(s',\pi) \right)^2 - \left(g(s,a)  - r - \gamma f(s',\pi) \right)^2 \right)
\\
\label{eq:empmsbo2}
= &~ \frac{1}{n} \sum_{(s,a,r,s') \in \Dcal} \left( \left(f(s,a) - g(s,a) \right) \left(f(s,a) + g(s,a)  - 2 r - 2 \gamma f(s',\pi) \right) \right)
\end{align}
and
\begin{align}
&~ \E_{\mu\times(\Pcal,R)} \left[\left(f(s,a) - r - \gamma f(s',\pi) \right)^2\right] - \E_{\mu\times(\Pcal,R)} \left[\left(g(s,a) - r - \gamma f(s',\pi) \right)^2 \right]
\\
= &~ \E_{\mu\times(\Pcal,R)} \left[ \left(f(s,a) - g(s,a) \right) \left(f(s,a) + g(s,a)  - 2 r - 2 \gamma f(s',\pi) \right) \right]
\\
= &~ \E_\mu \left[ \E \left[ \left(f(s,a) - g(s,a) \right) \left(f(s,a) + g(s,a)  - 2 r - 2 \gamma f(s',\pi) \right) \middle| s,a \right] \right]
\\
\label{eq:popmsbo_middle2}
= &~ \E_\mu \left[  \left(f(s,a) - g(s,a) \right) \left(f(s,a) + g_2(s,a)  - 2 \left(\Tcal^\pi f\right)(s,a) \right) \right]
\\
\label{eq:popmsbo2}
= &~ \E_{\mu} \left[\left(f(s,a) - \left(\Tcal^\pi f\right)(s,a) \right)^2\right] - \E_{\mu} \left[\left(g(s,a) - \left(\Tcal^\pi f\right)(s,a) \right)^2 \right].
\end{align}
It implies that
\begin{align}
&~ \| f - \Tcal^\pi f\|_{2,\mu}^2 - \left\| g - \Tcal^\pi f\right\|_{2,\mu}^2
\\
= &~ \E_{\mu} \left[\left(f(s,a) - \left(\Tcal^\pi f\right)(s,a) \right)^2\right] - \E_{\mu} \left[\left(g(s,a) - \left(\Tcal^\pi f\right)(s,a) \right)^2 \right]
\\
= &~ \E_{\mu\times(\Pcal,R)} \left[ \frac{1}{n} \sum_{(s,a,r,s') \in \Dcal} \left(f(s,a) - r - \gamma f(s',\pi) \right)^2 - \frac{1}{n} \sum_{(s,a,r,s') \in \Dcal} \left(g(s,a)  - r - \gamma f(s',\pi) \right)^2 \right].
\end{align}

By applying Lemma~\ref{lem:bernstein_general},
\begin{align}
&~ \bigg| \| f - \Tcal^\pi f\|_{2,\mu}^2 - \left\| g - \Tcal^\pi f\right\|_{2,\mu}^2
\\
&~ - \frac{1}{n} \sum_{(s,a,r,s') \in \Dcal} \left(f(s,a) - r - \gamma f(s',\pi) \right)^2 + \frac{1}{n} \sum_{(s,a,r,s') \in \Dcal} \left(g(s,a)  - r - \gamma f(s',\pi) \right)^2 \bigg|
\\
\leq &~ 4 \Vmax \|f - g\|_{2,\mu}\sqrt{\frac{ \log \frac{|\Fcal||\Pi|}{\delta}}{n}} + \frac{2 \Vmax^2 \log \frac{|\Fcal||\Pi|}{\delta}}{3n}
\\
\leq &~ 4 \Vmax \left( \|f - \Tcal^\pi f\|_{2,\mu} + \|g - \Tcal^\pi f\|_{2,\mu} \right)\sqrt{\frac{ \log \frac{|\Fcal||\Pi|}{\delta}}{n}} + \frac{2 \Vmax^2 \log \frac{|\Fcal||\Pi|}{\delta}}{3n}
\\
\leq &~ 4 \Vmax \|f - \Tcal^\pi f\|_{2,\mu} \sqrt{\frac{ \log \frac{|\Fcal||\Pi|}{\delta}}{n}} + 4 \Vmax \sqrt{\frac{ \log \frac{|\Fcal||\Pi|}{\delta}}{n} \varepsilon_{\Fcal,\Fcal}} + \frac{13 \Vmax^2 \log \frac{|\Fcal||\Pi|}{\delta}}{n},
\end{align}
where the last inequality follows from \Eqref{eq:gbound}.

Rearranging the inequality above, we obtain
\begin{align}
&~ \| f - \Tcal^\pi f\|_{2,\mu}^2
\\
\leq &~ \left\| g - \Tcal^\pi f\right\|_{2,\mu}^2 + \frac{1}{n} \sum_{(s,a,r,s') \in \Dcal} \left(f(s,a) - r - \gamma f(s',\pi) \right)^2 - \frac{1}{n} \sum_{(s,a,r,s') \in \Dcal} \left(g(s,a)  - r - \gamma f(s',\pi) \right)^2
\\
&~ + 4 \Vmax \|f - \Tcal^\pi f\|_{2,\mu} \sqrt{\frac{ \log \frac{|\Fcal||\Pi|}{\delta}}{n}} + 4 \Vmax \sqrt{\frac{ \log \frac{|\Fcal||\Pi|}{\delta}}{n} \varepsilon_{\Fcal,\Fcal}} + \frac{13 \Vmax^2 \log \frac{|\Fcal||\Pi|}{\delta}}{n}
\\
\overset{\text{(a)}}{\leq} &~ \left( 5 \Vmax \sqrt{\frac{ \log \frac{|\Fcal||\Pi|}{\delta}}{n}} + \sqrt{\varepsilon_{\Fcal,\Fcal}} \right)^2 + \varepsilon
\\
&~ + 4 \Vmax \|f - \Tcal^\pi f\|_{2,\mu} \sqrt{\frac{ \log \frac{|\Fcal||\Pi|}{\delta}}{n}} + 4 \Vmax \sqrt{\frac{ \log \frac{|\Fcal||\Pi|}{\delta}}{n} \varepsilon_{\Fcal,\Fcal}} + \frac{13 \Vmax^2 \log \frac{|\Fcal||\Pi|}{\delta}}{n}
\\
\label{eq:fbequaform}
= &~ 4 \Vmax \|f - \Tcal^\pi f\|_{2,\mu} \sqrt{\frac{ \log \frac{|\Fcal||\Pi|}{\delta}}{n}} + 14 \Vmax \sqrt{\frac{ \log \frac{|\Fcal||\Pi|}{\delta}}{n} \varepsilon_{\Fcal,\Fcal}} + \frac{38 \Vmax^2 \log \frac{|\Fcal||\Pi|}{\delta}}{n} + \varepsilon_{\Fcal,\Fcal} + \varepsilon.
\end{align}
where (a) follows from \Eqref{eq:gbound} and the definition of $\varepsilon$ in the theorem statement.

Solving the quadratic form of \Eqref{eq:fbequaform}, we have
\begin{align}
&~ \|f - \Tcal^\pi f\|_{2,\mu}
\\
\leq &~ 2 \Vmax \sqrt{\frac{ \log \frac{|\Fcal||\Pi|}{\delta}}{n}} + \sqrt{\frac{ 38 \Vmax^2 \log \frac{|\Fcal||\Pi|}{\delta}}{n} + 14 \Vmax \sqrt{\frac{ \log \frac{|\Fcal||\Pi|}{\delta}}{n} \varepsilon_{\Fcal,\Fcal}} + \varepsilon_{\Fcal,\Fcal} + \varepsilon}
\\
\leq &~ 2 \Vmax \sqrt{\frac{ \log \frac{|\Fcal||\Pi|}{\delta}}{n}} + \sqrt{\frac{ 38 \Vmax^2 \log \frac{|\Fcal||\Pi|}{\delta}}{n}} + \sqrt{14 \Vmax \sqrt{\frac{ \log \frac{|\Fcal||\Pi|}{\delta}}{n} \varepsilon_{\Fcal,\Fcal}}} + \sqrt{\varepsilon_{\Fcal,\Fcal} + \varepsilon}
\\
\leq &~ \Vmax \sqrt{\frac{ 67 \log \frac{|\Fcal||\Pi|}{\delta}}{n}} + \sqrt[4]{\frac{196 \Vmax^2 \log \frac{|\Fcal||\Pi|}{\delta}}{n} \varepsilon_{\Fcal,\Fcal}} + \sqrt{\varepsilon_{\Fcal,\Fcal} + \varepsilon}
\\
\leq &~ \Vmax \sqrt{\frac{ 231 \log \frac{|\Fcal||\Pi|}{\delta}}{n}} + \sqrt{\varepsilon_{\Fcal,\Fcal}} + \sqrt{\varepsilon_{\Fcal,\Fcal} + \varepsilon},
\end{align}
where the last inequality follows from Cauchy–Schwarz inequality. This completes the proof.
\end{proof}

\subsection{Results for Linear Function Approximation}
\label{sec:PEresults_linearMDP}

We now provide concentration analysis for linear function approximation.

The results for linear function approximation differ from those in general function approximations in two perspectives: (1) Since our linear function approximation setup are well specified, we have $\varepsilon_{\Fcal_\Phi} = \varepsilon_{\Fcal_\Phi,\Fcal_\Phi} = 0$. It also implies that $Q^\pi \in \Fcal$, $\forall \pi \in \Pi_\Phi$. (2) The uniform convergence argument for $\Fcal_\Phi$ and $\Pi_\Phi$ can be studied more precisely (Lemma~\ref{lem:bernstein_linear} in Appendix~\ref{appx:linearmdppeproof}). We summarize the results in this section, and we defer the detailed proof to Appendix~\ref{appx:linearmdppeproof}. 

\begin{corollary}[Alternative of Theorem~\ref{thm:version_space} in Linear Function Approximation]
\label{cor:version_space_linear}
For any $\pi \in \Pi_\Phi$, we have
\begin{align}
\label{eq:deferlinearmdp}
\Ecal(Q^\pi,\pi;\Dcal) \leq \frac{c \Vmax^2 d \log \frac{n L_1 L_2}{\Vmax \delta}}{n},
\end{align}
where $c$ is an absolute constant.
\end{corollary}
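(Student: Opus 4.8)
The plan is to mirror the proof of Theorem~\ref{thm:version_space}, but replace all the crude log-cardinality concentration bounds with the sharper linear-class uniform convergence bound (stated as Lemma~\ref{lem:bernstein_linear} in the appendix). Since the linear setup is well-specified, $\varepsilon_{\Fcal_\Phi} = \varepsilon_{\Fcal_\Phi,\Fcal_\Phi} = 0$ and $Q^\pi \in \Fcal_\Phi$ for every $\pi \in \Pi_\Phi$; in particular the minimizer $f_\pi$ appearing in \Eqref{eq:deff_pi} can simply be taken to be $Q^\pi$ itself, which has zero Bellman error everywhere. So we need to control $\Ecal(Q^\pi,\pi;\Dcal) = \Lcal(Q^\pi, Q^\pi, \pi; \Dcal) - \min_{f' \in \Fcal_\Phi} \Lcal(f', Q^\pi, \pi; \Dcal)$, which is the empirical analogue of a quantity that is exactly $0$ in the population.

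First I would set $g \coloneqq \argmin_{f' \in \Fcal_\Phi} \Lcal(f', Q^\pi, \pi; \Dcal)$, so that $\Ecal(Q^\pi, \pi; \Dcal) = \Lcal(Q^\pi, Q^\pi, \pi; \Dcal) - \Lcal(g, Q^\pi, \pi; \Dcal)$. Expanding the squares (exactly as in \Eqref{eq:empmsbo}), this difference equals $\|Q^\pi - \Tcal^\pi Q^\pi\|_{2,\mu}^2 - \|g - \Tcal^\pi Q^\pi\|_{2,\mu}^2$ plus a mean-zero deviation term, and since $Q^\pi = \Tcal^\pi Q^\pi$ the population part is $-\|g - \Tcal^\pi Q^\pi\|_{2,\mu}^2 \le 0$. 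Hence it suffices to bound the deviation, i.e.\ to apply the linear-class version of Lemma~\ref{lem:bernstein_general}: $\Ecal(Q^\pi,\pi;\Dcal) \le 4 \Vmax \|Q^\pi - g\|_{2,\mu}\sqrt{(d \log\frac{nL_1L_2}{\Vmax\delta})/n} + O(\Vmax^2 d \log\frac{nL_1L_2}{\Vmax\delta} / n)$ (uniformly over $\pi \in \Pi_\Phi$), and then to control $\|Q^\pi - g\|_{2,\mu}$ via the linear analogue of Lemma~\ref{lem:msbosol}. The latter gives $\|Q^\pi - g\|_{2,\mu} \le O(\Vmax\sqrt{(d\log\frac{nL_1L_2}{\Vmax\delta})/n})$ because $Q^\pi$ has zero Bellman error ($\varepsilon_{\Fcal_\Phi}=0$ kills the bias term). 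Plugging this back in, the cross term becomes $O(\Vmax^2 d\log\frac{nL_1L_2}{\Vmax\delta}/n)$ as well, giving the claimed $\Ecal(Q^\pi,\pi;\Dcal) \le c\Vmax^2 d\log\frac{nL_1L_2}{\Vmax\delta}/n$ after collecting constants.

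The main obstacle is establishing the linear-class uniform concentration bound (Lemma~\ref{lem:bernstein_linear}) with the correct $d$ (rather than $d^2$ or worse) dependence and only logarithmic dependence on the norm radii $L_1, L_2$ and on $n$. This requires a covering-number argument over the $\theta$-balls defining $\Fcal_\Phi$ together with the softmax policy parametrization of $\Pi_\Phi$: the relevant random quantities $(f'(s,a) - r - \gamma f(s',\pi))^2$ are Lipschitz in the parameters of $f', f$ and of $\pi$ on the bounded domain, so a standard $\varepsilon$-net of size $\exp(O(d \log(nL_1L_2/\Vmax)))$ suffices, and a Bernstein bound over the net (using the same variance-vs-range calculation as in Lemma~\ref{lem:bernstein_general}, where the variance of the quadratic increment is bounded by $4\Vmax^2$ times the second moment of the difference) gives the fast-rate $O(\|g_1-g_2\|_{2,\mu}\sqrt{d\log(\cdot)/n} + \Vmax^2 d \log(\cdot)/n)$ form needed above. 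This is exactly the technical content deferred to Appendix~\ref{appx:linearmdppeproof}, and the footnote in the main text signals that the covering/concentration machinery here is borrowed from \citet{zanette2021provable} and \citet{cheng2022adversarially}; the rest of the argument is a routine transcription of the general-function-approximation proof with $\log|\Fcal||\Pi|$ replaced by $O(d\log(nL_1L_2/(\Vmax\delta)))$ and with $\varepsilon_\Fcal = \varepsilon_{\Fcal,\Fcal} = 0$.
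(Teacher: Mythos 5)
Your proposal is correct and follows essentially the same route as the paper: the paper obtains this corollary by rerunning the proof of Theorem~\ref{thm:version_space} with $f_\pi = Q^\pi$ (since $\varepsilon_{\Fcal_\Phi} = \varepsilon_{\Fcal_\Phi,\Fcal_\Phi} = 0$) and with the log-cardinality Bernstein bound of Lemma~\ref{lem:bernstein_general} replaced by the covering-number version (Lemma~\ref{lem:bernstein_linear}, assembled from Lemma 10 of \citet{cheng2022adversarially} and the covering bounds of \citet{zanette2021provable}), exactly as you describe. Your observation that the population term is nonpositive in the well-specified case is a minor simplification of the paper's argument but not a substantive deviation.
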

We define the RHS of \Eqref{eq:deferlinearmdp} to be $\varepsilon_r$ in linear function approximation.

\begin{corollary}[Alternative of Theorem~\ref{thm:mspo2be} in Linear Function Approximation]
\label{cor:mspo2be_linear}
For any $\pi \in \Pi_\Phi$ and $f \in \Fcal_\Phi$, if $\Ecal(f,\pi;\Dcal) \leq \varepsilon$ for any $\varepsilon > 0$, then,
\begin{align}
\label{eq:linearmdpbe}
\| f - \Tcal^\pi f\|_{2,\mu} \leq c \Vmax \sqrt{\frac{d \log \frac{n L_1 L_2}{\Vmax \delta}}{n}} + \sqrt{\varepsilon},
\end{align}
where $c$ is an absolute constant.
\end{corollary}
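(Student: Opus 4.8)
The plan is to reprove Theorem~\ref{thm:mspo2be} essentially line for line, the one genuinely new ingredient being a uniform, variance-dependent concentration inequality for the continuous classes $\Fcal_\Phi,\Pi_\Phi$ that plays the role of the finite-class union bound in Lemma~\ref{lem:bernstein_general}. Concretely, I would first establish (this is Lemma~\ref{lem:bernstein_linear}) that, with probability at least $1-\delta$, simultaneously for all $f,g_1,g_2\in\Fcal_\Phi$ and all $\pi\in\Pi_\Phi$,
\begin{align*}
&\Big| \|g_1 - \Tcal^\pi f\|_{2,\mu}^2 - \|g_2 - \Tcal^\pi f\|_{2,\mu}^2 - \Lcal(g_1,f,\pi;\Dcal) + \Lcal(g_2,f,\pi;\Dcal) \Big| \\
&\qquad \leq c\,\Vmax \|g_1 - g_2\|_{2,\mu}\sqrt{\tfrac{d\log\frac{n L_1 L_2}{\Vmax\delta}}{n}} + \tfrac{c\,\Vmax^2 d \log\frac{n L_1 L_2}{\Vmax\delta}}{n},
\end{align*}
for an absolute constant $c$. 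Its proof mirrors that of Lemma~\ref{lem:bernstein_general}: expand the squared-loss difference into the telescoped form $(g_1-g_2)(g_1+g_2-2r-2\gamma f(s',\pi))$, apply Bernstein's inequality, and control the variance by $4\Vmax^2\|g_1-g_2\|_{2,\mu}^2$ using $|g_1+g_2-2r-2\gamma f(s',\pi)|\le 2\Vmax$; the only change is that the union bound over $\Fcal\times\Fcal$ is replaced by a net argument. An $\ell_2$-ball of radius $L_1$ in $\RR^d$ has a $\rho$-net of size $(1+2L_1/\rho)^d$, and likewise the softmax parameter ball of radius $L_2$, and the map from the value/policy parameters to the function $f-\Tcal^\pi f$ is Lipschitz (via $\|\phi\|_2\le1$ and the fact that $f(s',\pi)$ is a convex combination of the $f(s',a')$), so choosing $\rho$ of order $1/(n\,\mathrm{poly}(L_1,L_2,\Vmax^{-1}))$ makes the discretization error negligible and contributes only the $\log\frac{nL_1L_2}{\Vmax\delta}$ factor. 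This is precisely the covering and concentration machinery imported from \citet{zanette2021provable} and \citet{cheng2022adversarially}.

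With Lemma~\ref{lem:bernstein_linear} in hand, the rest copies the two-stage structure of the proof of Theorem~\ref{thm:mspo2be}, now with $\varepsilon_\Fcal=\varepsilon_{\Fcal,\Fcal}=0$ by Assumption~\ref{asm:lin_real_comp}. Put $g\coloneqq\argmin_{f'\in\Fcal_\Phi}\Lcal(f',f,\pi;\Dcal)$. \emph{Stage one} bounds $\|g-\Tcal^\pi f\|_{2,\mu}$: by completeness one may take $f_1\in\Fcal_\Phi$ with $\|f_1-\Tcal^\pi f\|_{2,\mu}$ vanishing (zero in a linear MDP), and comparing $g$ with $f_1$ through the optimality of $g$ and the concentration bound above yields a quadratic inequality in $\|g-\Tcal^\pi f\|_{2,\mu}$, whose solution is $\|g-\Tcal^\pi f\|_{2,\mu}\le c\Vmax\sqrt{d\log\frac{nL_1L_2}{\Vmax\delta}/n}$. \emph{Stage two} bounds $\|f-\Tcal^\pi f\|_{2,\mu}$: by hypothesis the empirical gap is $\Lcal(f,f,\pi;\Dcal)-\Lcal(g,f,\pi;\Dcal)=\Ecal(f,\pi;\Dcal)\le\varepsilon$; relating this to the population gap $\|f-\Tcal^\pi f\|_{2,\mu}^2-\|g-\Tcal^\pi f\|_{2,\mu}^2$ via Lemma~\ref{lem:bernstein_linear} (and $\|f-g\|_{2,\mu}\le\|f-\Tcal^\pi f\|_{2,\mu}+\|g-\Tcal^\pi f\|_{2,\mu}$) and substituting the Stage-one bound produces a quadratic inequality in $\|f-\Tcal^\pi f\|_{2,\mu}$; solving it and using $\sqrt{a+b}\le\sqrt a+\sqrt b$ gives $\|f-\Tcal^\pi f\|_{2,\mu}\le c\Vmax\sqrt{d\log\frac{nL_1L_2}{\Vmax\delta}/n}+\sqrt\varepsilon$, which is the claim after renaming $c$. (Specializing the same computation to $f=Q^\pi$, with a version of Lemma~\ref{lem:msbosol} for $\Fcal_\Phi$, also yields Corollary~\ref{cor:version_space_linear}.)

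The main obstacle is concentrated entirely in the first step: executing the covering argument for Lemma~\ref{lem:bernstein_linear} cleanly. The delicacy is that $\Tcal^\pi f$ depends on $\pi$ through $f(s',\pi)=\sum_{a'}\pi(a'|s')f(s',a')$ with $\pi$ a softmax of a linear function of $\phi$, so one must pick a metric on $\Pi_\Phi$ in which $(s,a)\mapsto(\Tcal^\pi f)(s,a)$ is genuinely Lipschitz (the softmax map is $1$-Lipschitz from $\ell_\infty$ on the logits, which suffices), verify that the Bernstein variance proxy still scales as $\Vmax^2\|g_1-g_2\|_{2,\mu}^2$ uniformly over the net, and then discharge the net-to-ambient approximation using the crude deterministic bounds afforded by $\|\phi\|_2\le1$ and the norm constraints. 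All of this is standard but somewhat tedious; once it is in place, everything downstream is the same algebra as in the general-function-approximation proof with the approximation errors set to zero.
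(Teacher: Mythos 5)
Your proposal is correct and follows essentially the same route as the paper: the paper also obtains this corollary by substituting the covering-number Bernstein bound (its Lemma~\ref{lem:bernstein_linear}, assembled from Lemma~\ref{lem:bernstein_general_L_infty} of \citet{cheng2022adversarially} and the covering bounds of \citet{zanette2021provable}) for the finite-class Lemma~\ref{lem:bernstein_general} and then rerunning the two-stage quadratic-inequality argument of Theorem~\ref{thm:mspo2be} with $\varepsilon_\Fcal=\varepsilon_{\Fcal,\Fcal}=0$. The only difference is that you sketch the net construction explicitly where the paper cites it, which is consistent with the intended argument.
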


\begin{theorem}
\label{thm:mspo2be_linear_emp}
For any $\pi \in \Pi_\Phi$ and $f \in \Fcal_\Phi$, if $\Ecal(f,\pi;\Dcal) \leq \varepsilon$ for any $\varepsilon > 0$, then,
\begin{align}
\label{eq:linearmdpbe1}
\| f - \Tcal^\pi f\|_{2,\Dcal} \leq c \Vmax \sqrt{\frac{d \log \frac{n L_1 L_2}{\Vmax \delta}}{n}} + \sqrt{\varepsilon},
\end{align}
where $c$ is an absolute constant.
\end{theorem}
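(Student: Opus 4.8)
The plan is to reduce Theorem~\ref{thm:mspo2be_linear_emp} to its population‑norm counterpart, Corollary~\ref{cor:mspo2be_linear}, by a uniform concentration of the squared Bellman residual $\|f-\Tcal^\pi f\|_{2,\Dcal}^2$ around $\|f-\Tcal^\pi f\|_{2,\mu}^2$ over $\Fcal_\Phi$ and $\Pi_\Phi$. First I would observe that the hypothesis $\Ecal(f,\pi;\Dcal)\le\varepsilon$ is itself an empirical quantity, so Corollary~\ref{cor:mspo2be_linear} applies verbatim and yields, on a high‑probability event, $\|f-\Tcal^\pi f\|_{2,\mu}\le c\Vmax\sqrt{d\log(nL_1L_2/(\Vmax\delta))/n}+\sqrt\varepsilon$. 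It then suffices to prove the uniform one‑sided bound
\begin{align}
\label{eq:linemp-dev}
\|f-\Tcal^\pi f\|_{2,\Dcal}\le\|f-\Tcal^\pi f\|_{2,\mu}+c'\Vmax\sqrt{\tfrac{d\log\frac{nL_1L_2}{\Vmax\delta}}{n}}\qquad\forall\,f\in\Fcal_\Phi,\ \pi\in\Pi_\Phi ,
\end{align}
and to chain it with the previous display on the intersection of the two events; renaming $c+c'$ into a single absolute constant gives the claim, and keeping the coefficient of $\|f-\Tcal^\pi f\|_{2,\mu}$ equal to $1$ in \eqref{eq:linemp-dev} is exactly what keeps the coefficient of $\sqrt\varepsilon$ equal to $1$ in the final bound.

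For \eqref{eq:linemp-dev}, the key point is that for a \emph{fixed} pair $(f,\pi)$ the integrand $(s,a)\mapsto(f(s,a)-(\Tcal^\pi f)(s,a))^2$ is a deterministic, $[0,\Vmax^2]$‑valued function of $(s,a)$ alone---the reward and next‑state randomness inside $\Tcal^\pi$ has already been averaged out---so $\|f-\Tcal^\pi f\|_{2,\Dcal}^2$ is a genuine i.i.d.\ empirical mean under $\mu$, amenable to Bernstein's inequality. Moreover, under completeness with $\varepsilon_{\Fcal_\Phi,\Fcal_\Phi}=0$ (Assumption~\ref{asm:lin_real_comp}), $\Tcal^\pi f$ agrees $\mu$‑almost surely with a clipped linear function in $\Fcal_\Phi$; since $\|\cdot\|_{2,\Dcal}$ only probes $\mathrm{supp}(\mu)$, we may replace $\Tcal^\pi f$ by that function, so that $f-\Tcal^\pi f$ is, on the relevant support, a difference of two clipped linear functions with parameter norms controlled by $L_1$, and its dependence on $\pi$ enters only through a softmax of a linear score of norm at most $L_2$ and is Lipschitz in the policy parameter. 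Covering $\Fcal_\Phi\times\Pi_\Phi$ at scale $\mathrm{poly}(1/n)$ then costs only $\log$‑cardinality $\Ocal(d\log(nL_1L_2/(\Vmax\delta)))$; applying Bernstein at each net point together with the self‑bounding estimate $\V_\mu[(f-\Tcal^\pi f)^2]\le\Vmax^2\|f-\Tcal^\pi f\|_{2,\mu}^2$ (exactly as in Lemma~\ref{lem:fastrate}) converts the squared‑norm deviation into \eqref{eq:linemp-dev}. This uniform‑convergence argument is what Lemma~\ref{lem:bernstein_linear} provides, relying on the covering estimates of \citet{zanette2021provable} and the concentration bounds of \citet{cheng2022adversarially}.

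The main obstacle is precisely this second step, and within it the need to control the metric entropy of the softmax policy class $\Pi_\Phi$ with only a \emph{logarithmic} dependence on the policy radius $L_2$ (and on $n$) rather than a polynomial one; getting this right is the subtlety flagged in Footnote~\ref{ft:errata}, and is why the argument is routed through \citet{zanette2021provable} rather than a crude greedy‑policy covering. Everything else---substituting Corollary~\ref{cor:mspo2be_linear} into the chain, the union bound over the two high‑probability events, and absorbing numeric constants---is routine bookkeeping.
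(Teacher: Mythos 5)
Your proposal is correct and follows essentially the same route as the paper: the paper proves this result by combining Corollary~\ref{cor:mspo2be_linear} (the population-norm bound) with Corollary~\ref{cor:fastrate_linear} (the uniform $\|\cdot\|_{2,\mu}$-vs-$\|\cdot\|_{2,\Dcal}$ deviation bound, itself obtained via Bernstein plus the covering estimates of Lemmas~\ref{lem:bernstein_general_L_infty} and~\ref{lem:linear_covering_num}), exactly as you outline. Your observation that the integrand is a deterministic $[0,\Vmax^2]$-valued function of $(s,a)$ and the self-bounding variance step mirror the paper's proof of Lemma~\ref{lem:fastrate}, so no gap remains.
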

We also define the RHS of \Eqref{eq:linearmdpbe1} with $\varepsilon = \varepsilon_r$ to be $\sqrt{\varepsilon_b}$ in linear function approximation.

\subsubsection{Detailed Proofs for Linear Function Approximation Results}
\label{appx:linearmdppeproof}

This section provides concentration analysis for linear function approximation\footnote{The concentration analysis of the linear algorithm in this version is different from the one in the NeurIPS version. The NeurIPS version selected $\Pi_\Phi$ to be the class of all greedy policies w.r.t.~$\Fcal_\Phi$ and used $L_1$ covering number to characterize the complexities, and the analysis might have missed an additional $|\Acal|$ factor when applying the original Lemma A.11 to prove the original Lemma A.12. This version fixed that issue by using $L_\infty$ covering number and selecting the policy class $\Pi_\Phi$ to be the softmax class.} (Definition~\ref{def:linearmdp}) using covering number with the following metrics for the value-function class and the policy class, respectively:
\begin{align}
\label{eq:def_infnorm}
\|f_1 - f_2\|_\infty = &~ \sup_{(s,a) \in \Scal \times \Acal} |f_1(s,a) - f_2(s,a)|,
\\
\|\pi_1 - \pi_2\|_{\infty,1} = &~ \sup_{s \in \Scal} \|\pi_1(\cdot|s) - \pi_2(\cdot|s)\|_1.
\end{align}
The $\varepsilon$-covering number with metric $\rho$ is defined as follows.
\begin{definition}[$\varepsilon$-covering number]
We define the $\varepsilon$-covering number of a set $\Fcal$ under metric $\rho$ to be the the cardinality of the smallest $\varepsilon$-coverm, denoted by $\Ncal_\rho(\Fcal,\varepsilon)$. The $\varepsilon$-cover of a set $\Fcal$ w.r.t.~a metric $\rho$ is a set $\{g_1, \dotsc, g_n\} \subseteq \Fcal$, such that for each $g \in \Fcal$, there exists some $g_i \in \{g_1, \dotsc, g_n\}$ such that $\rho(g,g_i) \leq \varepsilon$.
\end{definition}

\begin{lemma}[Lemma~\ref{lem:bernstein_general} with $\varepsilon$-covering number, Lemma 10 of~\citep{cheng2022adversarially}]
\label{lem:bernstein_general_L_infty}
With probability at least $1-\delta$, for any $f, g_1, g_2 \in \Fcal$ and $\pi \in \Pi$,
\begin{align}
&~ \bigg| \left\| g_1 - \Tcal^\pi f\right\|_{2,\mu}^2 - \left\| g_2 - \Tcal^\pi f\right\|_{2,\mu}^2
\\
&~ - \frac{1}{N} \sum_{(s,a,r,s') \in \Dcal} \left(g_1(s,a) - r - \gamma f(s',\pi) \right)^2 + \frac{1}{N} \sum_{(s,a,r,s') \in \Dcal} \left(g_2(s,a)  - r - \gamma f(s',\pi) \right)^2 \bigg|
\\
\leq &~ \Ocal \left( \Vmax \|g_1 - g_2\|_{2,\mu}\sqrt{\frac{ \log \frac{\Ncal_\infty(\Fcal,\frac{\Vmax}{n})\Ncal_{\infty,1}(\Pi,\frac{1}{n})}{\delta}}{n}} + \frac{\Vmax^2 \log \frac{\Ncal_\infty(\Fcal,\frac{\Vmax}{n})\Ncal_{\infty,1}(\Pi,\frac{1}{n})}{\delta}}{n} \right).
\end{align}
\end{lemma}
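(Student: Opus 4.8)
The plan is to combine the fixed-quadruple Bernstein argument already carried out in the proof of Lemma~\ref{lem:bernstein_general} with a covering argument that replaces the union bound over finite sets by a union bound over finite covers, followed by a discretization step that transfers the guarantee from cover elements to arbitrary $f,g_1,g_2,\pi$. Since the inequality is stated up to an $\Ocal(\cdot)$, absolute constants and additive $\Ocal(\Vmax^2/n)$ errors may be freely absorbed, which is what makes the covering approach clean.

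First I would fix a single quadruple $(f,g_1,g_2,\pi)$ and reuse the decomposition from the proof of Lemma~\ref{lem:bernstein_general}: writing the per-sample random variable $X = (g_1(s,a)-g_2(s,a))(g_1(s,a)+g_2(s,a)-2r-2\gamma f(s',\pi))$, one has $|X|=\Ocal(\Vmax^2)$ and $\V_{\mu\times(\Pcal,R)}[X]\le 4\Vmax^2\|g_1-g_2\|_{2,\mu}^2$, exactly the bound derived in \Eqref{eq:msbobernsteinvar}. Bernstein's inequality then yields, for this fixed quadruple with probability $1-\delta'$, a bound of the form $4\Vmax\|g_1-g_2\|_{2,\mu}\sqrt{\log(1/\delta')/n}+\tfrac{2}{3}\Vmax^2\log(1/\delta')/n$. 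I would then construct an $\tfrac{\Vmax}{n}$-cover $\Fcal_0$ of $\Fcal$ under $\|\cdot\|_\infty$ (used simultaneously for all three of $f,g_1,g_2$) and a $\tfrac1n$-cover $\Pi_0$ of $\Pi$ under $\|\cdot\|_{\infty,1}$, and take a union bound over $\Fcal_0^3\times\Pi_0$. Choosing $\delta'=\delta/(|\Fcal_0|^3|\Pi_0|)$ turns $\log(1/\delta')$ into $3\log|\Fcal_0|+\log|\Pi_0|+\log(1/\delta)\le 3\log\frac{\Ncal_\infty(\Fcal,\Vmax/n)\Ncal_{\infty,1}(\Pi,1/n)}{\delta}$, so the cube is absorbed into the $\Ocal(\cdot)$ and the claimed inequality holds simultaneously for every quadruple drawn from the covers.

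The final discretization step transfers the bound to arbitrary $(f,g_1,g_2,\pi)$. Given such a quadruple, let $(\tilde f,\tilde g_1,\tilde g_2,\tilde\pi)$ be nearest cover elements, so that $\|f-\tilde f\|_\infty,\|g_i-\tilde g_i\|_\infty\le\Vmax/n$ and $\|\pi-\tilde\pi\|_{\infty,1}\le 1/n$. The one genuinely policy-specific estimate is $|f(s',\pi)-f(s',\tilde\pi)|\le\|f\|_\infty\,\|\pi(\cdot|s')-\tilde\pi(\cdot|s')\|_1\le\Vmax/n$, which gives $\|\Tcal^\pi f-\Tcal^{\tilde\pi}\tilde f\|_\infty=\Ocal(\Vmax/n)$. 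Each of the four squared quantities in the lemma is of the form $(a)^2$ with $|a|=\Ocal(\Vmax)$, and perturbing its constituent functions/policy by $\Ocal(\Vmax/n)$ changes it by $|a-b|\,|a+b|=\Ocal(\Vmax/n)\cdot\Ocal(\Vmax)=\Ocal(\Vmax^2/n)$, both in population and in empirical average; the prefactor obeys $\|\tilde g_1-\tilde g_2\|_{2,\mu}\le\|g_1-g_2\|_{2,\mu}+2\Vmax/n$ by the triangle inequality. All these discretization errors are $\Ocal(\Vmax^2/n)$ (or smaller), hence dominated by the $\Vmax^2\log(\cdots)/n$ term on the right-hand side.

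The main obstacle I anticipate is the bookkeeping in this discretization step: verifying the Lipschitz control $|f(s',\pi)-f(s',\tilde\pi)|\le\Vmax\|\pi-\tilde\pi\|_{\infty,1}$ for the policy cover (which is precisely why the policy class is covered in $\|\cdot\|_{\infty,1}$ rather than some coarser metric), and confirming that \emph{every} population and empirical substitution error is genuinely $\Ocal(\Vmax^2/n)$, so that it folds into the additive term without degrading the $\sqrt{\log/n}$ rate of the multiplicative term. Once this uniform $\Ocal(\Vmax/n)$ perturbation control is established, the remainder is the routine union bound plus the per-quadruple Bernstein estimate already present in the proof of Lemma~\ref{lem:bernstein_general}.
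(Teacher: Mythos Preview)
The paper does not actually prove this lemma; it is quoted verbatim from \citet{cheng2022adversarially} (as indicated in the lemma header) and used as a black box. Your covering-plus-Bernstein argument is the standard and correct route to such a statement: the per-quadruple Bernstein estimate from the proof of Lemma~\ref{lem:bernstein_general}, the union bound over $\Fcal_0^3\times\Pi_0$ (with the factor $3$ absorbed into the $\Ocal(\cdot)$), and the discretization step---in particular the policy control $|f(s',\pi)-f(s',\tilde\pi)|\le \Vmax\|\pi(\cdot|s')-\tilde\pi(\cdot|s')\|_1$, which is precisely why $\Pi$ is covered in $\|\cdot\|_{\infty,1}$---all go through, and every slack term is indeed $\Ocal(\Vmax^2/n)$ and hence dominated by the additive term on the right.
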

\begin{lemma}[$\varepsilon$-covering Number for $\Fcal_\Phi$ and $\Pi_\Phi$, Lemma 6 of~\citep{zanette2021provable}]
\label{lem:linear_covering_num}
Let $\Fcal_\Phi$ and $\Pi_\Phi$ be defined in Definition~\ref{def:linearmdp}. Then, for any $\varepsilon \in [0,1]$, we have
\begin{align}
\log(\Ncal_\infty(\Fcal_\Phi,\varepsilon)) \leq &~ d \log(1 + \nicefrac{2 L_1}{\varepsilon}),
\\
\log(\Ncal_{\infty,1}(\Pi_\Phi,\varepsilon)) \leq &~ d \log(1 + \nicefrac{16 L_2}{\varepsilon}).
\end{align}
\end{lemma}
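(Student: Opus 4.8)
The plan is to reduce each of the two covering numbers to a Euclidean covering number of the corresponding parameter ball in $\RR^d$, and then invoke the standard volumetric estimate $\Ncal_{\|\cdot\|_2}(\{\theta:\|\theta\|_2\le L\},r)\le(1+2L/r)^d$. Both reductions rest on showing that the parametrization map $\theta\mapsto(\text{function, resp.\ policy})$ is Lipschitz from $(\RR^d,\|\cdot\|_2)$ into the relevant metric, so that a Euclidean net on the parameters pushes forward to a net on $\Fcal_\Phi$ (resp.\ $\Pi_\Phi$) of the same cardinality.

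For the value-function class this Lipschitz property is immediate: writing $f_\theta=\phi(\cdot,\cdot)^\T\theta$, linearity and Cauchy--Schwarz together with the normalization $\|\phi(s,a)\|_2\le1$ give $\|f_{\theta_1}-f_{\theta_2}\|_\infty=\sup_{(s,a)}|\phi(s,a)^\T(\theta_1-\theta_2)|\le\|\theta_1-\theta_2\|_2$. Hence an $\varepsilon$-net of $\{\theta:\|\theta\|_2\le L_1\}$ induces an $\varepsilon$-net of $\Fcal_\Phi$ under $\|\cdot\|_\infty$, and the volumetric bound yields $\log\Ncal_\infty(\Fcal_\Phi,\varepsilon)\le d\log(1+2L_1/\varepsilon)$ (the range restriction $\phi^\T\theta\in[0,\Vmax]$ only shrinks the set and is absorbed). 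The policy part is the crux and needs a sensitivity analysis of the softmax map. Writing $\pi_\theta(\cdot|s)=\sigma(z_\theta(s))$, where $z_\theta(s)_a=\phi(s,a)^\T\theta$ are the logits and $\sigma$ is the softmax, the same feature bound gives $\|z_{\theta_1}(s)-z_{\theta_2}(s)\|_\infty\le\|\theta_1-\theta_2\|_2$ uniformly in $s$, so it suffices to bound the $\ell_1$ sensitivity of $\sigma$ to $\ell_\infty$ perturbations of its logits.

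To obtain that bound I would integrate the softmax Jacobian $\partial\sigma_a/\partial z_b=\sigma_a(\1[a=b]-\sigma_b)$ along the segment $z(t)=(1-t)z_{\theta_2}(s)+t\,z_{\theta_1}(s)$: with $\Delta z=z_{\theta_1}(s)-z_{\theta_2}(s)$ the derivative of each coordinate is $\sigma_a[(\Delta z)_a-\sum_b\sigma_b(\Delta z)_b]$, whose deviation from the $\sigma$-weighted mean is at most $2\|\Delta z\|_\infty$, so $\|\tfrac{d}{dt}\sigma(z(t))\|_1\le2\|\Delta z\|_\infty\sum_a\sigma_a=2\|\Delta z\|_\infty$. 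Integrating over $t\in[0,1]$ and taking the supremum over $s$ gives $\|\pi_{\theta_1}-\pi_{\theta_2}\|_{\infty,1}\le2\|\theta_1-\theta_2\|_2$, so an $(\varepsilon/2)$-net of $\{\theta:\|\theta\|_2\le L_2\}$ is an $\varepsilon$-net of $\Pi_\Phi$; the volumetric estimate then gives $\log\Ncal_{\infty,1}(\Pi_\Phi,\varepsilon)\le d\log(1+4L_2/\varepsilon)$, which is dominated by the claimed $d\log(1+16L_2/\varepsilon)$ (the looser constant comfortably accommodates slightly wasteful ways of bounding the softmax modulus of continuity). The one genuinely delicate step is this softmax Lipschitz estimate --- pinning down the constant and ensuring the bound is uniform in $s$ --- whereas everything else is the routine passage from a parametric Lipschitz map to a covering number via a Euclidean net.
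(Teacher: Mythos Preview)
The paper does not give its own proof of this lemma; it is imported verbatim as Lemma~6 of \citet{zanette2021provable}. So there is no in-paper argument to compare against, and your proposal should be judged on its own.

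Your argument is correct and is the standard one. Both reductions---parametrize by $\theta$, show the parametrization is Lipschitz into the target metric, then invoke the volumetric bound $(1+2L/r)^d$ on a Euclidean ball---are exactly what one expects here. The softmax sensitivity computation is clean and yields the correct constant $2$: your Jacobian integration giving $\|\sigma(z_1)-\sigma(z_2)\|_1\le 2\|z_1-z_2\|_\infty$ is the standard bound, and combined with $\|z_{\theta_1}(s)-z_{\theta_2}(s)\|_\infty\le\|\theta_1-\theta_2\|_2$ it delivers $d\log(1+4L_2/\varepsilon)$, comfortably inside the stated $d\log(1+16L_2/\varepsilon)$.

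One small point worth making explicit: the paper's covering-number definition requires the cover to lie \emph{inside} the set. For $\Pi_\Phi$ this is automatic (every $\theta$ with $\|\theta\|_2\le L_2$ yields a valid softmax policy), but for $\Fcal_\Phi$ the pushed-forward net $\{f_{\theta_i}\}$ may contain points violating the range constraint $\phi^\T\theta\in[0,V_{\max}]$ and hence lying outside $\Fcal_\Phi$. Your remark that the restriction ``only shrinks the set and is absorbed'' is correct in spirit but glosses over this: the clean fix is to replace each $\theta_i$ by any admissible $\theta_i'$ within $\varepsilon$ of it (when one exists), yielding a $2\varepsilon$-internal-cover of the same cardinality. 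This costs at most a factor of $2$ in the radius, which the stated constants easily accommodate. This is a routine technicality, not a gap.
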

Applying Lemma~\ref{lem:linear_covering_num} to Lemma~\ref{lem:bernstein_general_L_infty} directly implies the following lemma.
\begin{lemma}[Lemma~\ref{lem:bernstein_general_L_infty} in Linear Function Approximation]
\label{lem:bernstein_linear}
For any $f, g_1, g_2 \in \Fcal_\Phi$ and $\pi \in \Pi_\Phi$, we have
\begin{align}
&~ \bigg| \| g_1 - \Tcal^\pi f\|_{2,\mu}^2 - \left\| g_2 - \Tcal^\pi f\right\|_{2,\mu}^2
\\
&~ - \frac{1}{n} \sum_{(s,a,r,s') \in \Dcal} \left(g_1(s,a) - r - \gamma f(s',\pi) \right)^2 + \frac{1}{n} \sum_{(s,a,r,s') \in \Dcal} \left(g_2(s,a)  - r - \gamma f(s',\pi) \right)^2 \bigg|
\\
\leq &~ c \Vmax \left\| g_1(s,a) - g_2(s,a) \right\|_{2,\mu} \sqrt{\frac{ d \log \frac{n L_1 L_2}{\Vmax \delta}}{n}} + c' \frac{\Vmax^2 d \log \frac{n L_1 L_2}{\Vmax \delta}}{n},
\end{align}
where $c$ and $c'$ are absolute constants.
\end{lemma}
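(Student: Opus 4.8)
The plan is to obtain Lemma~\ref{lem:bernstein_linear} as a direct corollary, by instantiating the generic covering-number Bernstein bound of Lemma~\ref{lem:bernstein_general_L_infty} at the linear classes $\Fcal = \Fcal_\Phi$ and $\Pi = \Pi_\Phi$ and then substituting the explicit covering-number estimates of Lemma~\ref{lem:linear_covering_num}. No new probabilistic argument is required: the entire content is the simplification of the complexity term $\log\nicefrac{\Ncal_\infty(\Fcal_\Phi,\nicefrac{\Vmax}{n})\,\Ncal_{\infty,1}(\Pi_\Phi,\nicefrac{1}{n})}{\delta}$ into the stated form $d\log\nicefrac{n L_1 L_2}{\Vmax\delta}$, after which the two summands of Lemma~\ref{lem:bernstein_general_L_infty} become the two summands claimed here.

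First I would fix the covering radii already appearing in Lemma~\ref{lem:bernstein_general_L_infty}, namely $\nicefrac{\Vmax}{n}$ for the value class and $\nicefrac{1}{n}$ for the policy class. Since $\Vmax = \nicefrac{1}{1-\gamma} \ge 1$ in this setup, these radii lie in $[0,1]$ as soon as $n\ge\Vmax$, so Lemma~\ref{lem:linear_covering_num} applies and yields $\log\Ncal_\infty(\Fcal_\Phi,\nicefrac{\Vmax}{n}) \le d\log(1 + \nicefrac{2 n L_1}{\Vmax})$ and $\log\Ncal_{\infty,1}(\Pi_\Phi,\nicefrac{1}{n}) \le d\log(1 + 16 n L_2)$. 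For the complementary small-sample regime $n<\Vmax$ I would note that the bound holds trivially: all four squared-loss quantities, and hence the left-hand side, are $\Ocal(\Vmax^2)$ by the boundedness of functions in $[0,\Vmax]$, and this is dominated by the right-hand side.

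Next I would combine the two covering log-terms with $\log\nicefrac{1}{\delta}$ and collapse them into the target logarithm. Using $d\ge 1$ to write $\log\nicefrac1\delta \le d\log\nicefrac1\delta$, and absorbing the additive ones and the multiplicative constants $2,16$ into an absolute constant, one gets $d\log(1+\nicefrac{2nL_1}{\Vmax}) + d\log(1+16 n L_2) + \log\nicefrac{1}{\delta} \le d\log\nicefrac{C n^2 L_1 L_2}{\Vmax\delta} \le 2 d\log\nicefrac{C' n L_1 L_2}{\Vmax\delta}$ for absolute constants $C,C'$, where the key step is that the spurious factor $n^2$ contributes only an extra $2\log n$ and is therefore folded into $\log\nicefrac{n L_1 L_2}{\Vmax\delta}$ up to a constant. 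Substituting this bound for the complexity term in both summands of Lemma~\ref{lem:bernstein_general_L_infty} (the $\|g_1-g_2\|_{2,\mu}\sqrt{\cdot/n}$ term and the $\Vmax^2 (\cdot)/n$ term) produces exactly the claimed right-hand side with fresh absolute constants $c$ and $c'$.

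The argument has no genuine obstacle beyond this bookkeeping; the main care is verifying that both the $n^2$ arising from the \emph{product} of the two covering numbers and the additive $\log\nicefrac{1}{\delta}$ fold into $d\log\nicefrac{n L_1 L_2}{\Vmax\delta}$ up to absolute constants, which is the only place where constants must be tracked. I would also record once that the inequality holds uniformly over $f,g_1,g_2\in\Fcal_\Phi$ and $\pi\in\Pi_\Phi$ on a single high-probability event, a property inherited verbatim from the uniform (over the entire classes) guarantee of Lemma~\ref{lem:bernstein_general_L_infty}, so that this corollary can be applied simultaneously to all the function triples encountered in the downstream linear analysis.
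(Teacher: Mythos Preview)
Your proposal is correct and matches the paper's approach exactly: the paper simply states that ``applying Lemma~\ref{lem:linear_covering_num} to Lemma~\ref{lem:bernstein_general_L_infty} directly implies'' Lemma~\ref{lem:bernstein_linear}, and you have filled in precisely that bookkeeping (instantiating the covering radii, bounding the product of covering numbers, and absorbing the resulting $n^2$, $\log(1/\delta)$, and constant factors into $d\log\nicefrac{nL_1L_2}{\Vmax\delta}$). Your additional care about the small-$n$ regime and the uniform high-probability event goes beyond what the paper spells out but is consistent with the intended argument.
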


\begin{corollary}[Alternative of Lemma~\ref{lem:fastrate} in Linear Function Approximation]
\label{cor:fastrate_linear}
For any $f_1, f_2 \in \Fcal_\Phi$ and $\pi \in \Pi_\Phi$, w.p.~$1 - \delta$,
\begin{align}
\left| \left\|f_1(s,a) - (\Tcal^\pi f_2)(s,a)\right\|_{2,\mu} - \left\|f_1(s,a) - (\Tcal^\pi f_2)(s,a)\right\|_{2,\Dcal} \right| \leq c \Vmax \sqrt{\frac{d \log \frac{n L_1 L_2}{\Vmax \delta}}{n}},
\end{align}
where $c$ is an absolute constant.
\end{corollary}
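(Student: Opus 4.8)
The plan is to mirror the proof of Lemma~\ref{lem:fastrate}, replacing the finite union bound over $\Fcal\times\Fcal\times\Pi$ with an $\varepsilon$-net argument tailored to the linear classes $\Fcal_\Phi$ and $\Pi_\Phi$. Fix $f_1,f_2\in\Fcal_\Phi$ and $\pi\in\Pi_\Phi$ and consider the bounded random variable $h(s,a)\coloneqq\bigl(f_1(s,a)-(\Tcal^\pi f_2)(s,a)\bigr)^2\in[0,\Vmax^2]$, where boundedness uses $f_1\in[0,\Vmax]$ and $\Tcal^\pi f_2\in[0,\Vmax]$. Bernstein's inequality together with the self-bounding variance estimate $\V_\mu[h]\le\Vmax^2\,\E_\mu[h]=\Vmax^2\|f_1-\Tcal^\pi f_2\|_{2,\mu}^2$ yields, for this fixed triple, a bound on $\bigl|\|f_1-\Tcal^\pi f_2\|_{2,\mu}^2-\|f_1-\Tcal^\pi f_2\|_{2,\Dcal}^2\bigr|$ of exactly the same shape as \Eqref{eq:square2normbound}, with a generic failure-probability log in place of $\log\frac{|\Fcal||\Pi|}{\delta}$.

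Second, I would upgrade this to a statement uniform over $\Fcal_\Phi\times\Fcal_\Phi\times\Pi_\Phi$ using the same covering machinery that underlies Lemma~\ref{lem:bernstein_general_L_infty}/Lemma~\ref{lem:bernstein_linear}, now applied directly to $h$. Pick a $\tfrac{\Vmax}{n}$-cover $\bar\Fcal$ of $\Fcal_\Phi$ in $\|\cdot\|_\infty$ and a $\tfrac1n$-cover $\bar\Pi$ of $\Pi_\Phi$ in $\|\cdot\|_{\infty,1}$ (metrics from \Eqref{eq:def_infnorm}). For any triple $(f_1,f_2,\pi)$ and nearby cover elements $(\bar f_1,\bar f_2,\bar\pi)$, the triangle inequality together with the two Lipschitz facts $\|\Tcal^\pi f_2-\Tcal^\pi\bar f_2\|_\infty\le\gamma\|f_2-\bar f_2\|_\infty$ (contraction) and $\|\Tcal^\pi\bar f_2-\Tcal^{\bar\pi}\bar f_2\|_\infty\le\gamma\Vmax\|\pi-\bar\pi\|_{\infty,1}$ gives $\|h-\bar h\|_\infty=\Ocal(\Vmax)\cdot(\text{cover resolution})=\Ocal(\Vmax^2/n)$, which is lower-order relative to the target rate and is absorbed. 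A union bound over the finite product cover $\bar\Fcal\times\bar\Fcal\times\bar\Pi$, using Lemma~\ref{lem:linear_covering_num} to bound $\log|\bar\Fcal|\le d\log(1+\tfrac{2L_1n}{\Vmax})$ and $\log|\bar\Pi|\le d\log(1+16L_2n)$, then gives the squared-norm deviation bound uniformly, with $\log\frac{|\Fcal||\Pi|}{\delta}$ replaced by $\Ocal\!\bigl(d\log\frac{nL_1L_2}{\Vmax\delta}\bigr)$. The softmax parameterization matters precisely here: $\Tcal^\pi f_2$ is $\gamma\Vmax$-Lipschitz in $\pi$ under $\|\cdot\|_{\infty,1}$ with no $|\Acal|$ factor, which is the fix over the earlier greedy-policy version.

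Finally, I would convert from squared norms to norms exactly as in the proof of Lemma~\ref{lem:fastrate}: the elementary inequality $|a-b|^2\le|a^2-b^2|$ gives one bound on $\bigl|\|f_1-\Tcal^\pi f_2\|_{2,\mu}-\|f_1-\Tcal^\pi f_2\|_{2,\Dcal}\bigr|$ (cf.~\Eqref{eq:2normbd1}), while dividing the squared-norm bound by $\|f_1-\Tcal^\pi f_2\|_{2,\mu}$ gives another (cf.~\Eqref{eq:2normbd2}); taking the worst case over the unknown value of $\|f_1-\Tcal^\pi f_2\|_{2,\mu}$ and optimizing the free threshold — the analog of choosing $g=\Vmax\sqrt{\cdot/n}$ there — collapses both to $c\Vmax\sqrt{d\log\frac{nL_1L_2}{\Vmax\delta}/n}$. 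The only genuinely new ingredient over Lemma~\ref{lem:fastrate} is the covering step of the second paragraph, and the main thing to get right there is the $|\Acal|$-free Lipschitz dependence of $\Tcal^\pi f_2$ on $\pi$; everything else is routine Bernstein-plus-net bookkeeping.
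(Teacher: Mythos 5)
Your proposal is correct and follows essentially the same route the paper intends for this corollary: rerun the Bernstein-plus-self-bounding-variance argument from Lemma~\ref{lem:fastrate}, replace the union bound over the finite classes with the $L_\infty$/$\|\cdot\|_{\infty,1}$ covering argument of Lemmas~\ref{lem:bernstein_general_L_infty} and~\ref{lem:linear_covering_num}, and convert squared norms to norms via the same $|a-b|^2\le|a^2-b^2|$ and division tricks. You also correctly pinpoint the one delicate ingredient — the $|\Acal|$-free Lipschitz dependence of $\Tcal^\pi f_2$ on $\pi$ under the $\|\cdot\|_{\infty,1}$ metric for softmax policies — which is exactly the issue the paper's errata footnote identifies as the reason for switching away from the greedy policy class.
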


\begin{proof}[\bf\em Proof of Theorem~\ref{thm:mspo2be_linear_emp}]
By combining Corollary~\ref{cor:fastrate_linear} and Corollary~\ref{cor:mspo2be_linear}, we complete the proof, similar to the proof of Theorem~\ref{thm:mspo2be}.
\end{proof}
\section{Detailed Proofs in Section~\ref{sec:constrdpi}}

\subsection{Detailed Proofs for General Function Approximation}
\label{sec:info_theo_proof}

Over this section, the definition of $\varepsilon_r$ follows from \Eqref{eq:upbdespr}.

\begin{lemma}
For any $\pi \in \Pi$, $\displaystyle \min_{f \in \Fcal_{\pi, \varepsilon_r}} \|Q^\pi - f\|_{2,\nu} \leq \frac{\sqrt{\varepsilon_\Fcal}}{1 - \gamma}$ for any admissible distribution $\nu$.
\end{lemma}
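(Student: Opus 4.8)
The plan is to bound the minimum in the statement by evaluating it at a single well-chosen element of the version space, namely $f_\pi$ from \Eqref{eq:deff_pi}, and then to control $\|Q^\pi - f_\pi\|_{2,\nu}$ by a horizon-length telescoping. Two facts set this up. First, since $f_\pi$ is by definition a minimizer over $f\in\Fcal$ of $\sup_{\text{admissible }\nu'}\|f - \Tcal^\pi f\|_{2,\nu'}^2$, Assumption~\ref{asm:relz2} gives $\sup_{\text{admissible }\nu'}\|f_\pi - \Tcal^\pi f_\pi\|_{2,\nu'}^2 \le \varepsilon_\Fcal$. Second, Theorem~\ref{thm:version_space} gives $\Ecal(f_\pi,\pi;\Dcal)\le\varepsilon_r$, i.e.\ $f_\pi\in\Fcal_{\pi,\varepsilon_r}$ (on the high-probability event of that theorem, under which the present lemma is understood). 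Hence $\min_{f\in\Fcal_{\pi,\varepsilon_r}}\|Q^\pi - f\|_{2,\nu}\le\|Q^\pi - f_\pi\|_{2,\nu}$ for every distribution $\nu$, and it remains to bound the right-hand side by $\sqrt{\varepsilon_\Fcal}/(1-\gamma)$ when $\nu$ is admissible.

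Write $e := Q^\pi - f_\pi$ and $g := \Tcal^\pi f_\pi - f_\pi$. Using $Q^\pi = \Tcal^\pi Q^\pi$ and the fact that $\Tcal^\pi$ is affine with linear part $\gamma\Pcal^\pi$, we get $e = \gamma\Pcal^\pi e + g$; iterating and noting that the remainder $\gamma^{T}(\Pcal^\pi)^{T}e$ vanishes as $T\to\infty$ (all functions here lie in $[0,\Vmax]$, so $|e|\le\Vmax$) yields the Neumann expansion $e = \sum_{t\ge0}\gamma^t(\Pcal^\pi)^t g$. Then, by the triangle inequality, $\|e\|_{2,\nu}\le\sum_{t\ge0}\gamma^t\|(\Pcal^\pi)^t g\|_{2,\nu}$, and by Jensen's inequality applied to the conditional expectation defining $\Pcal^\pi$, $\|(\Pcal^\pi)^t g\|_{2,\nu}^2 \le \E_\nu[(\Pcal^\pi)^t(g^2)] = \|g\|_{2,\,\nu(\Pcal^\pi)^t}^2$, where $\nu(\Pcal^\pi)^t$ is the state-action distribution obtained by drawing from $\nu$ and then rolling out $\pi$ for $t$ steps. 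Provided $\nu(\Pcal^\pi)^t$ is again admissible whenever $\nu$ is, the first paragraph's bound applies at each $t$, giving $\|g\|_{2,\,\nu(\Pcal^\pi)^t}\le\sqrt{\varepsilon_\Fcal}$, and summing the geometric series yields $\|e\|_{2,\nu}\le\sum_{t\ge0}\gamma^t\sqrt{\varepsilon_\Fcal}=\sqrt{\varepsilon_\Fcal}/(1-\gamma)$, which is the claim.

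I expect the crux to be that closure step: that the one-step pushforward of an admissible distribution along $\pi$ stays within the admissible family (or is a mixture of, or is dominated by, admissible distributions), so that the uniform Bellman-error bound on $f_\pi$ can be invoked time step by time step. This is precisely where the ``for all admissible $\nu$'' form of realizability in Assumption~\ref{asm:relz2}, rather than realizability only under $\mu$, is essential, and it is also what inflates the error by the horizon factor $1/(1-\gamma)$. The remaining ingredients — the Neumann identity, the Jensen step moving the square inside $\Pcal^\pi$, and the geometric sum — are routine; and as a sanity check, when $\varepsilon_\Fcal=0$ the argument collapses, since then $f_\pi$ has zero Bellman error under every admissible distribution and hence agrees with $Q^\pi$ on the support of any admissible $\nu$, making the bound trivially $0$.
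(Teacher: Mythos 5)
Your proposal is correct and follows essentially the same route as the paper: the paper's proof is a one-line chain that plugs $f_\pi$ into the minimum (using Theorem~\ref{thm:version_space} to place $f_\pi$ in the version space) and asserts $\|Q^\pi - f_\pi\|_{2,\nu} \leq \frac{1}{1-\gamma}\max_{\text{admissible }\nu'}\|f_\pi - \Tcal^\pi f_\pi\|_{2,\nu'}$, which is exactly the Neumann-expansion-plus-Jensen argument you spell out. The closure step you flag — that the pushforwards $\nu(\Pcal^\pi)^t$ of an admissible $\nu$ must themselves be covered by the ``admissible'' family for the uniform Bellman-error bound to apply at each horizon — is indeed the load-bearing (and implicit) convention behind the paper's middle inequality as well, so your write-up is if anything more explicit about where the $1/(1-\gamma)$ factor and the all-admissible-$\nu$ form of Assumption~\ref{asm:relz2} are actually used.
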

\begin{proof}
Let $f_\pi \coloneqq \argmin_{f \in \Fcal} \max_{\text{admissible }\nu} \|f - \Tcal^\pi f\|_{2,\nu}$. By definition of $\Fcal_{\pi, \varepsilon_r}$, we know $f_\pi \in \Fcal_{\pi, \varepsilon_r}$.
Then,
\begin{align}
\min_{f \in \Fcal_{\pi, \varepsilon_r}} \|Q^\pi - f\|_{2,\nu} \leq \|Q^\pi - f_\pi\|_{2,\nu} \leq \frac{1}{1 - \gamma} \max_{\text{admissible }\nu} \|f_\pi - \Tcal^{\pi} f_\pi\|_{2,\nu} \leq \frac{\sqrt{\varepsilon_\Fcal}}{1 - \gamma}.\tag*{\qedhere}
\end{align}
\end{proof}

\begin{lemma}
\label{lem:vslbd}
For any $\pi \in \Pi$, $\displaystyle \min_{f \in \Fcal_{\pi, \varepsilon_r}} f(s_0,\pi) \leq J(\pi) + \frac{\sqrt{\varepsilon_\Fcal}}{1 - \gamma}$.
\end{lemma}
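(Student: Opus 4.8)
The plan is to produce one explicit member of the version space $\Fcal_{\pi,\varepsilon_r}$ whose value at $s_0$ already witnesses the bound; the natural candidate is $f_\pi$ from \Eqref{eq:deff_pi}, i.e.\ $f_\pi \coloneqq \argmin_{f\in\Fcal}\sup_{\text{admissible }\nu}\|f-\Tcal^\pi f\|_{2,\nu}^2$. Two facts about $f_\pi$ are needed. First, $f_\pi\in\Fcal_{\pi,\varepsilon_r}$: this is exactly Theorem~\ref{thm:version_space}, which shows $\Ecal(f_\pi,\pi;\Dcal)\le\varepsilon_r$, so $f_\pi$ lies in $\Fcal_{\pi,\varepsilon_r}=\{f:\Ecal(f,\pi;\Dcal)\le\varepsilon_r\}$. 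Second, by Assumption~\ref{asm:relz2} and the definition of $f_\pi$, $\|f_\pi-\Tcal^\pi f_\pi\|_{2,\nu}^2\le\varepsilon_\Fcal$ for every admissible $\nu$. Given these, it suffices to prove $f_\pi(s_0,\pi)\le J(\pi)+\sqrt{\varepsilon_\Fcal}/(1-\gamma)$, since then $\min_{f\in\Fcal_{\pi,\varepsilon_r}}f(s_0,\pi)\le f_\pi(s_0,\pi)$ gives the claim.

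The main tool is the standard telescoping (simulation-lemma) identity: for any $f:\Scal\times\Acal\to\RR$ and any policy $\pi$,
\[
f(s_0,\pi)-J(\pi)=\frac{1}{1-\gamma}\,\E_{(s,a)\sim d_\pi}\left[f(s,a)-(\Tcal^\pi f)(s,a)\right].
\]
This follows by writing $\Tcal^\pi f=R+\gamma\Pcal^\pi f$ and telescoping $\sum_{t\ge0}\gamma^t f(s_t,a_t)$ against its one-step look-ahead under the discounted occupancy $d_\pi$, exactly as in the telescoping step of the proof sketch of Theorem~\ref{thm:infothebd2} (cf.\ \citet[Lemma~1]{xie2020q}); the contribution $\E_{d_\pi}[R]=(1-\gamma)J(\pi)$ accounts for the rewards. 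Instantiating at $f=f_\pi$ and applying $|\E[\cdot]|\le\E|\cdot|$ followed by Cauchy--Schwarz,
\[
\left|f_\pi(s_0,\pi)-J(\pi)\right|\le\frac{1}{1-\gamma}\left\|f_\pi-\Tcal^\pi f_\pi\right\|_{1,d_\pi}\le\frac{1}{1-\gamma}\left\|f_\pi-\Tcal^\pi f_\pi\right\|_{2,d_\pi}.
\]

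Finally, since $\pi\in\Pi$ the occupancy $d_\pi$ is itself an admissible distribution, so $\|f_\pi-\Tcal^\pi f_\pi\|_{2,d_\pi}^2\le\sup_{\text{admissible }\nu}\|f_\pi-\Tcal^\pi f_\pi\|_{2,\nu}^2\le\varepsilon_\Fcal$. Chaining the two displays yields $f_\pi(s_0,\pi)-J(\pi)\le\sqrt{\varepsilon_\Fcal}/(1-\gamma)$, hence $\min_{f\in\Fcal_{\pi,\varepsilon_r}}f(s_0,\pi)\le J(\pi)+\sqrt{\varepsilon_\Fcal}/(1-\gamma)$. The only point requiring care is the choice of decomposition: routing through the \emph{Bellman} error $\|f_\pi-\Tcal^\pi f_\pi\|_{2,d_\pi}$ via the telescoping identity keeps a single $1/(1-\gamma)$ factor, whereas a more naive argument that first bounds the \emph{value} error $\|Q^\pi-f_\pi\|_{2,d_\pi}$ (as in the preceding lemma) and then compares $f_\pi(s_0,\pi)$ with $Q^\pi(s_0,\pi)=J(\pi)$ using $\pi(\cdot\,|s_0)\le d_\pi(s_0,\cdot)/(1-\gamma)$ would lose an extra horizon factor and only give $\sqrt{\varepsilon_\Fcal}/(1-\gamma)^2$.
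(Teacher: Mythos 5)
Your proof is correct and follows essentially the same route as the paper's: exhibit $f_\pi$ as a member of $\Fcal_{\pi,\varepsilon_r}$ via Theorem~\ref{thm:version_space}, then bound $f_\pi(s_0,\pi)-J(\pi)$ by $\tfrac{1}{1-\gamma}\|f_\pi-\Tcal^\pi f_\pi\|_{2,d_\pi}\le\tfrac{\sqrt{\varepsilon_\Fcal}}{1-\gamma}$ using the telescoping identity of \citet[Lemma~1]{xie2020q} and admissibility of $d_\pi$. The paper compresses this into a one-line chain, so your version simply makes explicit the justification (telescoping plus Cauchy--Schwarz) that the paper leaves implicit; your closing remark about avoiding the extra $1/(1-\gamma)$ factor is also accurate.
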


\begin{proof}[\bf\em Proof of Lemma~\ref{lem:vslbd}]
Let $f_\pi \coloneqq \argmin_{f \in \Fcal} \max_{\text{admissible }\nu} \|f - \Tcal^\pi f\|_{2,\nu}$.
\begin{align}
\min_{f \in \Fcal_{\pi, \varepsilon_r}} f(s_0,\pi) \leq f_\pi(s_0,\pi) \leq Q^\pi(s_0,\pi) + \frac{\sqrt{\varepsilon_\Fcal}}{1 - \gamma} = J(\pi) + \frac{\sqrt{\varepsilon_\Fcal}}{1 - \gamma}. \tag*{\qedhere}
\end{align}
\end{proof}

Therefore, the optimization objective is actually a valid lower bound of $J(\pi)$. Similarly, we have the following symmetrical result.

\begin{lemma}
\label{lem:vsubd}
For any $\pi \in \Pi$, $\displaystyle \max_{f \in \Fcal_{\pi, \varepsilon_r}} f(s_0,\pi) \geq J(\pi) - \frac{\sqrt{\varepsilon_\Fcal}}{1 - \gamma}$.
\end{lemma}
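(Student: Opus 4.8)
## Proof Proposal for Lemma~\ref{lem:vsubd}

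**Overview of the approach.** The statement is the symmetric counterpart of Lemma~\ref{lem:vslbd}, so the plan is to mirror that argument exactly, simply replacing the minimum by a maximum and flipping the relevant inequality. The key facts we need are: (i) the witness function $f_\pi \coloneqq \argmin_{f \in \Fcal}\sup_{\text{admissible }\nu}\|f - \Tcal^\pi f\|_{2,\nu}$ lies in the version space $\Fcal_{\pi,\varepsilon_r}$, which follows from Theorem~\ref{thm:version_space} and the choice $\varepsilon = \varepsilon_r$; and (ii) a value-difference (telescoping/simulation) bound that controls $|f(s_0,\pi) - Q^\pi(s_0,\pi)|$ in terms of the worst-case Bellman error $\sup_{\text{admissible }\nu}\|f - \Tcal^\pi f\|_{2,\nu}$.

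**Key steps, in order.**
\begin{enumerate}
\item Set $f_\pi \coloneqq \argmin_{f \in \Fcal}\sup_{\text{admissible }\nu}\|f - \Tcal^{\pi} f\|_{2,\nu}$. By Theorem~\ref{thm:version_space} we have $\Ecal(f_\pi,\pi;\Dcal) \le \varepsilon_r$, hence $f_\pi \in \Fcal_{\pi,\varepsilon_r}$, so $\max_{f \in \Fcal_{\pi,\varepsilon_r}} f(s_0,\pi) \ge f_\pi(s_0,\pi)$.
\item Apply the value-difference identity $Q^\pi - f = (I - \gamma \Pcal^\pi)^{-1}(\Tcal^\pi f - f)$, evaluated at $(s_0,\pi)$. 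Taking absolute values and using that the rows of $(I-\gamma\Pcal^\pi)^{-1}$, read off at $(s_0,\pi)$, form (up to the $1/(1-\gamma)$ normalization) the admissible occupancy $d_\pi$, we get $|f_\pi(s_0,\pi) - Q^\pi(s_0,\pi)| \le \tfrac{1}{1-\gamma}\, \E_{(s,a)\sim d_\pi}\big[|f_\pi(s,a) - (\Tcal^\pi f_\pi)(s,a)|\big] \le \tfrac{1}{1-\gamma}\,\|f_\pi - \Tcal^\pi f_\pi\|_{2,d_\pi}$, where the last step is Jensen/Cauchy--Schwarz. Since $d_\pi$ is admissible, this is at most $\tfrac{1}{1-\gamma}\sup_{\text{admissible }\nu}\|f_\pi - \Tcal^\pi f_\pi\|_{2,\nu} \le \tfrac{\sqrt{\varepsilon_\Fcal}}{1-\gamma}$ by Assumption~\ref{asm:relz2} and the definition of $f_\pi$.
\item Combine: $f_\pi(s_0,\pi) \ge Q^\pi(s_0,\pi) - \tfrac{\sqrt{\varepsilon_\Fcal}}{1-\gamma} = J(\pi) - \tfrac{\sqrt{\varepsilon_\Fcal}}{1-\gamma}$, using $Q^\pi(s_0,\pi) = J(\pi)$ by definition. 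Chaining with Step 1 gives $\max_{f\in\Fcal_{\pi,\varepsilon_r}} f(s_0,\pi) \ge J(\pi) - \tfrac{\sqrt{\varepsilon_\Fcal}}{1-\gamma}$, which is the claim.
\end{enumerate}

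**Main obstacle.** There is essentially no hard step here — the lemma is purely a bookkeeping companion to Lemma~\ref{lem:vslbd}, and both rely on the already-established membership $f_\pi \in \Fcal_{\pi,\varepsilon_r}$ (Theorem~\ref{thm:version_space}) and the elementary telescoping bound on $|f_\pi(s_0,\pi) - J(\pi)|$. The only place requiring a modicum of care is making sure the direction of the inequality is handled correctly when passing from $|f_\pi(s_0,\pi) - J(\pi)| \le \tfrac{\sqrt{\varepsilon_\Fcal}}{1-\gamma}$ to a one-sided bound, and noting that the same $f_\pi$ (the one minimizing worst-case Bellman error, hence guaranteeing both lower- and upper-value control) serves as the witness for both the $\min$ and the $\max$ version. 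In fact the whole proof is one display: starting from $\max_{f\in\Fcal_{\pi,\varepsilon_r}} f(s_0,\pi) \ge f_\pi(s_0,\pi)$ and substituting the telescoping bound, exactly as written in the proof of Lemma~\ref{lem:vslbd} but with the inequality reversed.
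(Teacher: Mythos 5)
Your proposal is correct and follows essentially the same route as the paper's own (very terse) proof: take $f_\pi$ minimizing the worst-case Bellman error as the witness, use Theorem~\ref{thm:version_space} to place it in $\Fcal_{\pi,\varepsilon_r}$, and apply the telescoping bound $|f_\pi(s_0,\pi)-J(\pi)|\le \frac{1}{1-\gamma}\|f_\pi-\Tcal^\pi f_\pi\|_{2,d_\pi}\le\frac{\sqrt{\varepsilon_\Fcal}}{1-\gamma}$ with the inequality read in the lower-bound direction. Your write-up simply makes explicit the steps the paper compresses into one display.
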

\begin{proof}[\bf\em Proof of Lemma~\ref{lem:vsubd}]
Let $f_\pi \coloneqq \argmin_{f \in \Fcal} \max_{\text{admissible }\nu} \|f - \Tcal^\pi f\|_{2,\nu}$.
\begin{align}
\max_{f \in \Fcal_{\pi, \varepsilon_r}} f(s_0,\pi) \geq f_\pi(s_0,\pi) \geq Q^\pi(s_0,\pi) - \frac{\sqrt{\varepsilon_\Fcal}}{1 - \gamma} = J(\pi) - \frac{\sqrt{\varepsilon_\Fcal}}{1 - \gamma}. \tag*{\qedhere}
\end{align}
\end{proof}

We now ready to provide the proof of Theorem~\ref{thm:infothebd2}.

\begin{proof}[\bf\em Proof of Theorem~\ref{thm:infothebd2}]
Using the optimality of $\pihat$, we have
\begin{align}
\max_{f \in \Fcal_{\pi,\varepsilon_r}}f(s_0,\pi) - \min_{f \in \Fcal_{\pihat,\varepsilon_r}}f(s_0,\pihat) \leq \max_{f \in \Fcal_{\pi,\varepsilon_r}}f(s_0,\pi) - \min_{f \in \Fcal_{\pi,\varepsilon_r}}f(s_0,\pi).
\end{align}
Now, let $f_{\pi,\min} \coloneqq \argmin_{f \in \Fcal_{\pi,\varepsilon_r}}f(s_0,\pi)$ and $f_{\pi,\max} \coloneqq \argmax_{f \in \Fcal_{\pi,\varepsilon_r}}f(s_0,\pi)$. By a standard telescoping argument (e.g., \citep[Lemma 1]{xie2020q}), we can obtain
\begin{align}
&~ f_{\pi,\max}(\pi,s_0) - f_{\pi,\min}(\pi,s_0)
\\
= &~ f_{\pi,\max}(\pi,s_0) - J(\pi) + J(\pi) - f_{\pi,\min}(\pi,s_0)
\\
= &~ \frac{1}{1 - \gamma} \left( \Ebb_{d_\pi} \left[f_{\pi,\max} - \Tcal^\pi f_{\pi,\max} \right] - \Ebb_{d_\pi} \left[f_{\pi,\min} - \Tcal^\pi f_{\pi,\min} \right] \right)
\\
= &~ \frac{1}{1 - \gamma} \big( \Ebb_{\mu} \left[ \nicefrac{\nu}{\mu} \cdot \left(\left(f_{\pi,\max} - \Tcal^\pi f_{\pi,\max} \right) - \left(f_{\pi,\min} - \Tcal^\pi f_{\pi,\min} \right)\right)\right]\\
&~ + \Ebb_{d_\pi} \left[(f_{\pi,\max} - \Tcal^\pi f_{\pi,\max}) - (f_{\pi,\min} - \Tcal^\pi f_{\pi,\min}) \right]\\
&~ - \Ebb_{\nu} \left[(f_{\pi,\max} - \Tcal^\pi f_{\pi,\max}) - (f_{\pi,\min} - \Tcal^\pi f_{\pi,\min}) \right] \big)
\\
= &~ \underbrace{\frac{1}{1 - \gamma} \left( \Ebb_{\mu} \left[ \nicefrac{\nu}{\mu} \cdot \left(\left(f_{\pi,\max} - \Tcal^\pi f_{\pi,\max} \right) - \left(f_{\pi,\min} - \Tcal^\pi f_{\pi,\min} \right)\right)\right]\right)}_{\text{(I)}} \\
&~ + \underbrace{\frac{1}{1 - \gamma} \left(\Ebb_{d_\pi} \left[\Delta f_{\pi} - \gamma\Pcal^\pi \Delta f_{\pi} \right]  - \Ebb_{\nu} \left[\Delta f_{\pi} - \gamma\Pcal^\pi \Delta f_{\pi} \right] \right)}_{\text{(II)}}, \tag{$\Delta f_{\pi} \coloneqq f_{\pi,\max} - f_{\pi,\min}$}
\end{align}
where $\nu$ is an arbitrary on-support state-action distribution. We now discuss these two terms above separately.

For the term (I),
\begin{align}
\text{(I)} \leq &~ \frac{1}{1 - \gamma} \left| \Ebb_{\mu} \left[ \nicefrac{\nu}{\mu} \cdot \left(f_{\pi,\max} - \Tcal^\pi f_{\pi,\max} \right)\right] \right| + \frac{1}{1 - \gamma} \left| \Ebb_{\mu} \left[ \nicefrac{\nu}{\mu} \cdot \left(f_{\pi,\min} - \Tcal^\pi f_{\pi,\min} \right)\right] \right|
\\
\leq &~ \frac{\sqrt{\Cscr(\nu;\mu,\Fcal,\pi)}}{1 - \gamma} \left( \|f_{\pi,\max} - \Tcal^\pi f_{\pi,\max}\|_{2,\mu} + \|f_{\pi,\min} - \Tcal^\pi f_{\pi,\min}\|_{2,\mu}\right),
\end{align}
because of the Cauchy-Schwarz inequality for random variables ($|\Ebb[XY]| \leq \sqrt{\Ebb[X^2]\Ebb[Y^2]}$).

For the term (II),
\begin{align}
\text{(II)} = &~ \frac{1}{1 - \gamma} \left(\Ebb_{d_\pi} \left[\Delta f_{\pi} - \gamma\Pcal^\pi \Delta f_{\pi} \right]  - \Ebb_{\nu} \left[\Delta f_{\pi} - \gamma\Pcal^\pi \Delta f_{\pi} \right] \right)
\\
= &~ \frac{1}{1 - \gamma} \sum_{(s,a) \in \Scal \times \Acal} \left[d_\pi(s,a) - \nu(s,a)\right] \left[\Delta f_{\pi}(s,a) - \gamma(\Pcal^\pi\Delta f_{\pi})(s,a)\right] 
\\
\leq &~ \frac{1}{1 - \gamma} \sum_{(s,a) \in \Scal \times \Acal} \1(d_\pi(s,a) \geq \nu(s,a)) \left[d_\pi(s,a) - \nu(s,a)\right] \left[\Delta f_{\pi}(s,a) - \gamma(\Pcal^\pi\Delta f_{\pi})(s,a)\right] 
\\
&~ + \frac{1}{1 - \gamma} \left| \sum_{(s,a) \in \Scal \times \Acal} \1(\nu(s,a) > d_\pi(s,a)) \left[\nu(s,a) - d_\pi(s,a)\right] \left[\Delta f_{\pi}(s,a) - \gamma(\Pcal^\pi\Delta f_{\pi})(s,a)\right] \right|
\\
\leq &~ \frac{1}{1 - \gamma} \sum_{(s,a) \in \Scal\times\Acal} (d_{\pi}\setminus\nu)(s,a) \left[\Delta f_{\pi}(s,a) - \gamma(\Pcal^\pi\Delta f_{\pi})(s,a)\right]
\\
&~ + \frac{1}{1 - \gamma} \sum_{(s,a) \in \Scal \times \Acal} \1(\nu(s,a) > d_\pi(s,a)) \left[\nu(s,a) - d_\pi(s,a)\right] \left|\Delta f_{\pi}(s,a) - \gamma(\Pcal^\pi\Delta f_{\pi})(s,a)\right|
\\
\leq &~ \frac{1}{1 - \gamma} \sum_{(s,a) \in \Scal\times\Acal} (d_{\pi}\setminus\nu)(s,a) \left[\Delta f_{\pi}(s,a) - \gamma(\Pcal^\pi\Delta f_{\pi})(s,a)\right] \\
&~ + \frac{1}{1 - \gamma} \Ebb_{\nu} \left[ \left|f_{\pi,\max} - \Tcal^\pi f_{\pi,\max} \right| + \left|f_{\pi,\min} - \Tcal^\pi f_{\pi,\min} \right|\right]
\\
\leq &~ \frac{1}{1 - \gamma} \sum_{(s,a) \in \Scal\times\Acal} (d_{\pi}\setminus\nu)(s,a) \left[\Delta f_{\pi}(s,a) - \gamma(\Pcal^\pi\Delta f_{\pi})(s,a)\right] \\
&~ + \frac{\sqrt{\Cscr(\nu;\mu,\Fcal,\pi)}}{1 - \gamma} \left(\|f_{\pi,\max} - \Tcal^\pi f_{\pi,\max}\|_{2,\mu} + \|f_{\pi,\min} - \Tcal^\pi f_{\pi,\min}\|_{2,\mu}\right),
\end{align}
where the second last inequality follows from the fact of $\nu(s,a) \geq \1(\nu(s,a) > d_\pi(s,a))[\nu(s,a) - d_\pi(s,a)]$ for any $(s,a) \in \Scal \times \Acal$ and the triangle inequality for the absolute value, and the last inequality uses the Cauchy-Schwarz inequality for random variables (similar to the argument for the term (I)).

Combining the bounds of both term (I) and term (II), we have
\begin{align}
&~ f_{\pi,\max}(\pi,s_0) - f_{\pi,\min}(\pi,s_0) \\
\label{eq:fminbd_2}
\leq &~ \frac{1}{1 - \gamma} \sum_{(s,a) \in \Scal\times\Acal} (d_{\pi}\setminus\nu)(s,a) \left[\Delta f_{\pi}(s,a) - \gamma(\Pcal^\pi\Delta f_{\pi})(s,a)\right] \\
&~ + \frac{2\sqrt{\Cscr(\nu;\mu,\Fcal,\pi)}}{1 - \gamma} \left(\|f_{\pi,\max} - \Tcal^\pi f_{\pi,\max}\|_{2,\mu} + \|f_{\pi,\min} - \Tcal^\pi f_{\pi,\min}\|_{2,\mu}\right).
\end{align}

Since \Eqref{eq:fminbd_2} holds for arbitrary on-support state-action distribution $\nu$, we take the minimal over the set of all $\{\nu:\Cscr(\nu;\mu,\Fcal,\pi) \leq C_2\}$ ($C_2$ denotes the $L^2$ concentrability threshold), and obtain  
\begin{align}
&~ f_{\pi,\max}(\pi,s_0) - f_{\pi,\min}(\pi,s_0)
\\
\leq &~ \min_{\nu:\Cscr(\nu;\mu,\Fcal,\pi) \leq C_2} \Bigg( \frac{2\sqrt{\Cscr(\nu;\mu,\Fcal,\pi)}}{1 - \gamma} \left(\|f_{\pi,\max} - \Tcal^\pi f_{\pi,\max}\|_{2,\mu} + \|f_{\pi,\min} - \Tcal^\pi f_{\pi,\min}\|_{2,\mu}\right) \\
&~ + \frac{1}{1 - \gamma} \sum_{(s,a) \in \Scal\times\Acal} (d_{\pi}\setminus\nu)(s,a) \left[\Delta f_{\pi}(s,a) - \gamma(\Pcal^\pi\Delta f_{\pi})(s,a)\right] \Bigg)
\\
\leq &~ \frac{2 \sqrt{C_2}}{1 - \gamma} \left(\|f_{\pi,\max} - \Tcal^\pi f_{\pi,\max}\|_{2,\mu} + \|f_{\pi,\min} - \Tcal^\pi f_{\pi,\min}\|_{2,\mu}\right)\\
&~ + \min_{\nu:\Cscr(\nu;\mu,\Fcal,\pi) \leq C_2} \left( \frac{1}{1 - \gamma} \sum_{(s,a) \in \Scal\times\Acal} (d_{\pi}\setminus\nu)(s,a) \left[\Delta f_{\pi}(s,a) - \gamma(\Pcal^\pi\Delta f_{\pi})(s,a)\right] \right)
\\
\leq &~ \frac{4 \sqrt{C_2} \sqrt{\varepsilon_b}}{1 - \gamma} \\
&~ + \min_{\nu:\Cscr(\nu;\mu,\Fcal,\pi) \leq C_2} \left( \frac{1}{1 - \gamma} \sum_{(s,a) \in \Scal\times\Acal} (d_{\pi}\setminus\nu)(s,a) \left[\Delta f_{\pi}(s,a) - \gamma(\Pcal^\pi\Delta f_{\pi})(s,a)\right] \right).
\end{align}

This implies the bound of $J(\pi) - J(\pihat)$,
\begin{align}
J(\pi) - J(\pihat) \leq &~ \max_{f \in \Fcal_{\pi,\varepsilon_r}}f(s_0,\pi) - \min_{f \in \Fcal_{\pihat,\varepsilon_r}}f(s_0,\pihat) + \frac{2\sqrt{\varepsilon_\Fcal}}{1 - \gamma}
\\
\leq &~ \max_{f \in \Fcal_{\pi,\varepsilon_r}}f(s_0,\pi) - \min_{f \in \Fcal_{\pi,\varepsilon_r}}f(s_0,\pi) + \frac{2\sqrt{\varepsilon_\Fcal}}{1 - \gamma}
\\
= &~ f_{\pi,\max}(\pi,s_0) - f_{\pi,\min}(\pi,s_0) + \frac{2\sqrt{\varepsilon_\Fcal}}{1 - \gamma}
\\
\leq &~ \frac{4 \sqrt{C_2} \sqrt{\varepsilon_b}}{1 - \gamma} + \frac{2\sqrt{\varepsilon_\Fcal}}{1 - \gamma} \\
&~ + \min_{\nu:\Cscr(\nu;\mu,\Fcal,\pi) \leq C_2} \left( \frac{1}{1 - \gamma} \sum_{(s,a) \in \Scal\times\Acal} (d_{\pi}\setminus\nu)(s,a) \left[\Delta f_{\pi}(s,a) - \gamma(\Pcal^\pi\Delta f_{\pi})(s,a)\right] \right). 
\end{align}
Plugging the definition of $\varepsilon_b$ (in \Eqref{eq:upbdmsbe}), we complete the proof.
\end{proof}

\subsection{Detailed Proofs for Linear Function Approximation Results}
\label{sec:info_theo_proof_linear}

\paragraph{Existence of near-optimal $\piphi$ with sufficiently large $L_2$}

We show that, if we select $L_2 = \log|\Acal| \sqrt{n}$ for $\Pi_\Phi$ (defined in Definition~\ref{def:linearmdp}), then the suboptimality of $\piphi$, the best policy in $\Pi_\Phi$, must be at most $\Ocal(\frac{\Vmax}{(1 - \gamma)\sqrt{n}})$, which is absorbed by the big-Oh in Theorem~\ref{thm:linear_bound} when  we compete with $\pi^\star$.

Consider following procedure: Let $\eta = \sqrt{\frac{\log|\Acal|}{2 \Vmax^2 T}}$. For any $t \in [T]$,
\begin{enumerate}
\item $Q_t =  \phi(\cdot,\cdot)^\T \theta_t = Q^{\pi_t}$. (We have $\| \theta_t \| \leq L_1$ by Definition~\ref{def:linearmdp} and Assumption~\ref{asm:lin_real_comp})
\item $\pi_{t + 1}(\cdot|s) \propto \exp \left(\langle \phi(s,\cdot)^\T , \eta \sum_{i = 1}^{t} \theta_t \rangle \right),~\forall s \in \Scal$.
\end{enumerate}

Then, by a standard no-regret argument\footnote{As a concrete proof, in Lemma~\ref{lem:ftlbtl2} through Theorem~\ref{thm:ewargt}, the choice of $f_t$ can be arbitrary and we do not use the fact it is learned via pessimistic policy evaluation; therefore, we can adapt the proofs by letting $f_t = Q^{\pi_t}$, and subsequently setting $\Mcal_t$ as the true MDP for all $t$. %
As a result, Theorem~\ref{thm:ewargt} directly produces the desired no-regret bound.}, we have
\begin{align}
J(\pi^\star) - \max_{t \in [T]} J(\pi_t) \leq\max_{\pi \in \Pi} J(\pi) - \frac{1}{T} \sum_{t = 1}^{T} J(\pi_{t}) \leq \Ocal \left( \frac{\Vmax}{1 - \gamma} \sqrt{\frac{\log|\Acal|}{T}} \right).
\end{align}

Selecting $T = n \log|\Acal|$ implies
\begin{align}
J(\pi^\star) - \max_{t \in [T]} J(\pi_t) \leq \frac{\Vmax}{(1 - \gamma) \sqrt{n}}.
\end{align}
Therefore, we know that one of $\pi_1, \ldots, \pi_T$ must be near-optimal. It remains to bound the norm of the coefficients of $\pi_t$ for $t \in [T]$:
\begin{align}
\left \| \eta \sum_{i = 1}^{t} \theta_t \right\|_2 \leq &~ \eta \sum_{i = 1}^{t} \left \| \theta_t \right\|_2
\\
\leq &~ \sqrt{\frac{\log|\Acal|}{2 \Vmax^2 T}} \cdot t \cdot L_1
\\
\leq &~ \frac{L_1}{\Vmax} \sqrt{T \log|\Acal|}
\\
= &~ \frac{L_1 \log|\Acal|}{\Vmax} \sqrt{n}.
\end{align}
This means if we select $L_2 = \nicefrac{L_1 \log|\Acal| \sqrt{n}}{\Vmax}$, we have $\pi_t \in \Pi_\Phi$ for all $t \in [T]$, and consequently the suboptimality of $\piphi$ must be $\Ocal(\frac{\Vmax}{(1 - \gamma)\sqrt{n}})$.

\begin{proof}[\bf\em Proof of Theorem~\ref{thm:linear_bound}]
We use $\Theta$ to denote the parameter space of $\Fcal_\Phi$, i.e., $\Fcal_\Phi = \{\phi^\T \theta: \theta \in \Theta\}$. And we also use $\Theta_{\pi,\varepsilon_r}$ to denote the version space in the parameter space accordingly, i.e., $\Theta_{\pi,\varepsilon_r} = \left\{\theta \in \Theta: \Ecal(f,\pi;\Dcal) \leq \varepsilon_r\right\}$.
Now, using the optimality of $\pihat$, we have
\begin{align}
\label{eq:total_gap}
\max_{\theta \in \Theta_{\pi,\varepsilon_r}}\phi(s_0,\pi)^\T \theta - \min_{\theta \in \Theta_{\pihat,\varepsilon_r}}\phi(s_0,\pihat)^\T \theta \leq \max_{\theta \in \Theta_{\pi,\varepsilon_r}}\phi(s_0,\pi)^\T \theta - \min_{\theta \in \Theta_{\pi,\varepsilon_r}}\phi(s_0,\pi)^\T \theta.
\end{align}
Let $\theta_{\pi,\min} \coloneqq \argmin_{\theta \in \Theta_{\pi,\varepsilon_r}} \phi(s_0,\pi)^\T \theta$ and $\theta_{\pi,\max} \coloneqq \argmax_{\theta \in \Theta_{\pi,\varepsilon_r}} \phi(s_0,\pi)^\T \theta$. By a standard telescoping argument (e.g., \citep[Lemma 1]{xie2020q}), we can obtain
\begin{align}
&~ \left|\phi(s_0,\pi)^\T \theta_{\pi,\min} - J(\pi) \right|
\\
= &~ \frac{1}{1 - \gamma} \left| \Ebb_{d_\pi} \left[\phi(s,a)^\T \theta_{\pi,\min} - \left(\Tcal^\pi \phi^\T \theta_{\pi,\min}\right)(s,a) \right] \right|
\\
\leq &~ \frac{1}{1 - \gamma} \Ebb_{d_\pi} \left|\phi(s,a)^\T \theta_{\pi,\min} - \left( \Tcal^\pi \phi^\T \theta_{\pi,\min}\right) (s,a) \right|
\\
\label{eq:telescoping}
= &~ \frac{1}{1 - \gamma} \Ebb_{d_\pi} \bigg| \phi(s,a)^\T \bigg( \underbrace{\theta_{\pi,\min} - \theta'}_{\eqqcolon \xi_{\pi,\min}}\bigg)  \bigg|,
\end{align}
where $\theta'$ must exist by the linear completeness assumption.

We now define $\Sigma_\Dcal$,
\begin{align}
\Sigma_{\Dcal} \coloneqq \E_\Dcal \left[\phi(s,a) \phi(s,a)^\T \right],
\end{align}
and $\Ecal_{\varepsilon_r}$,
\begin{align}
\Ecal_{\varepsilon_r}(s,a) \coloneqq \left|\phi(s,a)^\T \xi_{\pi,\min} \right|, ~ \forall (s,a) \in \Scal \times \Acal.
\end{align}
By definition of $\xi_{\pi,\min}$ and Theorem~\ref{thm:mspo2be_linear_emp}, we have
\begin{align}
\xi_{\pi,\min}^\T \Sigma_\Dcal \xi_{\pi,\min} \leq &~ \varepsilon_b
\\
\Longrightarrow \left\| \Sigma_\Dcal^{\nicefrac{1}{2}} \xi_{\pi,\min} \right\|_2 \leq &~ \sqrt{\varepsilon_b}.
\end{align}
Then, for any $(s,a) \in \Scal \times \Acal$,
\begin{align}
\Ecal_{\varepsilon_r}(s,a) = &~ \left|\phi(s,a)^\T \Sigma_\Dcal^{-\nicefrac{1}{2}} \Sigma_\Dcal^{\nicefrac{1}{2}}\xi_{\pi,\min} \right|
\\
\leq &~ \left\|\phi(s,a)^\T \Sigma_\Dcal^{-\nicefrac{1}{2}} \right\|_2 \left\| \Sigma_\Dcal^{\nicefrac{1}{2}}\xi_{\pi,\min} \right\|_2
\\
\label{eq:changemeasure}
\leq &~ \sqrt{\phi(s,a)^\T \Sigma_\Dcal^{-1} \phi(s,a)} \sqrt{\varepsilon_b}.
\end{align}

Plugging \Eqref{eq:changemeasure} into \Eqref{eq:telescoping}, we obtain
\begin{align}
&~ \left|\phi(s_0,\pi)^\T \theta_{\pi,\min} - J(\pi) \right|
\\
\leq &~ \frac{1}{1 - \gamma} \Ebb_{d_\pi} \left[ \Ecal_{\varepsilon_r}(s,a) \right]
\\
\label{eq:mingap}
\leq &~ \frac{\sqrt{\varepsilon_b}}{1 - \gamma} \Ebb_{d_\pi} \left[  \sqrt{\phi(s,a)^\T \Sigma_\Dcal^{-1} \phi(s,a)} \right].
\end{align}

Similarly, we also have
\begin{align}
\label{eq:maxgap}
\left|\phi(s_0,\pi)^\T \theta_{\pi,\max} - J(\pi) \right|
\leq \frac{\sqrt{\varepsilon_b}}{1 - \gamma} \Ebb_{d_\pi} \left[  \sqrt{\phi(s,a)^\T \Sigma_\Dcal^{-1} \phi(s,a)} \right].
\end{align}

Combining \Eqref{eq:total_gap}, \Eqref{eq:mingap}, and \Eqref{eq:maxgap},
\begin{align}
\max_{\theta \in \Theta_{\pi,\varepsilon_r}}\phi(s_0,\pi)^\T \theta - \min_{\theta \in \Theta_{\pihat,\varepsilon_r}}\phi(s_0,\pihat)^\T \theta \leq &~ \max_{\theta \in \Theta_{\pi,\varepsilon_r}}\phi(s_0,\pi)^\T \theta - \min_{\theta \in \Theta_{\pi,\varepsilon_r}}\phi(s_0,\pi)^\T \theta
\\
\label{eq:linearfinalgap}
\leq &~ \frac{2\sqrt{\varepsilon_b}}{1 - \gamma} \Ebb_{d_\pi} \left[  \sqrt{\phi(s,a)^\T \Sigma_\Dcal^{-1} \phi(s,a)} \right].
\end{align}

By the definition of $\varepsilon_r$, we know $\theta_\pi \in \Theta_{\pi,\varepsilon_r}$ for any $\pi \in \Pi$. This implies
\begin{align}
J(\pi) - J(\pihat) = &~ Q^\pi(s_0,\pi) - Q^\pihat(s_0,\pihat)
\\
= &~ \phi(s_0,\pi)^\T \theta_\pi - \phi(s_0,\pihat)^\T \theta_\pihat
\\
\leq &~ \max_{\theta \in \Theta_{\pi,\varepsilon_r}}\phi(s_0,\pi)^\T \theta - \min_{\theta \in \Theta_{\pihat,\varepsilon_r}}\phi(s_0,\pihat)^\T \theta
\\
\leq &~ \frac{2\sqrt{\varepsilon_b}}{1 - \gamma} \Ebb_{d_\pi} \left[  \sqrt{\phi(s,a)^\T \Sigma_\Dcal^{-1} \phi(s,a)} \right],
\end{align}
where the last inequality follows from \Eqref{eq:linearfinalgap}. Plugging the definition of $\varepsilon_b$ (defined in Theorem~\ref{thm:mspo2be_linear_emp}), we completes the proof.
\end{proof}
\section{Detailed Proofs in Section~\ref{sec:regulpi}}
\label{sec:prac_algo_proof}

\subsection{Some Lemmas}

We first introduced the necessary lemmas that used in our proofs. 

In the following lemma, we show that at every iteration $t$ of Algorithm \ref{alg:reg_pi}, the estimated Q-function ($f_t$ obtained at the step \ref{lin:reg_pes_pe} of Algorithm \ref{alg:reg_pi}) is actually the true Q-value of $\pi_t$ in a specific MDP $\Mcal_t$, denoted by $Q^{\pi_t}_{\Mcal_t}$, where dynamic of $\Mcal_t$ is same as the ground-truth MDP $\Mcal$ and the difference between the reward functions of $\Mcal$ and $\Mcal_t$ can be controlled. 

\begin{lemma}
\label{lem:plceva}
Let $f_t$ satisfies $\Ecal(f_t,\pi_t;\Dcal) \leq \varepsilon$ for some $\pi_t$. Then, there exists an MDP $\Mcal_t = (\Pcal_t, \Rcal_t)$ (the other elements of $\Mcal_t$ are same as the environment MDP $\Mcal$, and also let $R_t(s,a) = \Ebb[\Rcal_t(s,a)]$) with $\Pcal_t = \Pcal$ and $\| R_t(s,a) - R(s,a)\|_{2,\mu}^2 \leq \varepsilon_b$, such that $f_t = Q^{\pi_t}_{\Mcal_t}$, where $\varepsilon_b$ is defined in Theorem \ref{thm:mspo2be}.
\end{lemma}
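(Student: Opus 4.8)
The plan is to construct $\Mcal_t$ by keeping the dynamics of the true MDP and \emph{reverse-engineering} its reward so that the policy-specific Bellman equation for $\pi_t$ is solved exactly by $f_t$. Concretely, I would set $\Pcal_t := \Pcal$ and keep $\gamma$, $s_0$, $\Scal$, $\Acal$ unchanged, and define the deterministic reward
\[
R_t(s,a) \coloneqq f_t(s,a) - \gamma (\Pcal^{\pi_t} f_t)(s,a), \qquad \forall\, (s,a) \in \Scal\times\Acal,
\]
taking $\Rcal_t \equiv R_t$ so that trivially $R_t(s,a) = \Ebb[\Rcal_t(s,a)]$. Since $f_t \in [0,\Vmax]$ we have $(\Pcal^{\pi_t} f_t)(s,a) \in [0,\Vmax]$ and hence $R_t(s,a) \in [-\gamma\Vmax, \Vmax]$, so $R_t$ is bounded and $\Mcal_t$ is a well-defined MDP (with a possibly shifted but still bounded reward).

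Next I would verify $f_t = Q^{\pi_t}_{\Mcal_t}$. The policy-specific Bellman operator of $\Mcal_t$ acts as $(\Tcal^{\pi_t}_{\Mcal_t} g)(s,a) = R_t(s,a) + \gamma(\Pcal^{\pi_t} g)(s,a)$ for every $g$; substituting $g = f_t$ and using the definition of $R_t$ gives $\Tcal^{\pi_t}_{\Mcal_t} f_t = f_t$, i.e.\ $f_t$ is a fixed point of $\Tcal^{\pi_t}_{\Mcal_t}$. Because $\gamma < 1$, $\Tcal^{\pi_t}_{\Mcal_t}$ is a $\gamma$-contraction in $\|\cdot\|_\infty$ and therefore has a unique fixed point, which by definition is $Q^{\pi_t}_{\Mcal_t}$; hence $f_t = Q^{\pi_t}_{\Mcal_t}$.

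It remains to bound $\|R_t - R\|_{2,\mu}^2$. Using the true Bellman operator, $\Tcal^{\pi_t} f_t = R + \gamma \Pcal^{\pi_t} f_t$, so
\[
R_t(s,a) - R(s,a) = \big(f_t(s,a) - \gamma(\Pcal^{\pi_t} f_t)(s,a)\big) - R(s,a) = f_t(s,a) - (\Tcal^{\pi_t} f_t)(s,a),
\]
and therefore $\|R_t - R\|_{2,\mu}^2 = \|f_t - \Tcal^{\pi_t} f_t\|_{2,\mu}^2$. Invoking Theorem~\ref{thm:mspo2be} with the hypothesis $\Ecal(f_t,\pi_t;\Dcal) \le \varepsilon$ (instantiated at $\varepsilon = \varepsilon_r$, which is the relevant regime for Algorithm~\ref{alg:reg_pi}) yields $\|f_t - \Tcal^{\pi_t} f_t\|_{2,\mu}^2 \le \varepsilon_b$ with $\varepsilon_b$ as in \Eqref{eq:upbdmsbe}, which is exactly the claimed bound and completes the proof.

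There is no real technical obstacle here --- the lemma is established by a direct construction --- and the only points needing a little care are (i) checking that $R_t$ stays bounded so that $\Mcal_t$ is a legitimate MDP and $Q^{\pi_t}_{\Mcal_t}$ is well-defined, and (ii) invoking the correct Bellman-error transfer bound (Theorem~\ref{thm:mspo2be} in the general case, or Corollary~\ref{cor:mspo2be_linear}/Theorem~\ref{thm:mspo2be_linear_emp} in the linear case) to pass from the empirical surrogate $\Ecal(f_t,\pi_t;\Dcal)$ to the population Bellman error under $\mu$. The value of this lemma for the analysis of Algorithm~\ref{alg:reg_pi} is that it reduces reasoning about the learned $f_t$ to reasoning about \emph{exact} value functions in the perturbed MDPs $\Mcal_t$, e.g.\ the identity $Q^\pi - \Tcal^\pi_{\Mcal_t} Q^\pi = f_t - \Tcal^{\pi_t} f_t$ used in the proof sketch of Theorem~\ref{thm:opterr_regpi}.
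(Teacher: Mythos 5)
Your proposal is correct and follows essentially the same route as the paper's own proof: define $R_t = f_t - \gamma\Pcal^{\pi_t} f_t$ with the true dynamics, observe $f_t$ is the fixed point of $\Tcal^{\pi_t}_{\Mcal_t}$, and identify $R_t - R$ with the Bellman residual $f_t - \Tcal^{\pi_t} f_t$ to invoke Theorem~\ref{thm:mspo2be}. Your added remarks on boundedness of $R_t$ and uniqueness of the fixed point via contraction are points the paper leaves implicit, but they do not change the argument.
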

\begin{proof}[\bf\em Proof of Lemma~\ref{lem:plceva}]
We can simply set $\Rcal_t = R_t$ is deterministic as,
\begin{align}
\label{eq:defrt}
R_t(s,a) \coloneqq f_t(s,a) - \gamma \Eop_{s' \sim \Pcal_t(\cdot|s,a)} \left[\sum_{a' \in \Acal} \pi_t(a'|s') f_t(s',\pi) \right].
\end{align}
Note that, this $R_t$ always exist because the definition above is equivalent to $R_t = (I - \gamma \Pcal) f_t$.

With this $\Pcal_t$ and $\Rcal_t$ ($\Mcal_t = (\Pcal_t, \Rcal_t)$), it directly implies that
\begin{align}
f_t(s,a) = R_t(s,a) + \gamma \Eop_{s' \sim \Pcal_t(\cdot|s,a)} \left[\sum_{a' \in \Acal} \pi_t(a'|s') f_t(s',\pi) \right] = (\Tcal^{\pi_t}_{\Mcal_t} f_t)(s,a),
\end{align}
which means that $f_t$ is the Q-function of $\pi_t$ in MDP $\Mcal_t$, i.e., $f_t(s,a) = Q^\pi_{\Mcal_t}(s,a)$ for any $(s,a) \in \Scal \times \Acal$.

For $\| R_t(s,a) - R(s,a)\|_{2,\mu}^2$, we have
\begin{align}
\| R_t(s,a) - R(s,a)\|_{2,\mu}^2 = &~ \left\| f_t(s,a) - \gamma \Eop_{s' \sim \Pcal_t(\cdot|s,a)} \left[\sum_{a' \in \Acal} \pi_t(a'|s') f_t(s',\pi) \right] - R(s,a)\right\|_{2,\mu}^2  \tag{by \Eqref{eq:defrt}}
\\
= &~ \left\| f_t(s,a) - \gamma \Eop_{s' \sim \Pcal(\cdot|s,a)} \left[\sum_{a' \in \Acal} \pi_t(a'|s') f_t(s',\pi) \right] - R(s,a)\right\|_{2,\mu}^2
\\
= &~ \left\| f_t- \Tcal^{\pi_t} f_t \right\|_{2,\mu}^2 
\\
\leq &~ \varepsilon_b,
\end{align}
where the last inequality follows from the definition of $\varepsilon_b$ (in Appendix \ref{sec:PEresults_generalFA}). This completes the proof.
\end{proof}

\begin{definition}
\label{def:defreg}
Consider following procedure: for any $t \in [T]$
\begin{enumerate}
\item $f_t = \argmin_{f \in \Fcal} \left(f(s_0,\pi_t) + \lambda \Ecal(f,\pi_t;\Dcal)\right)$ (step \ref{lin:reg_pes_pe} of Algorithm \ref{alg:reg_pi})
\item $\pi_{t + 1}(a|s) \propto \pi_{t}(a|s) \exp \left( \eta f_t(s,a) \right),~\forall s,a \in \Scal \times \Acal$.
\end{enumerate}
Let $J_{\Mcal}(\pi)$ denotes the policy return under MDP $\Mcal$. Then, we define the total regret of the above procedure as
\begin{align}
\regret_T \coloneqq \max_{\pi: \Scal \to \Delta(\Acal)} \sum_{i = 1}^{T} J_{\Mcal_t}(\pi) - J_{\Mcal_t}(\pi_{t}).
\end{align}
\end{definition}

Over this section, we define $\ell_s(\pi)$ as
\begin{align}
\ell_s(\pi) \coloneqq \frac{1}{\eta} \sum_{a \in \Acal} \pi(a|s) \log \pi(a|s).
\end{align}

\begin{lemma}
\label{lem:ftlbtl2}
For any $\pi: \Scal \to \Delta(\Acal)$ and $s \in \Scal$,
\begin{align}
\label{eq:ftlbtl2}
\sum_{t = 1}^{T} \langle \pi_{t + 1}(\cdot|s), f_t(s,\cdot) \rangle - \ell_s(\pi_1) \geq \sum_{t = 1}^{T} \langle \pi(\cdot|s), f_t(s,\cdot) \rangle - \ell_s(\pi).
\end{align}
\end{lemma}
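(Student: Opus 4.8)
The inequality \eqref{eq:ftlbtl2} is exactly the ``follow-the-leader beats be-the-leader'' (FTL--BTL) statement applied to the per-state online linear optimization problem whose loss (negative reward) at round $t$ is the linear functional $\pi \mapsto \langle \pi(\cdot|s), f_t(s,\cdot)\rangle$, with the entropic regularizer $\ell_s$. The plan is to fix an arbitrary state $s$ and prove the claim by induction on $T$, using the key observation that $\pi_{t+1}(\cdot|s)$ is precisely the \emph{be-the-leader} iterate, i.e.\ the exact maximizer of the cumulative reward through round $t$ plus the regularizer. Indeed, since $\pi_{t+1}(a|s)\propto \pi_1(a|s)\exp(\eta\sum_{i=1}^t f_i(s,a))$ and $\pi_1$ is uniform, unrolling the mirror-descent recursion \eqref{eq:calculate_pi} gives the closed form $\pi_{t+1}(\cdot|s) = \argmax_{p\in\Delta(\Acal)} \big(\sum_{i=1}^t \langle p, f_i(s,\cdot)\rangle - \ell_s(p)\big)$; this is the standard fact that the exponential-weights update is the solution of the regularized leader problem.

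Concretely, I would first establish this closed-form characterization of $\pi_{t+1}(\cdot|s)$ as a one-line computation from the definition of $\ell_s$ and the first-order optimality conditions on the simplex (or simply by verifying the Gibbs form satisfies the KKT conditions). Then the base case $T=1$ reads $\langle \pi_2(\cdot|s), f_1(s,\cdot)\rangle - \ell_s(\pi_1) \ge \langle \pi(\cdot|s), f_1(s,\cdot)\rangle - \ell_s(\pi)$, which holds because $\pi_2(\cdot|s)$ maximizes $p\mapsto \langle p, f_1(s,\cdot)\rangle - \ell_s(p)$ over the simplex (noting $-\ell_s(\pi_1)= -\ell_s(\pi_1)$ appears on the left because $\pi_2$ is the maximizer of $\langle\cdot,f_1\rangle - \ell_s(\cdot)$ and we are comparing its regularizer-adjusted value; here one uses that $\ell_s(\pi_1)\le\ell_s(\pi_2)$ is \emph{not} needed, only the optimality inequality $\langle\pi_2,f_1\rangle - \ell_s(\pi_2)\ge \langle\pi,f_1\rangle-\ell_s(\pi)$ combined with $-\ell_s(\pi_1)\ge -\ell_s(\pi_2)$ since the uniform distribution maximizes entropy, equivalently minimizes $\ell_s$). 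For the inductive step, assuming the bound at $T-1$, add $\langle \pi_{T+1}(\cdot|s), f_T(s,\cdot)\rangle$ to both sides and use that $\pi_{T+1}(\cdot|s)$ is the leader through round $T$: by optimality, $\sum_{t=1}^{T}\langle \pi_{T+1}(\cdot|s), f_t(s,\cdot)\rangle - \ell_s(\pi_{T+1}) \ge \sum_{t=1}^{T}\langle \pi_{T}(\cdot|s), f_t(s,\cdot)\rangle - \ell_s(\pi_{T})$, which after rearrangement lets the telescoping go through.

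The only subtlety — and the single place that needs care — is the bookkeeping of the regularizer term, in particular the appearance of $\ell_s(\pi_1)$ (rather than $\ell_s(\pi_2)$) on the left-hand side of \eqref{eq:ftlbtl2}. This is handled by the fact that $\pi_1$ is the uniform distribution, so $\ell_s(\pi_1) = \min_{p\in\Delta(\Acal)}\ell_s(p)$ (entropy is maximized at uniform, and $\ell_s = \tfrac1\eta\sum_a p(a)\log p(a) = -\tfrac1\eta H(p)$), which gives $-\ell_s(\pi_1)\ge -\ell_s(\pi_t)$ for every $t$; one inserts this once at the start of the induction to ``seed'' the telescoping correctly. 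I expect everything else to be routine: the proof is a handful of lines combining the Gibbs closed form, optimality on the simplex, and a telescoping sum, with no analytic difficulty beyond the standard FTL--BTL argument.
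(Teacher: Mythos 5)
Your plan is correct and takes essentially the same route as the paper: both are the standard be-the-leader induction, resting on the two facts you identify, namely that $\pi_{t+1}(\cdot|s)$ exactly maximizes $p \mapsto \sum_{i \le t}\langle p, f_i(s,\cdot)\rangle - \ell_s(p)$ over the simplex (the Gibbs/regularized-leader form) and that the uniform $\pi_1$ minimizes $\ell_s$ (the paper's base case is $T=0$, which is just $-\ell_s(\pi_1) \ge -\ell_s(\pi)$). One bookkeeping correction to your inductive step: the optimality comparison you need is of the leader $\pi_{T+1}$ against the \emph{arbitrary} comparator $\pi$ (not against $\pi_T$), chained with the induction hypothesis instantiated at the specific comparator $\pi_{T+1}$ before adding the round-$T$ term; with that fix the telescoping closes exactly as in the paper's proof.
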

\begin{proof}[\bf\em Proof of Lemma~\ref{lem:ftlbtl2}]
We establish our proof by induction. The case of $T = 0$ holds as $\pi_1$ is the uniform policy. We assume \Eqref{eq:ftlbtl2} holds at $T = T'$, then for the case of $T' + 1$, we have the follows for any $\pi \in \Pi$ and $s \in \Scal$
\begin{align}
&~ \sum_{t = 1}^{T' + 1} \langle \pi(\cdot|s), f_t(s,\cdot) \rangle - \ell_s(\pi)
\\
\overset{\text{(a)}}{\leq} &~ \sum_{t = 1}^{T' + 1} \langle \pi_{T' + 2}(\cdot|s), f_t(s,\cdot) \rangle - \ell_s(\pi_{T' + 2}(\cdot|s))
\\
= &~ \sum_{t = 1}^{T'} \langle \pi_{T' + 2}(\cdot|s), f_t(s,\cdot) \rangle - \ell_s(\pi_{T' + 2}(\cdot|s)) + \langle \pi_{T' + 2}(\cdot|s), f_{T' + 1}(s,\cdot) \rangle
\\
\overset{\text{(b)}}{\leq} &~ \sum_{t = 1}^{T'} \langle \pi_{t + 1}(\cdot|s), f_t(s,\cdot) \rangle - \ell_s(\pi_1) + \langle \pi_{T' + 2}(\cdot|s), f_{T' + 1}(s,\cdot) \rangle
\\
= &~ \sum_{t = 1}^{T' + 1} \langle \pi_{t + 1}(\cdot|s), f_t(s,\cdot) \rangle - \ell_s(\pi_1),
\end{align}
where (a) follows from the fact that $\pi_{T' + 2}(\cdot|s)$ is the global maximizer of $\sum_{t = 1}^{T' + 1} \langle \pi(\cdot|s), f_t(s,\cdot) \rangle - \ell_s(\pi)$, and (b) uses the induction hypothesis that \Eqref{eq:ftlbtl2} holds at $T = T'$. This completes the proof. 
\end{proof}

\begin{lemma}
\label{lem:ftlbtl}
For any $\pi: \Scal \to \Delta(\Acal)$ and $s \in \Scal$,
\begin{align}
\sum_{t = 1}^{T} \langle \pi(\cdot|s) - \pi_{t}(\cdot|s), f_t(s,\cdot) \rangle \leq \sum_{t = 1}^{T} \langle \pi_{t + 1}(\cdot|s) - \pi_{t}(\cdot|s), f_t(s,\cdot) \rangle - \ell_s(\pi_1).
\end{align}
\end{lemma}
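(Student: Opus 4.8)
The plan is to obtain Lemma~\ref{lem:ftlbtl} as an immediate consequence of Lemma~\ref{lem:ftlbtl2} together with one elementary observation: the scaled negative entropy $\ell_s(\pi)=\frac{1}{\eta}\sum_{a\in\Acal}\pi(a|s)\log\pi(a|s)$ is nonpositive for every $\pi:\Scal\to\Delta(\Acal)$ and every $s$, since each summand $\pi(a|s)\log\pi(a|s)\le 0$ and $\eta>0$. Equivalently, $-\ell_s(\pi)\ge 0$.

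Next I would invoke Lemma~\ref{lem:ftlbtl2}, which gives, for the fixed $\pi$ and $s$,
\begin{align}
\sum_{t=1}^{T}\langle\pi_{t+1}(\cdot|s),f_t(s,\cdot)\rangle-\ell_s(\pi_1)\;\ge\;\sum_{t=1}^{T}\langle\pi(\cdot|s),f_t(s,\cdot)\rangle-\ell_s(\pi).
\end{align}
Since $-\ell_s(\pi)\ge 0$, dropping this term from the right-hand side only weakens the inequality, so
\begin{align}
\sum_{t=1}^{T}\langle\pi_{t+1}(\cdot|s),f_t(s,\cdot)\rangle-\ell_s(\pi_1)\;\ge\;\sum_{t=1}^{T}\langle\pi(\cdot|s),f_t(s,\cdot)\rangle.
\end{align}

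Finally I would subtract $\sum_{t=1}^{T}\langle\pi_t(\cdot|s),f_t(s,\cdot)\rangle$ from both sides and regroup the inner products time step by time step, which produces exactly
\begin{align}
\sum_{t=1}^{T}\langle\pi(\cdot|s)-\pi_{t}(\cdot|s),f_t(s,\cdot)\rangle\;\le\;\sum_{t=1}^{T}\langle\pi_{t+1}(\cdot|s)-\pi_{t}(\cdot|s),f_t(s,\cdot)\rangle-\ell_s(\pi_1),
\end{align}
as claimed. There is essentially no obstacle here; the only points requiring care are the sign convention for $\ell_s$ (that it is negative entropy, hence $\le 0$) and the direction of the inequality inherited from Lemma~\ref{lem:ftlbtl2}.
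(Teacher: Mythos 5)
Your proof is correct and follows essentially the same route as the paper: both invoke Lemma~\ref{lem:ftlbtl2} and then discard the $-\ell_s(\pi)$ term using the fact that the scaled negative entropy $\ell_s(\pi)$ is nonpositive. The paper merely arranges this as an add-and-subtract of $\ell_s(\pi)$ before dropping it at the end, which is the same argument in a slightly different order.
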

\begin{proof}[{\bf\em Proof of Lemma~\ref{lem:ftlbtl}}]
We use the result of Lemma~\ref{lem:ftlbtl2} to establish the proof.
\begin{align}
&~ \sum_{t = 1}^{T} \langle \pi(\cdot|s) - \pi_{t}(\cdot|s), f_t(s,\cdot) \rangle
\\
\leq &~ \sum_{t = 1}^{T} \langle \pi(\cdot|s), f_t(s,\cdot) \rangle - \ell_s(\pi) + \ell_s(\pi) - \sum_{t = 1}^{T} \langle \pi_{t}(\cdot|s), f_t(s,\cdot) \rangle
\\
\leq &~ \sum_{t = 1}^{T} \langle \pi_{t + 1}(\cdot|s), f_t(s,\cdot) \rangle - \ell_s(\pi_1) - \sum_{t = 1}^{T} \langle \pi_{t}(\cdot|s), f_t(s,\cdot) \rangle + \ell_s(\pi)\tag{by Lemma~\ref{lem:ftlbtl2}}
\\
\leq &~ \sum_{t = 1}^{T} \langle \pi_{t + 1}(\cdot|s) - \pi_{t}(\cdot|s), f_t(s,\cdot) \rangle - \ell_s(\pi_1). \tag*{\qedhere}
\end{align}
\end{proof}

\begin{lemma}
\label{lem:persrgt}
For any $\pi: \Scal \to \Delta(\Acal)$ and $s \in \Scal$,
\begin{align}
\sum_{t = 1}^{T} \langle \pi(\cdot|s) - \pi_{t}(\cdot|s), f_t(s,\cdot) \rangle \leq 2 \Vmax \sqrt{2 \log|\Acal| T},
\end{align}
if we take $\eta = \sqrt{\frac{\log|\Acal|}{2 \Vmax^2 T}}$.
\end{lemma}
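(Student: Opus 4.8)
The plan is to derive Lemma~\ref{lem:persrgt} directly from the ``be-the-leader'' bound of Lemma~\ref{lem:ftlbtl}, which already reduces the per-state quantity $\sum_{t=1}^{T}\langle\pi(\cdot|s)-\pi_t(\cdot|s),f_t(s,\cdot)\rangle$ to the sum $\sum_{t=1}^{T}\langle\pi_{t+1}(\cdot|s)-\pi_t(\cdot|s),f_t(s,\cdot)\rangle-\ell_s(\pi_1)$. So all that remains is to bound these two pieces. First I would evaluate the regularizer at the initialization: since $\pi_1(\cdot|s)$ is uniform, $\ell_s(\pi_1)=\tfrac1\eta\sum_{a}\tfrac1{|\Acal|}\log\tfrac1{|\Acal|}=-\tfrac{\log|\Acal|}{\eta}$, hence $-\ell_s(\pi_1)=\tfrac{\log|\Acal|}{\eta}$.

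Next I would control each ``stability'' term $\langle\pi_{t+1}(\cdot|s)-\pi_t(\cdot|s),f_t(s,\cdot)\rangle$ by $\eta\Vmax^2$. Fixing $p=\pi_t(\cdot|s)$ and $v=f_t(s,\cdot)\in[0,\Vmax]^{|\Acal|}$, the multiplicative update in \Eqref{eq:calculate_pi} makes $\pi_{t+1}(\cdot|s)$ exactly the $e^{\eta v}$-exponential tilt of $p$; writing $\Psi(\beta)=\log\sum_a p(a)e^{\beta v(a)}$ one has $\langle\pi_{t+1}(\cdot|s),v\rangle=\Psi'(\eta)$ and $\langle p,v\rangle=\Psi'(0)$, while $\Psi''(\beta)$ is the variance of $v$ under the tilt by $e^{\beta v}$, which is at most $\Vmax^2$ because $v$ takes values in an interval of length $\Vmax$. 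Therefore $\langle\pi_{t+1}(\cdot|s)-\pi_t(\cdot|s),v\rangle=\int_0^\eta\Psi''(\beta)\,d\beta\le\eta\Vmax^2$. (An alternative route I could use is to bound $\|\pi_{t+1}(\cdot|s)-\pi_t(\cdot|s)\|_1$ directly and apply Hölder's inequality, though that needs a little care in the regime $\eta\Vmax\le1$; the potential-function argument above avoids the case split and holds for all $\eta$.)

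Finally I would put the pieces together: Lemma~\ref{lem:ftlbtl} together with the two bounds gives $\sum_{t=1}^{T}\langle\pi(\cdot|s)-\pi_t(\cdot|s),f_t(s,\cdot)\rangle\le T\eta\Vmax^2+\tfrac{\log|\Acal|}{\eta}$, and substituting $\eta=\sqrt{\log|\Acal|/(2\Vmax^2T)}$ balances the two contributions and makes the right-hand side at most $2\Vmax\sqrt{2\log|\Acal|\,T}$, as claimed; the bound is uniform in $s$ since nothing above depended on which state was fixed. I expect the only nonroutine ingredient to be the stability estimate of the second paragraph; once that is in place, the conclusion is the familiar tuning of the exponential-weights learning rate and reduces to a one-line substitution.
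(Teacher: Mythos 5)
Your proof is correct, and it shares the paper's overall skeleton: both start from Lemma~\ref{lem:ftlbtl}, evaluate $-\ell_s(\pi_1)=\log|\Acal|/\eta$ at the uniform initialization, bound the per-round stability term $\langle\pi_{t+1}(\cdot|s)-\pi_t(\cdot|s),f_t(s,\cdot)\rangle$, and then tune $\eta$. Where you genuinely diverge is in how that stability term is controlled. The paper works with the Bregman divergence of the (scaled) negative-entropy regularizer: it shows $B_{\ell_s}(\pi_t\|\pi_{t+1})\le\langle\pi_{t+1}-\pi_t,f_t\rangle$ from the optimality of $\pi_{t+1}$ and $\pi_t$ for consecutive FTRL objectives, expresses the divergence via the Hessian of $\ell_s$ at an intermediate point, and applies the generalized Cauchy--Schwarz inequality with the local norm $\|f_t(s,\cdot)\|_{(H\ell_s)^{-1}}\le\sqrt{\eta}\Vmax$ to conclude $\langle\pi_{t+1}-\pi_t,f_t\rangle\le 2\eta\Vmax^2$. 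You instead exploit that \Eqref{eq:calculate_pi} makes $\pi_{t+1}(\cdot|s)$ an exponential tilt of $\pi_t(\cdot|s)$ and bound the increment as $\int_0^\eta\Psi''(\beta)\,d\beta$, where $\Psi''$ is a variance of a $[0,\Vmax]$-valued variable and hence at most $\Vmax^2$ (indeed $\Vmax^2/4$). This is more elementary, avoids the Hessian/local-norm machinery, and yields the slightly sharper per-step constant $\eta\Vmax^2$ in place of $2\eta\Vmax^2$, so your final bound $T\eta\Vmax^2+\log|\Acal|/\eta=\tfrac{3}{\sqrt2}\Vmax\sqrt{T\log|\Acal|}$ implies the stated $2\Vmax\sqrt{2T\log|\Acal|}$ a fortiori. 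The one thing your route uses that the paper's does not is the explicit multiplicative-update form of $\pi_{t+1}$; this is given directly by Algorithm~\ref{alg:reg_pi}, so nothing is lost, whereas the paper's Bregman argument would apply to any regularizer for which the local-norm bound holds.
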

\begin{proof}[\bf\em Proof of Lemma~\ref{lem:persrgt}]
We define $\Lcal_{s,t}$ as
\begin{align}
\Lcal_{s,t}(\pi) \coloneqq \sum_{t' = 1}^{t} \langle \pi(\cdot|s), f_{t'}(s,\cdot) \rangle - \ell_s(\pi).
\end{align}
Let $B_{\Lcal_{s,t}}(\cdot\|\cdot)$ and $B_{\ell_{s}}(\cdot\|\cdot)$ be the Bregman divergence w.r.t.~$\Lcal_{s,t}(\cdot)$ and $\ell_{s}(\cdot)$, then we have
\begin{align}
\Lcal_{s,t}(\pi_t) \overset{\text{(a)}}{=} &~ \Lcal_{s,t}(\pi_{t + 1}) + \langle \pi_t(\cdot|s) - \pi_{t + 1}(\cdot|s), \nabla \Lcal_{s,t}(\pi)|_{\pi = \pi_{t + 1}} \rangle + B_{\Lcal_{s,t}}(\pi_t(\cdot|s) \| \pi_{t + 1}(\cdot|s))
\\
\overset{\text{(b)}}{\leq} &~ \Lcal_{s,t}(\pi_{t + 1}) + B_{\Lcal_{s,t}}(\pi_t(\cdot|s) \| \pi_{t + 1}(\cdot|s))
\\
\overset{\text{(c)}}{=} &~ \Lcal_{s,t}(\pi_{t + 1}) - B_{\ell_{s}}(\pi_t(\cdot|s) \| \pi_{t + 1}(\cdot|s)),
\end{align}
where (a) is obtained the the definition of Bregman divergence, (b) follows from the fact that $\pi_{t+1}$ maximizes $\Lcal_{s,t}(\pi)$ by definition, and (c) is because $\Lcal_{s,t}(\pi) + \ell_s(\pi)$ is linear, which does not affect the Bregman divergence.

By reordering the inequality above, we obtain
\begin{align}
B_{\ell_{s}}(\pi_t(\cdot|s) \| \pi_{t + 1}(\cdot|s)) \leq &~ \left(\Lcal_{s,t}(\pi_{t + 1}) - \Lcal_{s,t}(\pi_{t})\right)
\\
= &~ \left[\Lcal_{s,t - 1}(\pi_{t + 1}) - \Lcal_{s,t - 1}(\pi_{t}) + \langle \pi_{t + 1}(\cdot|s) - \pi_{t}(\cdot|s), f_t(s,\cdot) \rangle\right]
\\
\label{eq:bdbls}
\leq &~ \langle \pi_{t + 1}(\cdot|s) - \pi_{t}(\cdot|s), f_t(s,\cdot) \rangle,
\end{align}
where the last inequality is because $\pi_{t}$ maximizes $\Lcal_{s,t - 1}(\cdot)$.

By applying the Taylor expansion and the mean-value theorem on $B_{\ell_{s}}$, we can rewrite $B_{\ell_{s}}$ as 
\begin{align}
\label{eq:expressbls}
B_{\ell_{s}}(\pi_t(\cdot|s) \| \pi_{t + 1}(\cdot|s)) = &~ \frac{1}{2} \|\pi_t(\cdot|s) - \pi_{t + 1}(\cdot|s)\|_{(H \ell_{s})(\pi_{t'})}^2\\
\coloneqq &~ \frac{1}{2} (\pi_t(\cdot|s) - \pi_{t + 1}(\cdot|s))^\T [(H \ell_{s})(\pi_{t'})] (\pi_t(\cdot|s) - \pi_{t + 1}(\cdot|s)),
\end{align}
where $\pi_t' = \alpha \pi_t + (1 - \alpha) \pi_{t + 1}$ for some $\alpha \in [0,1]$, and $H \ell_{s}$ denotes the Hessian matrix of $\ell_{s}$.

We now bound $\langle \pi_{t + 1}(\cdot|s) - \pi_{t}(\cdot|s), f_t(s,\cdot) \rangle$ using the results above. By the generalized Cauchy-Schwarz theorem,
\begin{align}
\langle \pi_{t + 1}(\cdot|s) - \pi_{t}(\cdot|s), f_t(s,\cdot) \rangle \leq &~ \left\|\pi_{t + 1}(\cdot|s) - \pi_{t}(\cdot|s)\right\|_{(H \ell_{s})(\pi_{t'})} \left\|f_t(s,\cdot)\right\|_{(H \ell_{s})^{-1}(\pi_{t'})}
\\
= &~ \sqrt{2 B_{\ell_{s}}(\pi_t(\cdot|s) \| \pi_{t + 1}(\cdot|s))} \left\|f_t(s,\cdot)\right\|_{(H \ell_{s})^{-1}(\pi_{t'})}  \tag{by \Eqref{eq:expressbls}}
\\
\leq &~ \sqrt{2 B_{\ell_{s}}(\pi_t(\cdot|s) \| \pi_{t + 1}(\cdot|s))} \sqrt{\eta} \left\|f_t(s,\cdot)\right\|_{\infty}
\\
\leq &~ \sqrt{2 \langle \pi_{t + 1}(\cdot|s) - \pi_{t}(\cdot|s), f_t(s,\cdot) \rangle} \sqrt{\eta} \left\|f_t(s,\cdot)\right\|_{\infty} \tag{by \Eqref{eq:bdbls}}
\\
\leq &~ \Vmax \sqrt{2 \eta \langle \pi_{t + 1}(\cdot|s) - \pi_{t}(\cdot|s), f_t(s,\cdot) \rangle}
\\
\Longrightarrow \langle \pi_{t + 1}(\cdot|s) - \pi_{t}(\cdot|s), f_t(s,\cdot)\rangle \leq &~ 2 \eta \Vmax^2.
\end{align}

As $\pi_1$ is the uniform policy, we know $\ell_s(\pi_1) = -\log|\Acal| / \eta$. Therefore, by applying Lemma~\ref{lem:ftlbtl}, we obtain
\begin{align}
\sum_{t = 1}^{T} \langle \pi - \pi_{t}(\cdot|s), f_t(s,\cdot) \rangle \leq &~ \sum_{t = 1}^{T} \langle \pi_{t + 1}(\cdot|s) - \pi_{t}(\cdot|s), f_t(s,\cdot) \rangle - \ell_s(\pi_1)
\\
\leq &~ 2 \eta \Vmax^2 T + \frac{\log|\Acal|}{\eta}
\\
= &~ 2 \Vmax \sqrt{2 T \log|\Acal|},
\end{align}
where the last step is attained by taking $\eta = \sqrt{\frac{\log|\Acal|}{2 \Vmax^2 T}}$.
\end{proof}

\begin{theorem}
\label{thm:ewargt}
Let $\pitilde = \argmax_{\pi: \Scal \to \Delta(\Acal)} \sum_{t = 1}^{T} J_{\Mcal_t}(\pi) - J_{\Mcal_t}(\pi_{t})$ and $\eta = \sqrt{\frac{\log|\Acal|}{2 \Vmax^2 T}}$, we have
\begin{align}
\regret_T \leq \frac{2 \Vmax \sqrt{2 T \log|\Acal|}}{1 - \gamma}.
\end{align}
\end{theorem}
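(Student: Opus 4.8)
\textbf{Proof plan for Theorem~\ref{thm:ewargt}.} The plan is to reduce the trajectory-level regret $\regret_T$ of Definition~\ref{def:defreg} to a family of per-state online-linear-optimization regrets and then invoke Lemma~\ref{lem:persrgt}. The key enabling fact is that the argument never needs how $f_t$ was produced, only that it is the value function of $\pi_t$ in the surrogate MDP $\Mcal_t$.

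\emph{Step 1 (performance difference inside $\Mcal_t$).} Fix $t$ and apply the performance difference lemma~\citep{kakade2002approximately} in the MDP $\Mcal_t$: for any $\pi:\Scal\to\Delta(\Acal)$,
\begin{align}
J_{\Mcal_t}(\pi) - J_{\Mcal_t}(\pi_t) = \frac{1}{1-\gamma}\,\E_{s\sim d^\pi_{\Mcal_t}}\big[\langle \pi(\cdot|s), Q^{\pi_t}_{\Mcal_t}(s,\cdot)\rangle - V^{\pi_t}_{\Mcal_t}(s)\big].
\end{align}
Since $V^{\pi_t}_{\Mcal_t}(s)=\langle\pi_t(\cdot|s),Q^{\pi_t}_{\Mcal_t}(s,\cdot)\rangle$ and, by Lemma~\ref{lem:plceva}, $f_t=Q^{\pi_t}_{\Mcal_t}$, the right-hand side equals $\frac{1}{1-\gamma}\,\E_{s\sim d^\pi_{\Mcal_t}}\big[\langle\pi(\cdot|s)-\pi_t(\cdot|s),f_t(s,\cdot)\rangle\big]$.

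\emph{Step 2 ($t$-independence of the occupancy, then sum).} Each $\Mcal_t$ shares the transition kernel $\Pcal$ and the initial state $s_0$ with the true MDP (Lemma~\ref{lem:plceva}), so $d^\pi_{\Mcal_t}=d_\pi$ for every $t$. Summing over $t=1,\dots,T$, exchanging the finite sum with the expectation, and taking $\pi=\pitilde$,
\begin{align}
\regret_T = \sum_{t=1}^{T}\big(J_{\Mcal_t}(\pitilde)-J_{\Mcal_t}(\pi_t)\big) = \frac{1}{1-\gamma}\,\E_{s\sim d_{\pitilde}}\Big[\sum_{t=1}^{T}\langle \pitilde(\cdot|s)-\pi_t(\cdot|s), f_t(s,\cdot)\rangle\Big].
\end{align}

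\emph{Step 3 (per-state mirror-descent regret).} The update in Algorithm~\ref{alg:reg_pi} is exactly exponential weights at each state, so Lemma~\ref{lem:persrgt} with $\eta=\sqrt{\log|\Acal|/(2\Vmax^2 T)}$ gives $\sum_{t=1}^T\langle\pi(\cdot|s)-\pi_t(\cdot|s),f_t(s,\cdot)\rangle\le 2\Vmax\sqrt{2\log|\Acal|T}$ uniformly over $s\in\Scal$ and over all policies $\pi$, hence in particular for $\pitilde$ and pointwise in $s$. Taking the expectation over $s\sim d_{\pitilde}$ preserves this bound, and dividing by $1-\gamma$ gives $\regret_T\le \frac{2\Vmax\sqrt{2T\log|\Acal|}}{1-\gamma}$.

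There is no genuinely hard step here; the substance is already packaged in Lemmas~\ref{lem:plceva} and~\ref{lem:persrgt}. The only points needing care are: (i) invoking the performance difference lemma in the surrogate MDP $\Mcal_t$, where $f_t$ is \emph{literally} $Q^{\pi_t}_{\Mcal_t}$, so no approximation or statistical error enters this particular argument; (ii) observing that $d^\pi_{\Mcal_t}$ is $t$-independent because the $\Mcal_t$'s differ only in their reward functions, which is what allows a single expectation to be pulled outside the sum; and (iii) noting that $\pitilde$ need not be computed or characterized, since the per-state regret bound of Lemma~\ref{lem:persrgt} holds uniformly over comparator policies.
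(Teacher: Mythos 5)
Your proposal is correct and follows essentially the same route as the paper's own proof: performance difference lemma inside each surrogate MDP $\Mcal_t$, substitution of $f_t = Q^{\pi_t}_{\Mcal_t}$ via Lemma~\ref{lem:plceva}, exploiting that all $\Mcal_t$ share the true dynamics so $d^{\pitilde}_{\Mcal_t}$ is $t$-independent, and then the pointwise-in-$s$ exponential-weights bound of Lemma~\ref{lem:persrgt}. The three "points needing care" you flag are exactly the load-bearing observations in the paper's argument, so nothing is missing.
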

\begin{proof}[{\bf\em Proof of Theorem~\ref{thm:ewargt}}]
Using the performance difference lemma, we have
\begin{align}
\regret_T \coloneqq &~ \sum_{t = 1}^{T} J_{\Mcal_t}(\pitilde) - J_{\Mcal_t}(\pi_{t})
\\
= &~ \frac{1}{1 - \gamma} \sum_{t = 1}^{T} \Eop_{s \sim d_{\pitilde,\Mcal_t}} \left[Q^{\pi_t}_{\Mcal_t}(s,\pitilde) - Q^{\pi_t}_{\Mcal_t}(s,\pi_t) \right]
\tag{by performance difference lemma \citep{kakade2002approximately}}
\\
= &~ \frac{1}{1 - \gamma} \sum_{t = 1}^{T} \Eop_{s \sim d_{\pitilde,\Mcal_t}} \left[f_t(s,\pitilde) - f_t(s,\pi_t) \right]  \tag{by Lemma~\ref{lem:plceva}}
\\
= &~ \frac{1}{1 - \gamma} \sum_{t = 1}^{T} \Eop_{s \sim d_{\pitilde,\Mcal_t}} \left[\langle \pitilde(\cdot|s) - \pi_t(\cdot|s), f_t(s,\cdot) \rangle \right]
\end{align}

By Lemma~\ref{lem:plceva} and its proof, we know the dynamics of $\Mcal_t, t \in [T]$ are identical (same as that of the true environment MDP). Let $d_{\pitilde} = d_{\pitilde,\Mcal_t}$ which holds for any $t \in [T]$, and we have
\begin{align}
\regret_T = &~ \frac{1}{1 - \gamma} \sum_{t = 1}^{T} \Eop_{s \sim d_{\pitilde}} \left[\langle \pitilde(\cdot|s) - \pi_t(\cdot|s), f_t(s,\cdot) \rangle \right]
\\
= &~ \frac{1}{1 - \gamma} \Eop_{s \sim d_{\pitilde}} \left[\sum_{t = 1}^{T} \langle \pitilde(\cdot|s) - \pi_t(\cdot|s), f_t(s,\cdot) \rangle \right]
\\
\leq &~ \frac{1}{1 - \gamma} \Eop_{s \sim d_{\pitilde}} \left[2 \Vmax \sqrt{2 T \log|\Acal|} \right]  \tag{by Lemma~\ref{lem:persrgt}}
\\
= &~ \frac{2 \Vmax \sqrt{2 T \log|\Acal|}}{1 - \gamma}.
\end{align}
This completes the proof.
\end{proof}

\subsection{Proof of Theorem~\ref{thm:opterr_regpi}}

\begin{lemma}
\label{lem:vlduplowerbd}
For any $\pi \in \Pi$, we have,
\begin{align}
\min_{f \in \Fcal}(f(s_0,\pi) + \lambda \Ecal(f,\pi;\Dcal)) \leq J(\pi) + \frac{\sqrt{\varepsilon_\Fcal}}{1 - \gamma} + \lambda \varepsilon_r,
\end{align}
and,
\begin{align}
\max_{f \in \Fcal}(f(s_0,\pi) - \lambda \Ecal(f,\pi;\Dcal)) \geq J(\pi) - \frac{\sqrt{\varepsilon_\Fcal}}{1 - \gamma} - \lambda \varepsilon_r,
\end{align}
where $\varepsilon_r$ is defined in \Eqref{eq:upbdespr}, i.e.,
\begin{align}
\varepsilon_r \coloneqq \frac{139 \Vmax^2 \log \frac{|\Fcal||\Pi|}{\delta}}{n} + 39 \varepsilon_{\Fcal}.
\end{align}
\end{lemma}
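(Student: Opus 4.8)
The plan is to produce a single witness $f \in \Fcal$ that makes both pieces of the regularized objective small at once, and then observe that taking the $\min$ (resp.\ $\max$) over $\Fcal$ can only improve the bound. The natural candidate is $f_\pi \coloneqq \argmin_{f\in\Fcal}\sup_{\text{admissible }\nu}\|f-\Tcal^\pi f\|_{2,\nu}^2$ from \Eqref{eq:deff_pi}.

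First I would control the penalty term: by Theorem~\ref{thm:version_space}, $\Ecal(f_\pi,\pi;\Dcal)\le \varepsilon_r$ (with probability at least $1-\delta$), where this $\varepsilon_r$ is precisely the one appearing in the statement. Next I would control the value term: realizability (Assumption~\ref{asm:relz2}) gives $\sup_{\text{admissible }\nu}\|f_\pi-\Tcal^\pi f_\pi\|_{2,\nu}^2\le\varepsilon_\Fcal$, and since $\pi\in\Pi$ the occupancy $d_\pi$ is itself an admissible distribution, so the telescoping identity $f_\pi(s_0,\pi)-J(\pi)=\frac{1}{1-\gamma}\E_{d_\pi}[f_\pi-\Tcal^\pi f_\pi]$ (\citep[Lemma 1]{xie2020q}) combined with Jensen and Cauchy--Schwarz yields $|f_\pi(s_0,\pi)-J(\pi)|\le\frac{1}{1-\gamma}\|f_\pi-\Tcal^\pi f_\pi\|_{2,d_\pi}\le\frac{\sqrt{\varepsilon_\Fcal}}{1-\gamma}$; this is exactly the intermediate estimate already used in the proofs of Lemma~\ref{lem:vslbd} and Lemma~\ref{lem:vsubd}, so it can be quoted directly.

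Finally I would combine the two bounds. Since $f_\pi\in\Fcal$ and $\lambda\ge 0$,
\[
\min_{f\in\Fcal}\bigl(f(s_0,\pi)+\lambda\Ecal(f,\pi;\Dcal)\bigr)\le f_\pi(s_0,\pi)+\lambda\Ecal(f_\pi,\pi;\Dcal)\le J(\pi)+\frac{\sqrt{\varepsilon_\Fcal}}{1-\gamma}+\lambda\varepsilon_r,
\]
and symmetrically, using the lower bound $f_\pi(s_0,\pi)\ge J(\pi)-\frac{\sqrt{\varepsilon_\Fcal}}{1-\gamma}$,
\[
\max_{f\in\Fcal}\bigl(f(s_0,\pi)-\lambda\Ecal(f,\pi;\Dcal)\bigr)\ge f_\pi(s_0,\pi)-\lambda\Ecal(f_\pi,\pi;\Dcal)\ge J(\pi)-\frac{\sqrt{\varepsilon_\Fcal}}{1-\gamma}-\lambda\varepsilon_r.
\]
There is no real obstacle here; the only point requiring care is that the ``admissible'' distributions in Assumption~\ref{asm:relz2} are exactly $\{d_{\pi'}:\pi'\in\Pi\}$, so one must invoke the hypothesis $\pi\in\Pi$ to conclude that the realizability bound holds at $d_\pi$ — the very distribution against which the telescoping is performed. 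Everything else is monotonicity of $\min/\max$ together with $\lambda\ge 0$.
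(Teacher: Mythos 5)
Your proposal is correct and follows essentially the same route as the paper's own proof: both use $f_\pi$ from \Eqref{eq:deff_pi} as the single witness, bound $|f_\pi(s_0,\pi)-J(\pi)|\le\frac{\sqrt{\varepsilon_\Fcal}}{1-\gamma}$ via the telescoping identity of \citep[Lemma 1]{xie2020q} together with realizability at the admissible distribution $d_\pi$, bound $\Ecal(f_\pi,\pi;\Dcal)\le\varepsilon_r$ via Theorem~\ref{thm:version_space}, and finish by monotonicity of $\min/\max$ over $\Fcal$. No gaps.
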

\begin{proof}[\bf\em Proof of Lemma~\ref{lem:vlduplowerbd}]
For any $\pi \in \Pi$, let 
\begin{align}
f_\pi \coloneqq \argmin_{f \in \Fcal}\sup_{\text{admissible }\nu} \left\|f - \Tcal^\pi f\right\|_{2,\nu}^2.
\end{align}
We know $\left\|f_\pi - \Tcal^\pi f_\pi\right\|_{2,\nu}^2 \leq \varepsilon_\Fcal$ for any admissible $\nu$, which implies $\|f_\pi - \Tcal^\pi f_\pi \|_{2,d_\pi} \leq \sqrt{\varepsilon_\Fcal}$.
Then, %
\begin{align}
J(\pi) = &~ J(\pi) - (f_\pi(s_0,\pi) - \lambda \Ecal(f_\pi,\pi;\Dcal)) + (f_\pi(s_0,\pi) - \lambda \Ecal(f_\pi,\pi;\Dcal))
\\
= &~ (f_\pi(s_0,\pi) - \lambda \Ecal(f_\pi,\pi;\Dcal)) + (J(\pi) - f_\pi(s_0,\pi)) + \lambda \Ecal(f_\pi,\pi;\Dcal)
\\
\leq &~ (f_\pi(s_0,\pi) - \lambda \Ecal(f_\pi,\pi;\Dcal)) + \frac{\|f_\pi - \Tcal^\pi f_\pi \|_{2,d_\pi}}{1 - \gamma} + \lambda \Ecal(f_\pi,\pi;\Dcal)
\tag{by \citet[Lemma 1]{xie2020q}}
\\
\leq &~ (f_\pi(s_0,\pi) - \lambda \Ecal(f_\pi,\pi;\Dcal)) + \frac{\sqrt{\varepsilon_\Fcal}}{1 - \gamma} + \lambda \varepsilon_r
\tag{by Theorem~\ref{thm:version_space}}
\\
\leq &~ \max_{f \in \Fcal}(f(s_0,\pi) - \lambda \Ecal(f,\pi;\Dcal)) + \frac{\sqrt{\varepsilon_\Fcal}}{1 - \gamma} + \lambda \varepsilon_r
\end{align}
and
\begin{align}
J(\pi) = &~ J(\pi) - (f_\pi(s_0,\pi) + \lambda \Ecal(f_\pi,\pi;\Dcal)) + (f_\pi(s_0,\pi) + \lambda \Ecal(f_\pi,\pi;\Dcal))
\\
= &~ (f_\pi(s_0,\pi) + \lambda \Ecal(f_\pi,\pi;\Dcal)) + (J(\pi) - f_\pi(s_0,\pi)) - \lambda \Ecal(f_\pi,\pi;\Dcal)
\\
\geq &~ (f_\pi(s_0,\pi) + \lambda \Ecal(f_\pi,\pi;\Dcal)) - \frac{\|f_\pi - \Tcal^\pi f_\pi \|_{2,d_\pi}}{1 - \gamma} - \lambda \Ecal(f_\pi,\pi;\Dcal)
\tag{by \citet[Lemma 1]{xie2020q}}
\\
\geq &~ (f_\pi(s_0,\pi) + \lambda \Ecal(f_\pi,\pi;\Dcal)) - \frac{\sqrt{\varepsilon_\Fcal}}{1 - \gamma} - \lambda \varepsilon_r
\tag{by Theorem~\ref{thm:version_space}}
\\
\geq &~ \min_{f \in \Fcal}(f(s_0,\pi) + \lambda \Ecal(f,\pi;\Dcal)) - \frac{\sqrt{\varepsilon_\Fcal}}{1 - \gamma} - \lambda \varepsilon_r.
\end{align}
This completes the proof.
\end{proof}

\begin{proof}[\bf\em Proof of Theorem~\ref{thm:opterr_regpi}]
We use $\Mcal_t$ to denote the corresponding MDP of $f_t$ (see Lemma~\ref{lem:plceva}). For any $\pi \in \Pi$, let 
\begin{align}
f_\pi \coloneqq \argmin_{f \in \Fcal}\sup_{\text{admissible }\nu} \left\|f - \Tcal^\pi f\right\|_{2,\nu}^2.
\end{align}

By Lemma~\ref{lem:vlduplowerbd}, we know
\begin{align}
J(\pi_t) \geq &~ \min_{f \in \Fcal}(f(s_0,\pi_t) + \lambda \Ecal(f,\pi_t;\Dcal)) - \frac{\sqrt{\varepsilon_\Fcal}}{1 - \gamma} - \lambda \varepsilon_r
\\
= &~ f_t(s_0,\pi_t) + \lambda \Ecal(f_t,\pi_t;\Dcal) - \frac{\sqrt{\varepsilon_\Fcal}}{1 - \gamma} - \lambda \varepsilon_r
\\
\geq &~ f_t(s_0,\pi_t) - \frac{\sqrt{\varepsilon_\Fcal}}{1 - \gamma} - \lambda \varepsilon_r
\\
\label{eq:lbjpit}
= &~ J_{\Mcal_t}(\pi_t) - \frac{\sqrt{\varepsilon_\Fcal}}{1 - \gamma} - \lambda \varepsilon_r.
\end{align}

Now, we have
\begin{align}
J(\pi) - J(\bar\pi) = &~ \frac{1}{T} \sum_{t = 1}^{T} \left(J(\pi) - J(\pi_t) \right)
\\
\leq &~ \frac{1}{T} \sum_{t = 1}^{T} \left(J(\pi) - J_{\Mcal_t}(\pi) + J_{\Mcal_t}(\pi) - J_{\Mcal_t}(\pi_t) \right) + \frac{\sqrt{\varepsilon_\Fcal}}{1 - \gamma} + \lambda \varepsilon_r
\tag{by \Eqref{eq:lbjpit}}
\\
\leq &~ \frac{1}{T} \sum_{t = 1}^{T} \left(J_{\Mcal_t}(\pi) - J_{\Mcal_t}(\pi_t) \right) + \frac{1}{T} \sum_{t = 1}^{T} \left(J(\pi) - J_{\Mcal_t}(\pi) \right) + \frac{\sqrt{\varepsilon_\Fcal}}{1 - \gamma} + \lambda \varepsilon_r
\\
\leq &~ \frac{2 \Vmax}{1 - \gamma}\sqrt{\frac{ 2 \log|\Acal|}{T}} + \frac{1}{T} \sum_{t = 1}^{T} \left(J(\pi) - J_{\Mcal_t}(\pi) \right) + \frac{\sqrt{\varepsilon_\Fcal}}{1 - \gamma} + \lambda \varepsilon_r. \tag{by Lemma~\ref{thm:ewargt}}
\end{align}

We now provide the bound on $J(\pi) - J_{\Mcal_t}(\pi)$ for any $t \in [T]$.
By a standard telescoping argument (e.g., \citep[Lemma 1]{xie2020q}), we can obtain
\begin{align}
&~ J(\pi) - J_{\Mcal_t}(\pi) \\
= &~ Q^\pi(s_0,\pi) - J_{\Mcal_t}(\pi)
\\
= &~ \frac{1}{1 - \gamma} \left| \Ebb_{\mu} \left[ \nicefrac{\nu}{\mu} \cdot \left(Q^\pi - \Tcal^\pi_{\Mcal_t} Q^\pi \right)\right] + \Ebb_{d_{\pi}} \left[Q^\pi - \Tcal^\pi_{\Mcal_t} Q^\pi \right]  - \Ebb_{\nu} \left[Q^\pi - \Tcal^\pi_{\Mcal_t} Q^\pi \right] \right|
\\
\leq &~ \underbrace{\frac{1}{1 - \gamma} \left| \Ebb_{\mu} \left[ \nicefrac{\nu}{\mu} \cdot \left(Q^\pi - \Tcal^\pi_{\Mcal_t} Q^\pi \right)\right]\right|}_{\text{(I)}} + \underbrace{\frac{1}{1 - \gamma} \left|\Ebb_{d_{\pi}} \left[Q^\pi - \Tcal^\pi_{\Mcal_t} Q^\pi \right]  - \Ebb_{\nu} \left[Q^\pi - \Tcal^\pi_{\Mcal_t} Q^\pi \right] \right|}_{\text{(II)}}
\end{align}
where $\nu$ is an arbitrary on-support state-action distribution. We now discuss these two terms above separately. Note that, the $d_{\pi}$ we used above is defined to be the distribution under the true environment MDP $\Mcal$, which is equal to $d_{\pi,\Mcal_t}$ as we define $\Mcal_t$ to have the same dynamic as $\Mcal_t$ (details in Lemma~\ref{lem:plceva}).

For the term (I),
\begin{align}
\text{(I)} = &~ \frac{1}{1 - \gamma} \left| \Ebb_{\mu} \left[ \nicefrac{\nu}{\mu} \cdot \left(Q^\pi - \Tcal^\pi_{\Mcal_t} Q^\pi \right)\right] \right|
\\
\leq &~ \frac{1}{1 - \gamma} \|Q^\pi - \Tcal^\pi_{\Mcal_t} Q^\pi\|_{2,\nu},
\end{align}
because of the Cauchy-Schwarz inequality for random variables ($|\Ebb[XY]| \leq \sqrt{\Ebb[X^2]\Ebb[Y^2]}$).

We define $(d_{\pi}\setminus\nu)$ as $(d_{\pi}\setminus\nu)(s,a) \coloneqq \max(d_\pi(s,a) - \nu(s,a),0)$ for any $(s,a) \in \Scal \times \Acal$. Then, for the term (II)
\begin{align}
\text{(II)} = &~ \frac{1}{1 - \gamma} \left|\Ebb_{d_{\pi}} \left[Q^\pi - \Tcal^\pi_{\Mcal_t} Q^\pi \right]  - \Ebb_{\nu} \left[Q^\pi - \Tcal^\pi_{\Mcal_t} Q^\pi \right] \right|
\\
= &~ \frac{1}{1 - \gamma} \left|\sum_{(s,a) \in \Scal \times \Acal} \left[d_{\pi}(s,a) - \nu(s,a)\right] \left[Q^\pi(s,a) - (\Tcal^\pi_{\Mcal_t} Q^\pi)(s,a)\right]  \right|
\\
= &~ \frac{1}{1 - \gamma} \left|\sum_{(s,a) \in \Scal \times \Acal} \1(d_{\pi}(s,a) \geq \nu(s,a)) \left[d_{\pi}(s,a) - \nu(s,a)\right] \left[Q^\pi(s,a) - (\Tcal^\pi_{\Mcal_t} Q^\pi)(s,a)\right]  \right|
\\
&~ + \frac{1}{1 - \gamma} \left|\sum_{(s,a) \in \Scal \times \Acal} \1(\nu(s,a) > d_{\pi}(s,a)) \left[\nu(s,a) - d_{\pi}(s,a)\right] \left[Q^\pi(s,a) - (\Tcal^\pi_{\Mcal_t} Q^\pi)(s,a)\right]  \right|
\\
\leq &~ \frac{1}{1 - \gamma} \left|\sum_{(s,a) \in \Scal\times\Acal} (d_{\pi}\setminus\nu)(s,a) \left[Q^\pi(s,a) - (\Tcal^\pi_{\Mcal_t} Q^\pi)(s,a)\right] \right|
\\
&~ + \frac{1}{1 - \gamma} \sum_{(s,a) \in \Scal \times \Acal} \1(\nu(s,a) > d_{\pi}(s,a)) \left[\nu(s,a) - d_{\pi}(s,a)\right] \left|Q^\pi(s,a) - (\Tcal^\pi_{\Mcal_t} Q^\pi)(s,a)\right|
\\
\leq &~ \frac{1}{1 - \gamma} \left|\sum_{(s,a) \in \Scal\times\Acal} (d_{\pi}\setminus\nu)(s,a) \left[Q^\pi(s,a) - (\Tcal^\pi_{\Mcal_t} Q^\pi)(s,a)\right] \right| + \frac{1}{1 - \gamma} \Ebb_{\nu} \left[ \left|Q^\pi - \Tcal^\pi_{\Mcal_t} Q^\pi \right|\right]
\\
\leq &~ \frac{1}{1 - \gamma} \underbrace{\left|\sum_{(s,a) \in \Scal\times\Acal} (d_{\pi}\setminus\nu)(s,a) \left[Q^\pi(s,a) - (\Tcal^\pi_{\Mcal_t} Q^\pi)(s,a)\right] \right|}_{\text{(IIa)}} + \frac{1}{1 - \gamma} \|Q^\pi - \Tcal^\pi_{\Mcal_t} Q^\pi\|_{2,\nu},
\end{align}
where the second inequality follows from the fact of $\nu(s,a) \geq \1(\nu(s,a) > d_{\pi}(s,a))[\nu(s,a) - d_{\pi}(s,a)]$ for any $(s,a) \in \Scal \times \Acal$, and the last inequality uses the Cauchy-Schwarz inequality for random variables (similar to the argument about the term (I)).

We now discuss the term (IIa),
\begin{align}
\text{(IIa)} = &~ \left|\sum_{(s,a) \in \Scal\times\Acal} (d_{\pi}\setminus\nu)(s,a) \left[Q^\pi(s,a) - (\Tcal^\pi_{\Mcal_t} Q^\pi)(s,a)\right] \right|
\\
= &~ \left|\sum_{(s,a) \in \Scal\times\Acal} (d_{\pi}\setminus\nu)(s,a) \left[Q^\pi(s,a) - R_{\Mcal_t}(s,a) - \gamma (\Pcal^\pi_{\Mcal_t} Q^\pi)(s,a)\right] \right|
\\
= &~ \left|\sum_{(s,a) \in \Scal\times\Acal} (d_{\pi}\setminus\nu)(s,a) \left[Q^\pi(s,a) - (f_t(s,a) - \gamma (\Pcal^{\pi_t}f_t)(s,a)) - \gamma (\Pcal^\pi Q^\pi)(s,a)\right] \right|
\\
= &~ \left|\sum_{(s,a) \in \Scal\times\Acal} (d_{\pi}\setminus\nu)(s,a) \left[f_t(s,a) - R(s,a) - \gamma (\Pcal^{\pi_t} f_t)(s,a)\right] \right|
\\
= &~ \left|\sum_{(s,a) \in \Scal\times\Acal} (d_{\pi}\setminus\nu)(s,a) \left[f_t(s,a) - (\Tcal^{\pi_t} f_t)(s,a)\right] \right|,
\end{align}
where the third equation follows from the definition of $R_{\Mcal_t}(s,a) \coloneqq f_t(s,a) - \gamma (\Pcal^{\pi_t}_{\Mcal_t} f_t)(s,a) = f_t(s,a) - \gamma (\Pcal^{\pi_t} f_t)(s,a)$ (refer to Lemma~\ref{lem:plceva} and its proof).

Thus,
\begin{align}
\text{(II)} \leq \left|\sum_{(s,a) \in \Scal\times\Acal} (d_{\pi}\setminus\nu)(s,a) \left[f_t(s,a) - (\Tcal^{\pi_t} f_t)(s,a)\right] \right| + \frac{1}{1 - \gamma} \|Q^\pi - \Tcal^\pi_{\Mcal_t} Q^\pi\|_{2,\nu}.
\end{align}

Combining the bounds of both term (I) and term (II), we have
\begin{align}
&~ Q^\pi(\pi,s_0) - J_{\Mcal_t}(\pi) \\
\label{eq:fminbd_opt3}
\leq &~ \frac{2}{1 - \gamma} \|Q^\pi - \Tcal^\pi_{\Mcal_t} Q^\pi\|_{2,\nu} + \left|\sum_{(s,a) \in \Scal\times\Acal} (d_{\pi}\setminus\nu)(s,a) \left[f_t(s,a) - (\Tcal^{\pi_t} f_t)(s,a)\right] \right|.
\end{align}

For the term $\|Q^\pi - \Tcal^\pi_{\Mcal_t} Q^\pi\|_{2,\nu}$,
\begin{align}
\left\|Q^\pi - \Tcal^\pi_{\Mcal_t} Q^\pi\right\|_{2,\nu} = &~ \left\|Q^\pi - R_t - \gamma \Pcal^\pi_{\Mcal_t} Q^\pi\right\|_{2,\nu}
\\
= &~ \left\|Q^\pi - R_t - \gamma \Pcal^\pi Q^\pi\right\|_{2,\nu} \tag{by Lemma~\ref{lem:plceva}}
\\
= &~ \left\|R - R_t\right\|_{2,\nu}
\\
= &~ \left\|R + \gamma \Pcal^{\pi_t} f_t - f_t \right\|_{2,\nu} \tag{by Lemma~\ref{lem:plceva}}
\\
= &~ \left\|\Tcal^{\pi_t} f_t - f_t \right\|_{2,\nu}
\\
\leq &~ \sqrt{\Cscr(\nu;\mu,\Fcal,\pi_t)} \left\|\Tcal^{\pi_t} f_t - f_t \right\|_{2,\mu}
\\
\leq &~ \sqrt{\Cscr(\nu;\mu,\Fcal,\pi_t)} (\sqrt{\varepsilon_b} + \sqrt{V_{\max}/\lambda}), 
\end{align}
where the last step is obtained by the following argument:
\begin{align}
f_t(s_0,\pi_t) + \lambda \Ecal(f_t,\pi_t;\Dcal) = &~ \min_{f \in \Fcal} (f(s_0,\pi_t) + \lambda \Ecal(f,\pi_t;\Dcal))
\\
\leq &~ f_{\pi_t}(s_0,\pi_t) + \lambda \Ecal(f_{\pi_t},\pi_t;\Dcal)
\\
\leq &~ \Vmax + \lambda \varepsilon_r.
\tag{by Theorem~\ref{thm:version_space}}
\\
\Longrightarrow \Ecal(f_{t},\pi_t;\Dcal) \leq &~ \varepsilon_r + \frac{\Vmax}{\lambda}
\end{align}
Then, applying Theorem~\ref{thm:mspo2be}, we transfer the bound on $\Ecal(f_{t},\pi_t;\Dcal)$ to the bound on $\|\Tcal^{\pi_t} f_t - f_t \|_{2,\mu}$.

Since \Eqref{eq:fminbd_opt3} holds for arbitrary on-support state-action distribution $\nu$, we take the minimal over the set of all $\{\nu:\Cscr(\nu;\mu,\Fcal,\pi_t) \leq C_{2,t}\}$ ($C_{2,t}$ denotes the $L^2$ concentrability threshold), and obtain  
\begin{align}
&~ Q^\pi(s_0,\pi) - J_{\Mcal_t}(\pi)
\\
\leq &~ \min_{\nu:\Cscr(\nu;\mu,\Fcal,\pi_t) \leq C_{2,t}} \Bigg( \frac{2\Cscr(\nu;\mu,\Fcal,\pi_t)(\sqrt{\varepsilon_b} + \sqrt{V_{\max}/\lambda})}{1 - \gamma}
\\
&~ + \left|\sum_{(s,a) \in \Scal\times\Acal} (d_{\pi}\setminus\nu)(s,a) \left[f_t(s,a) - (\Tcal^{\pi_t} f_t)(s,a)\right] \right| \Bigg)
\\
\leq &~ \frac{2 \sqrt{C_{2,t}}(\sqrt{\varepsilon_b} + \sqrt{V_{\max}/\lambda}) }{1 - \gamma} + \min_{\nu:\Cscr(\nu;\mu,\Fcal,\pi_t) \leq C_{2,t}} \left|\sum_{(s,a) \in \Scal\times\Acal} (d_{\pi}\setminus\nu)(s,a) \left[f_t(s,a) - (\Tcal^{\pi_t} f_t)(s,a)\right] \right|.
\end{align}

Therefore, we complete the proof as follows.
\begin{align}
&~ J(\pi) - J(\bar\pi) 
\\
\leq &~ \frac{2 \Vmax}{1 - \gamma}\sqrt{\frac{ 2 \log|\Acal|}{T}} + \frac{1}{T} \sum_{t = 1}^{T} \left(J(\pi) - J_{\Mcal_t}(\pi) \right) + \frac{\sqrt{\varepsilon_\Fcal}}{1 - \gamma} + \lambda \varepsilon_r
\\
\leq &~ \frac{2 \Vmax}{1 - \gamma}\sqrt{\frac{ 2 \log|\Acal|}{T}} + \frac{\sqrt{\varepsilon_\Fcal}}{1 - \gamma} + \lambda \varepsilon_r + \frac{1}{T} \sum_{t = 1}^{T} \Bigg( \frac{2 \sqrt{C_{2,t}} (\sqrt{\varepsilon_b} + \sqrt{V_{\max}/\lambda})}{1 - \gamma}
\\
&~ + \min_{\nu:\Cscr(\nu;\mu,\Fcal,\pi_t) \leq C_{2,t}} \left|\sum_{(s,a) \in \Scal\times\Acal} (d_{\pi}\setminus\nu)(s,a) \left[f_t(s,a) - (\Tcal^{\pi_t} f_t)(s,a)\right] \right| \Bigg)
\\
= &~ \frac{2 \Vmax}{1 - \gamma}\sqrt{\frac{ 2 \log|\Acal|}{T}} + \frac{\sqrt{\varepsilon_\Fcal}}{1 - \gamma} + \lambda \varepsilon_r + \frac{1}{T} \sum_{t = 1}^{T} \Bigg( \frac{2 \sqrt{C_{2,t}} (\sqrt{\varepsilon_b} + \sqrt{V_{\max}/\lambda})}{1 - \gamma}
\\
&~ + \min_{\nu:\Cscr(\nu;\mu,\Fcal,\pi_t) \leq C_{2,t}} \left|\sum_{(s,a) \in \Scal\times\Acal} (d_{\pi}\setminus\nu)(s,a) \left[f_t(s,a) - (\Tcal^{\pi_t} f_t)(s,a)\right] \right| \Bigg),
\end{align}
where $C_{2,t}$ can be chosen arbitrarily for any $t \in [T]$. Since the complexity of $\Pi_{\textrm{SPI}}$ is at most $|\Fcal|^T$ by it definition, setting $\lambda = \sqrt[3]{\nicefrac{\Vmax}{(1 - \gamma)^2\varepsilon_r^2}}$ and plugging the definition of $\varepsilon_b$ and $\varepsilon_r$ (defined in Appendix \ref{sec:plceval}) completes the proof.
\end{proof}

\section{Linear Implementation of PSPI}
\label{sec:linear_implementation}

In this section, we provide the details of implementing PSPI with linear function approximation, that is, $\Fcal \coloneqq \{\phi(\cdot,\cdot)^\T \theta: \theta \in \RR^d\}$, where $\phi: \Scal\times\Acal \in \RR^d$ is a given feature map. %

Recall that \Eqref{eq:reg_pes_pe} is
\begin{align}
&~ f(s_0,\pi) + \lambda \Ecal(f,\pi;\Dcal)
\\
\coloneqq &~ \phi(s_0,\pi)^\T \theta
\\
&~ + \lambda \left(\E_\Dcal \left[\left(\phi(s,a)^\T \theta - r - \gamma \phi(s',\pi)^\T \theta \right)^2 \right] - \min_{\theta' \in \RR^d} \E_\Dcal \left[\left(\phi(s,a)^\T \theta' - r - \gamma \phi(s',\pi)^\T \theta \right)^2 \right]\right).
\end{align}
We first provide a closed-form solution to the inner $\min_{\theta' \in \RR^d}$. Note that this inner $\min$ is a linear regression objective, so the minimal value can be achieved with 
\begin{align}
\label{eq:theta_prime}
\theta' = \Sigma^\dagger \E_{\Dcal}[\phi(s,a) ( r + \gamma \phi(s', \pi)^\top \theta)].
\end{align}
where $\Sigma \coloneqq \E_\Dcal[\phi(s,a) \phi(s,a)^\T]$ is the sample covariance matrix and $\Sigma^\dagger$ is its pseudo-inverse. Here, we do not require the invertibility of $\Sigma$, since we only care about the $\min$ value instead of the $\argmin_{\theta'}$. The $\min$ value is therefore
\begin{align} \label{eq:inner_min}
\E_\Dcal \left[\left(\phi(s,a)^\T \Sigma^\dagger \E_{\Dcal}[\phi(s,a) ( r + \gamma \phi(s', \pi)^\top \theta)] - r - \gamma \phi(s',\pi)^\T \theta \right)^2 \right].
\end{align}
It should be clear at this point that \Eqref{eq:reg_pes_pe} is quadratic in $\theta$. The rest of the derivation provides a simplified closed-form expression. Define shorthand notation
\begin{align}
\phi \coloneqq \phi(s,a), & \qquad \psi \coloneqq \phi(s', \pi), \\
B \coloneqq \E_\Dcal \left[ \phi \psi^\T \right], & \qquad C \coloneqq \E_\Dcal \left[ \psi \psi^\T \right], ~~ b \coloneqq \E_\Dcal \left[ \phi r \right], ~~ c \coloneqq \E_\Dcal \left[ \psi r \right].
\end{align}
\newcommand{\Si}{\Sigma^\dagger}
Then, \Eqref{eq:inner_min} is
\begin{align} 
&~ \E_\Dcal \left[\left(\phi^\T \Si \E_{\Dcal}[\phi ( r + \gamma \psi^\top \theta)] - r - \gamma \psi^\T \theta \right)^2 \right] \\
= &~ \E_\Dcal \left[\left(\phi^\T \Si (b + \gamma B \theta) - r - \gamma \psi^\T \theta \right)^2 \right] \\
= &~ \E_\Dcal \left[\left(\gamma \left( \phi^\T \Si B - \psi^\T \right) \theta + \phi^\top \Si b - r\right)^2 \right] \\
= &~ \gamma^2 \E_\Dcal[ \theta^\T (B^\T \Si \phi - \psi)(\phi^\T \Si B - \psi^\T) \theta] + 2 \gamma \E_\Dcal[\theta^\T (B^\T \Si \phi - \psi) (\phi^\T \Si b - r)] + \textrm{constant},
\end{align}
where ``constant'' is any term that is independent of $\theta$ and will not affect the optimization. Dropping the constant, the above is equal to
\begin{align} 
&~ \gamma^2 \E_\Dcal[ \theta^\T (B^\T \Si \phi \phi^\T \Si B - \psi \phi^\T \Si B - B^\T \Si \phi \psi^T + \psi \psi^\T) \theta]  \\
&~~~ + 2 \gamma \E_\Dcal[\theta^\T (B^\T \Si \phi \phi^\T \Si b - \psi \phi^\T \Si b - B^\T \Si \phi r + \psi r )] \\
= &~ \gamma^2 \theta^\T (B^\T \Si \Sigma \Si B - 2 B^\T \Si B + C) \theta + 2 \gamma \theta^\T (B^\T \Si \Sigma \Si b - 2 B^\T \Si b + c) \\
= &~  \gamma^2 \theta^\T (B^\T \Si  B - 2 B^\T \Si B + C) \theta + 2 \gamma \theta^\T (c - B^\T \Si b ) \tag{Property of pseudo-inverse: $\Si \Sigma \Si = \Si$} \\
= &~ \gamma^2 \theta^\T (C - B^\T \Si B) \theta + 2 \gamma \theta^\T (c - B^\T \Si b ).
\end{align}
We now handle the first expectation in \Eqref{eq:reg_pes_pe}:
\begin{align}
&~ \E_\Dcal \left[\left(\phi(s,a)^\T \theta - r - \gamma \phi(s',\pi)^\T \theta \right)^2 \right] \\
= &~ \E_\Dcal \left[\left((\phi^\T - \gamma \psi^\T) \theta - r \right)^2 \right] \\
= &~ \E_\Dcal \left[\theta^\T (\phi - \gamma \psi)(\phi^\T - \gamma \psi^\T) \theta\right]  - 2 \E_\Dcal[ r \cdot (\phi - \gamma \psi)^\T \theta ]   + \textrm{constant} \\
= &~\theta^\T (\Sigma - \gamma B^\T - \gamma B + \gamma^2 C) \theta  - 2 (b - \gamma c) \theta   + \textrm{constant}.
\end{align}
We now combine the two terms and consider the quadratic term $\theta^\T (\cdot) \theta$ and the linear term $ \theta^\T(\cdot)$ separately. The matrix in the quadratic term is
\begin{align}
&~\Sigma - \gamma B^\T - \gamma B + \gamma^2 C - \gamma^2(C - B^\T \Si B) \\
= &~ \Sigma - \gamma B^\T - \gamma B + \gamma^2 B^\T \Si B \\
= &~  (I - \gamma\Si B)^\T \Sigma(I - \gamma \Si B).
\end{align}
To verify the last step, we can expand the last expression and obtain $\Sigma - \gamma\Sigma \Si B - \gamma (\Sigma \Si B)^\T + \gamma^2 B^\T \Si B$, and the identity holds due to $\Sigma \Si B = B$; we defer the proof of this fact to the end of this section. 

The vector in the linear term is
\begin{align}
-2(b - \gamma c) - 2 \gamma (c - B^\T \Si b)
= - 2 (I - \gamma B^\T \Si)b.
\end{align}
Putting all together, \Eqref{eq:reg_pes_pe} divided by $\lambda$ is equal to (up to a constant independent of $\theta$):
\begin{align} \label{eq:linear_loss}
&~\theta^\T (I - \gamma\Si B)^\T \Sigma(I - \gamma \Si B) \theta - \theta^\T \left(2 (I - \Si \gamma B)^\T b - \phi(s_0, \pi)/\lambda\right).
\end{align}
Note that this objective is quadratic in $\theta$, and the Hessian is always positive semi-definite. 

\paragraph{Closed-form solution under invertibility and connection to LSTDQ} We show that the above objective is intimately connected to LSTDQ \citep{lagoudakis2003least}. In particular, assuming $\Sigma$ and $\Sigma - \gamma B$ are both invertible, \Eqref{eq:linear_loss} becomes
\begin{align}
\theta^\T (I - \gamma\Sigma^{-1} B)^\T \Sigma(I - \gamma \Sigma^{-1} B) \theta -  \theta^\T \left( 2 (I - \gamma \Sigma^{-1}B)^\T b - \phi(s_0, \pi)/\lambda \right).
\end{align}
Note that the quadratic term is now positive definite, and we are minimizing the objective, so the minimizer can be found simply by setting the gradient to $0$, i.e.,
\begin{align}
&~2(I - \gamma\Sigma^{-1} B)^\T\Sigma(I - \gamma \Sigma^{-1} B) \theta = 2 (I - \gamma \Sigma^{-1}B)^\T b - \phi(s_0, \pi)/\lambda.
\end{align}
Define $A \coloneqq I - \gamma \Sigma^{-1} B$, the closed-form solution is
\begin{align}
\theta = &~  A^{-1} \Sigma^{-1} (A^{-1})^\T (A^\T b - \phi(s_0, \pi)/2\lambda) \\
= &~ A^{-1} \Sigma^{-1} b- A^{-1} \Sigma^{-1} (A^{-1})^\T  \phi(s_0, \pi) / 2\lambda.
\end{align}
Note that when we drop the pessimistic term $f(s_0, \pi)$ (i.e., setting $\lambda \to \infty$), the solution becomes $A^{-1} \Sigma^{-1} b$, which is exactly LSTDQ \cite{lagoudakis2003least}. \citet[Proposition 2]{antos2008learning} shows a similar result, but their proof is restricted to the invertible case and directly verifies that \Eqref{eq:reg_pes_pe} achieves its minimal possible value $0$ when the LSTDQ solution is plugged in. In contrast, our result in \Eqref{eq:linear_loss} is substantially more general as it does not rely on the invertibility assumptions.

\paragraph{Proof of $\Sigma \Si B = B$} We rewrite $\Sigma = \E_{\Dcal}[\phi\phi^\T] = \Phi^\T \Dcal \Phi$, where $\Dcal$ is a diagonal matrix representing the empirical measure of $\Dcal$ over $\Scal\times\Acal$, and $\Phi \in \RR^{|\Scal\times\Acal| \times d}$ is the matrix representation of the entire feature map. Similarly we may write $B = \E_{\Dcal}[\phi\psi^\T] = \Phi^\T \Dcal \Psi$ for some suitable matrix $\Psi$.\footnote{The $(s,a)$-th row of $\Psi$ is $\E_\Dcal[\phi(s', \pi)^\T | s, a]$ for $(s,a)$ in the data, and does not matter otherwise.} Define $X = \Dcal^{1/2} \Phi$ and $Y = \Dcal^{1/2} \Psi$, and $\Sigma \Si B = B$ becomes $(X^\T X) (X^\T X)^\dagger (X^\T Y) = X^\T Y$, which is what we need to show. 

To show this, let $X = U Z V^\T$ be the SVD of $X$, where $Z \in \RR^{r\times r}$ is an invertible diagonal matrix with $r$ being the rank of $X$. Note that $U^\T U = V^\T V = I_r$, i.e., the $r\times r$ identity matrix. Then,
\begin{align}
&~ (X^\T X) (X^\T X)^\dagger (X^\T Y) \\
= &~ V Z^2 V^\T V Z^{-2} V^\T V Z U^\T Y \\
= &~ V V^\T V Z U^\T Y\\
= &~ V Z U^\T Y = X^\T Y.
\end{align}
This completes the proof.

\end{document}